\newcommand{\LineComment}[1]{\hfill$\rhd\ $\text{#1}}
  \edef\maintocdepth{\the\value{tocdepth}}%
\newcommand\blfootnote[1]{%
  \begingroup
  \renewcommand\thefootnote{}\footnote{#1}%
  \addtocounter{footnote}{-1}%
  \endgroup
}
\newcommand{\calA}{{\mathcal{A}}}
\newcommand{\calX}{{\mathcal{X}}}
\newcommand{\calJ}{{\mathcal{J}}}
\newcommand{\calR}{{\mathcal{R}}}
\newcommand{\calN}{{\mathcal{N}}}
\newcommand{\calZ}{{\mathcal{Z}}}
\newcommand{\Reg}{\calR}
\newcommand{\one}{\boldsymbol{1}}
\newcommand{\ellhat}{\widehat{\ell}}
\newcommand{\ptilde}{\tilde{p}}
\newcommand{\field}[1]{\mathbb{#1}}
\newcommand{\fR}{\field{R}}
\newcommand{\E}{\field{E}}
\newcommand{\inner}[1]{ \left\langle {#1} \right\rangle }
\newcommand{\norm}[1]{\left\|{#1}\right\|}
\newcommand{\order}{\ensuremath{\mathcal{O}}}
\newcommand{\mb}[1]{\mathbb{#1}}
\newcommand{\mc}[1]{\mathcal{#1}}
\newcommand*{\rom}[1]{\expandafter\@slowromancap\romannumeral #1@}
\newcommand\numberthis{\addtocounter{equation}{1}\tag{\theequation}}
\DeclareMathOperator{\argmin}{\arg\!\min}
\newcommand{\rbr}[1]{\left(#1\right)}
\newcommand{\sbr}[1]{\left[#1\right]}
\newcommand{\cbr}[1]{\left\{#1\right\}}
\newcommand{\abr}[1]{\left|#1\right|}
\newtheorem{theorem}{Theorem}
\newtheorem{lemma}[theorem]{Lemma}
\newtheorem{corollary}[theorem]{Corollary}
\newtheorem{proposition}[theorem]{Proposition}
\newtheorem{condition}{Condition}
\title{Equipping Experts/Bandits with Long-term Memory}
\author{%
  Kai Zheng$^{1,2}$ \\
  \texttt{zhengk92@pku.edu.cn} 
  \And
  Haipeng Luo$^{3}$ \\
  \texttt{haipengl@usc.edu}
  \AND
    Ilias Diakonikolas$^{4}$ \\
    \texttt{ilias.diakonikolas@gmail.com}
  \And
  Liwei Wang$^{1,2}$ \\
  \texttt{wanglw@cis.pku.edu.cn}
%   \AND
%   \texttt{zhengk92@pku.edu.cn} ~ 
%   \texttt{haipengl@usc.edu} ~
%   \texttt{ilias.diakonikolas@gmail.com} ~
%   \texttt{wanglw@cis.pku.edu.cn}
}
\begin{document}
\blfootnote{$^{1}$~ Key Laboratory of Machine Perception, MOE, School of EECS, Peking University}
\blfootnote{$^{2}$~ Center for Data Science, Peking University}
\blfootnote{$^{3}~ $ University of Southern California}
\blfootnote{$^{4}~ $ University of Wisconsin-Madison}
\maketitle

\begin{abstract}
We propose the first reduction-based approach to obtaining long-term memory guarantees for online learning in the sense of~\citet{bousquet2002tracking},
by reducing the problem to achieving typical switching regret.
Specifically, for the classical expert problem with $K$ actions and $T$ rounds, 
using our framework we develop various algorithms with a regret bound of order $\order(\sqrt{T(S\ln T + n \ln K)})$ compared to any sequence of experts with $S-1$ switches among $n \leq \min\{S, K\}$ distinct experts.
In addition, by plugging specific adaptive algorithms into our framework we also achieve the best of both stochastic and adversarial environments simultaneously.
This resolves an open problem of~\citet{warmuth2014open}.
Furthermore, we extend our results to the sparse multi-armed bandit setting and show both negative and positive results for long-term memory guarantees.
As a side result, our lower bound also implies that sparse losses do not help improve the worst-case regret for contextual bandits, a sharp contrast with the non-contextual case.

%We consider online and bandit settings of long-term memory. Under full information feedback, we propose a novel and powerful framework for long-term memory, which can obtain the best of both worlds (i.e. adversarial and stochastic world) if using appropriate sub-algorithms, hence completely solve the open problem proposed in \cite{warmuth2014open}. Some of our sub-algorithms seem new in related literature. Under bandit feedback, we show one cannot achieve $\mc{O}(\sqrt{T(S\ln T + n K)})$ regret bound for long-term memory, where $S, K, n$ are the number of switches, all experts, and different experts in competitor sequence respectively. With additional sparsity assumption about the loss, we prove $\Omega(\sqrt{SKT})$ is still the lower bound for $S$-switching bandit and it implies $\mc{O}(\sqrt{\rho T\ln N} + poly(K)\ln T)$ regret bound cannot be achieved for $\rho$-sparse contextual bandit, where $N$ is the number of policy, which is contrasted with the fact that $\mc{O}(\sqrt{\rho T\ln K} + K\ln T)$ regret bound can be achieved for $\rho$-sparse MAB \cite{bubeck2018sparsity}. What's more, we design a novel algorithm for bandit version of sparse long-term memory, which improves over the trivial bound $\mc{O}(\sqrt{SKT})$ in some parameter regimes.

\end{abstract}

\section{Introduction}
\label{sec:intro}
In this work, we propose a black-box reduction for obtaining long-term memory guarantees for two fundamental problems in online learning: the expert problem~\citep{freund1997decision} and the multi-armed bandit (MAB) problem~\citep{auer2002nonstochastic}.
In both problems, a learner interacts with the environment for $T$ rounds, with $K$ fixed available actions.
At each round, the environment decides the loss for each action while simultaneously the learner selects one of the actions and suffers the loss of this action.
In the expert problem, the learner observes the loss of every action at the end of each round (a.k.a. full-information feedback), while in MAB, the learner only observes the loss of the selected action (a.k.a. bandit feedback).

For both problems, the classical performance measure is the learner's (static) regret,
defined as the difference between the learner's total loss and the loss of the best fixed action. 
It is well-known that the minimax optimal regret is $\Theta(\sqrt{T\ln K})$~\citep{freund1997decision} and $\Theta(\sqrt{TK})$~\citep{auer2002nonstochastic, audibert2010regret} for the expert problem and MAB respectively.
Comparing against a fixed action, however, does not always lead to meaningful guarantees, especially when the environment is non-stationary and no single fixed action performs well.
To address this issue, prior work has considered a stronger measure called switching/tracking/shifting regret, which is the difference between the learner's total loss and the loss of a sequence of actions with at most $S-1$ switches.
Various existing algorithms (including some black-box approaches) achieve the following switching regret
\begin{numcases}{}
\order(\sqrt{TS\ln(TK)}) \qquad &\text{for the expert problem~\citep{herbster1998tracking, hazan2007adaptive, adamskiy2012closer, luo2015achieving, jun2017online}}, \label{eqn:expert_switching_regret}  \\
\order(\sqrt{TKS\ln(TK)}) \qquad &\text{for multi-armed bandits~\citep{auer2002nonstochastic, luo2018efficient}}.  \label{eqn:MAB_switching_regret}
\end{numcases}
%
%Various existing algorithms achieve typical switching regret bounds of order
%$\order(\sqrt{TS\ln(TK)})$ for the expert problem~\citep{} and $\order(\sqrt{TSK\ln(TK)})$ for MAB~\citep{} respectively.
%\footnote{Throughout the paper we use the notation $\otil()$ to suppress dependence on $\ln T$.}
%
We call these {\em typical switching regret bounds}.
Such bounds essentially imply that the learner pays the worst-case static regret for each switch in the benchmark sequence.
While this makes sense in the worst case, 
intuitively one would hope to perform better if the benchmark sequence frequently switches back to previous actions,
as long as the algorithm remembers which actions have performed well previously.

Indeed, for the expert problem, algorithms with long-term memory were developed that guarantee switching regret of order $\order\left(\sqrt{T(S\ln\frac{nT}{S} + n\ln\frac{K}{n})}\right)$, where $n \leq \min\{S, K\}$ is the number of distinct actions in the benchmark sequence~\citep{bousquet2002tracking, adamskiy2012putting, cesa2012mirror}.\footnote{The setting considered in~\citep{bousquet2002tracking, adamskiy2012putting} is in fact slightly different from, yet closely related to, the expert problem. One can easily translate their regret bounds into the bounds we present here.}
Although there is no known lower bound, this regret bound essentially matches the one achieved by a computationally inefficient approach of running Hedge over all benchmark sequences with $S$ switches among $n$ experts, an approach that usually leads to the information-theoretically optimal regret guarantee. 
Compared to the typical switching regret bound of form~\eqref{eqn:expert_switching_regret} (which can be written as $\order(\sqrt{T(S\ln T+ S\ln K)})$),
this long-term memory guarantee implies that the learner pays the worst-case static regret only for each distinct action encountered in the benchmark sequence, and pays less for each switch, especially when $n$ is very small.
Algorithms with long-term memory guarantees have been found to have better empirical performance~\citep{bousquet2002tracking}, and applied to practical applications such as TCP round-trip time estimation \citep{nunes2014machine}, intrusion detection system \citep{nguyen2012adaptive}, and multi-agent systems \citep{santarra2019communicating}. We are not aware of any similar studies for the bandit setting.

\paragraph{Overview of our contributions.}
The main contribution of this work is to propose a simple black-box approach to equip expert or MAB algorithms with long-term memory and to achieve switching regret guarantees of similar flavor to those of~\citep{bousquet2002tracking, adamskiy2012putting, cesa2012mirror}.
The key idea of our approach is to utilize a variant of the confidence-rated expert framework of~\citep{blum2007external}, and to use a sub-routine to learn the confidence/importance of each action for each time.
Importantly this sub-routine itself is an expert/bandit algorithm over {\it only two actions} and needs to enjoy some typical switching regret guarantee (for example of form~\eqref{eqn:expert_switching_regret} for the expert problem). 
In other words, our approach {\it reduces the problem of obtaining long-term memory to the well-studied problem of achieving typical switching regret}.
Compared to existing methods~\citep{bousquet2002tracking, adamskiy2012putting, cesa2012mirror},
the advantages of our approach are the following:

1. While existing methods are all restricted to variants of the classical Hedge algorithm~\citep{freund1997decision}, our approach allows one to plug in a variety of existing algorithms and to obtain a range of different algorithms with switching regret $\order(\sqrt{T(S\ln T + n\ln K)})$. (Section~\ref{sec:reduction})

2. Due to this flexibility, by plugging in specific adaptive algorithms, we develop a parameter-free algorithm whose switching regret is simultaneously $\order(\sqrt{T(S\ln T + n\ln K)})$ in the worst-case and  $\order(S\ln T + n\ln(K\ln T))$ if the losses are piece-wise stochastic (see Section~\ref{sec:setup} for the formal definition).
This is a generalization of previous best-of-both-worlds results for static or switching regret~\citep{gaillard2014second, luo2015achieving}, and resolves an open problem of~\citet{warmuth2014open}. 
The best previous bound for the stochastic case is $\order(S\ln(TK\ln T))$~\citep{luo2015achieving}.
(Section~\ref{subsec:best-of-both-worlds})

3. Our framework allows us to derive the first nontrivial long-term memory guarantees for the bandit setting, while existing approaches fail to do so (more discussion to follow).
For example, when $n$ is a constant and the losses are sparse, our algorithm achieves switching regret $\order(S^{1/3}T^{2/3} + K^3\ln T)$ for MAB, 
which is better than the typical bound~\eqref{eqn:MAB_switching_regret} when $S$ and $K$ are large. For example, when $S = \Theta(T^\frac{7}{10})$ and $K = \Theta(T^\frac{3}{10})$,
our bound is of order $\order(T^\frac{9}{10}\ln T)$ while bound~\eqref{eqn:MAB_switching_regret} becomes vacuous (linear in $T$), demonstrating a strict separation in learnability. (Section~\ref{sec:bandit})

To motivate our results on long-term memory guarantees for MAB, a few remarks are in order.
It is not hard to verify that existing approaches achieve switching regret $\order(\sqrt{TK(S\ln T + n\ln K)})$ for MAB.
However, the polynomial dependence on the number of actions $K$ makes the improvement of this bound over the typical bound~\eqref{eqn:MAB_switching_regret} negligible.
It is well-known that such polynomial dependence on $K$ is unavoidable in the worst-case due to the bandit feedback.
This motivates us to consider situations where the necessary dependence on $K$ is much smaller.
In particular, \citet{bubeck2018sparsity} recently showed that if the loss vectors are $\rho$-sparse, then a static regret bound of order $\order(\sqrt{T\rho\ln K} + K\ln T)$ is achievable, exhibiting a much more favorable dependence on $K$.
We therefore focus on this sparse MAB problem and study what nontrivial switching regret bounds are achievable.

We first show that a bound of order $\order(\sqrt{T\rho S\ln(KT)} + KS\ln T)$, a natural generalization of the typical switching regret bound of~\eqref{eqn:MAB_switching_regret} to the sparse setting, is impossible. 
In fact, we show that for any $S$ the worst-case switching regret is at least $\Omega(\sqrt{TKS})$, even when $\rho = 2$.
Since achieving switching regret for MAB can be seen as a special case of contextual bandits~\citep{auer2002nonstochastic, langford2008epoch}, this negative result also implies that, surprisingly, sparse losses do not help improve the worst-case regret for contextual bandits, which is a sharp contrast with the non-contextual case studied in~\citep{bubeck2018sparsity} (see Theorem~\ref{thm:lower_bound} and Corollary~\ref{cor:lower_bound_CB}).
Despite this negative result, however, as mentioned we are able to utilize our general framework to still obtain improvements over bound~\eqref{eqn:MAB_switching_regret} when $n$ is small. 
Our construction is fairly sophisticated, requiring a special sub-routine that uses a novel {\it one-sided log-barrier regularizer} and admits a new kind of ``local-norm'' guarantee, which may be of independent interest.

\section{Preliminaries}
\label{sec:setup}
Throughout the paper, we use $[m]$ to denote the set $\{1, \ldots, m\}$ for some integer $m$.
The learning protocol for the expert problem and MAB with $K$ actions and $T$ rounds is as follows:
For each time $t=1, \ldots, T$, 
(1) the learner first randomly selects an action $I_t \in [K]$ according to a distribution $p_t \in \Delta_K$ (the $(K-1)$-dimensional simplex);
(2) simultaneously the environment decides the loss vector $\ell_t \in [-1, 1]^K$;
(3) the learner suffers loss $\ell_t(I_t)$ and observes either $\ell_t$ in the expert problem (full-information feedback) or only $\ell_t(I_t)$ in MAB (bandit feedback).
For any sequence of $T$ actions $i_1, \ldots, i_T \in [K]$, the expected regret of the learner against this sequence is defined as
\[
\Reg(i_{1:T}) = \E\sbr{\sum_{t=1}^T \ell_t(I_t) - \sum_{t=1}^T \ell_t(i_t)}
= \E\sbr{\sum_{t=1}^T r_t(i_t)},
\]
where the expectation is with respect to both the learner and the environment and $r_t(i)$, the instantaneous regret (against action $i$), is defined as $p_t^\top\ell_t - \ell_t(i)$.
When $i_1 = \cdots = i_T$, 
this becomes the traditional static regret against a fixed action.
Most existing works on switching regret impose a constraint on the number of switches for the benchmark sequence: $\sum_{t=2}^T \one\cbr{i_t \neq i_{t-1}} \leq S-1$.
In other words, the sequence can be decomposed into $S$ disjoint intervals, each with a fixed comparator as in static regret.
Typical switching regret bounds hold for any sequence with this constraint and are in terms of $T, K$ and $S$, such as Eq.~\eqref{eqn:expert_switching_regret} and Eq.~\eqref{eqn:MAB_switching_regret}.

The number of switches, however, does not fully characterize the difficulty of the problem.
Intuitively, a sequence that frequently switches back to previous actions should be an easier benchmark for an algorithm with long-term memory that remembers which actions performed well in the past.
To encode this intuition, prior works~\citep{bousquet2002tracking, adamskiy2012putting, cesa2012mirror} introduced another parameter $n = |\cbr{i_1, \ldots, i_T}|$,
the number of distinct actions in the sequence, to quantify the difficulty of the problem,
and developed switching regret bounds in terms of $T, K, S$ and $n$.
Clearly one has $n \leq \min\cbr{S, K}$,
and we are especially interested in the case when $n \ll \min\cbr{S, K}$, 
which is natural if the data exhibits some periodic pattern.
Our goal is to understand what improvements are achievable in this case and how to design algorithms that can leverage this property via a unified framework.

\paragraph{Stochastic setting.}
In general, we do not make any assumptions on how the losses are generated by the environment, which is known as the adversarial setting in the literature. 
We do, however, develop an algorithm (for the expert problem) that enjoys the best of both worlds --- it not only enjoys some robust worst-case guarantee in the adversarial setting, but also achieves much smaller logarithmic regret in a stochastic setting. 
Specifically, in this stochastic setting, 
without loss of generality, we assume the $n$ distinct actions in $\cbr{i_1, \ldots, i_T}$ are $1, \ldots, n$.
It is further assumed that for each $i \in [n]$, there exists a constant gap $\alpha_i > 0$  such that $\E_t\sbr{\ell_t(j) - \ell_t(i)} \geq \alpha_i$ for all $j \neq i$ and all $t$ such that $i_t = i$, where the expectation is with respect to the randomness of the environment conditioned on the history up to the beginning of round $t$.
In other words, for every time step the algorithm is compared to the best action whose expected value is constant away from those of other actions.
This is a natural generalization of the stochastic setting studied for static regret or typical switching regret~\citep{gaillard2014second, luo2015achieving}.

\paragraph{Confidence-rated actions.}
Our approach makes use of the confidence-rated expert setting of~\citet{blum2007external}, a generalization of the expert problem (and the sleeping expert problem~\citep{freund1997using}).
The protocol of this setting is the same as the expert problem, except that at the beginning of each round, 
the learner first receives a confidence score $z_t(i)$ for each action $i$.
The regret against a fixed action $i$ is also scaled by its confidence and is now defined as $\E\sbr{\sum_{t=1}^T z_t(i) r_t(i)}$.
The expert problem is clearly a special case with $z_t(i) = 1$ for all $t$ and $i$.
There are a number of known examples showing why this formulation is useful,
and our work will add one more to this list.

To obtain a bound on this new regret measure,
one can in fact simply reduce it to the regular expert problem~\citep{blum2007external, gaillard2014second, luo2015achieving}.
Specifically, let $\calA$ be some expert algorithm over the same $K$ actions producing sampling distributions $w_1, \ldots, w_T \in \Delta_K$. 
The reduction works by sampling $I_t$ according to $p_t$ such that $p_{t}(i) \propto z_{t}(i)w_{t}(i), ~\forall i$ and then feeding $c_t$ to $\calA$ where $c_t(i) = -z_t(i) r_t(i), ~\forall i$.
Note that by the definition of $p_t$ one has $w_t^\top c_t = \sum_i w_t(i)z(i)(\ell_t(i) - p_t^\top\ell_t) = 0$.
Therefore, one can directly equalize the confidence-rated regret and the regular static regret of the reduced problem: $\E\sbr{\sum_{t=1}^T z_t(i) r_t(i)} = \E\sbr{\sum_{t=1}^T (w_t^\top c_t - c_t(i))}$.

\section{General Framework for the Expert Problem}
\label{sec:expert}

In this section, we introduce our general framework to obtain long-term memory regret bounds and demonstrate how it leads to various new algorithms for the expert problem.
We start with a simpler version and then move on to a more elaborate construction that is essential to obtain best-of-both-worlds results.

%In this section, we consider long-term memory under full information feedback, and our goal is to design a parameter-free algorithm achieving the best of both worlds, i.e. either for adversarial loss sequence or stochastic loss sequence defined in Section \ref{sec:setup}. As a start, we propose a general framework that easily achieves optimal bound for adversarial loss. Then we design an improved version which can achieve the best of both worlds when using appropriate sub-routine learning algorithms, hence completely solve the open problem in \cite{warmuth2014open}. 

\subsection{A simple approach for adversarial losses}
\label{sec:reduction}

\begin{algorithm}[t!]
 \caption{A Simple Reduction for Long-term Memory}
\label{alg:reduction1}
 \textbf{Input}: expert algorithm $\mc{A}$ learning over $K$ actions with static regret guarantee (cf. Condition~\ref{con:static1}), expert algorithms $\mc{A}_1, \dots, \mc{A}_K$ learning over two actions $\{0,1\}$ with switching regret guarantee (cf. Condition~\ref{con:switching1}), parameter $\eta \leq 1/5$ \\

\For{$t=1,2, \ldots$}{
 Receive sampling distribution $w_t \in \Delta_K$ from $\calA$ \\
 Receive sampling probability $z_t(i)$ for action ``1'' from $\calA_i$ for each $i\in [K]$ \label{line:SR1} \\
 Sample $I_t \sim p_t$ where $p_t(i) \propto z_{t}(i) w_t(i), ~\forall i$, and receive $\ell_t \in [-1,1]^K$ \label{line:CR1} \\
 Feed loss vector $c_t$ to $\mc{A}$, where $c_t(i) = -z_t(i)r_t(i)$ with $r_t(i) = p_t^\top \ell_t - \ell_t(i)$  \label{line:CR2} \\
 Feed loss vector $(0, 5\eta - r_t(i))$ to $\calA_i$ for each $i\in[K]$ \label{line:SR2}
}
\end{algorithm}

A simple version of our approach is described in Algorithm~\ref{alg:reduction1}.
At a high level, it simply makes use of the confidence-rated action framework described in Section~\ref{sec:setup}. 
%\hl{and the confidence for each action is provided by an independent sub-algorithm with switching regret guarantee. See Figure 1 for the schematic diagram.}
The reduction to the standard expert problem is executed in Lines~\ref{line:CR1} and~\ref{line:CR2}, with a black-box expert algorithm $\calA$. 
%\hl{and the confidence of each action is learned in Lines~~4 and ~7 through $K$ independent sub-algorithms $\{\calA_i| i\in [K]\}$ respectively.}

It remains to specify how to come up with the confidence score $z_t(i)$.
We propose to {\it learn} these scores via a separate black-box expert algorithm $\calA_i$ for each $i$. 
More specifically, each $\calA_i$ is learning over two actions 0 and 1, 
where action 0 corresponds to confidence score 0 and action 1 corresponds to score 1.
Therefore, the probability of picking action 1 at time $t$ naturally represents a confidence score between 0 and 1, which we denote by $z_t(i)$ overloading the notation (Line~\ref{line:SR1}).

As for the losses fed to $\calA_i$, 
we fix the loss of action 0 to be 0 (since shifting losses by the same amount has no real effect),
and set the loss of action 1 to be $5\eta - r_t(i)$ (Line~\ref{line:SR2}).
The role of the term $-r_t(i)$ is intuitively clear --- the larger the loss of action $i$ compared to the algorithm, the less confident we should be about it;
the role of the constant bias term $5\eta$ will become clear in the analysis
(in fact, it can even be removed at the cost of a worse bound --- see Appendix~\ref{app:weaker_bound}).

Finally we specify what properties we require from the black-box algorithms $\calA, \calA_1, \ldots, \calA_K$.
In short, $\calA$ needs to ensure a static regret bound,
while $\calA_1, \ldots, \calA_K$ need to ensure a switching regret bound.
See Figure~\ref{fig:reduction} for an illustration of our reduction.
The trick is that since $\calA_1, \ldots, \calA_K$ are learning over only two actions,
this construction helps us to separate the dependence on $K$ and the number of switches $S$.
These (static or switching) regret bounds could be the standard worst-case $T$-dependent bounds mentioned in Section~\ref{sec:intro},
in which case we would obtain looser long-term memory guarantees (specifically, $\sqrt{n}$ times worse --- see Appendix~\ref{app:weaker_bound}).
Instead, we require these bounds to be data-dependent and in particular of the form specified below:

\tikzstyle{box} = [rectangle, rounded corners, minimum width = 2cm, minimum height=1cm,text centered, draw = black]
\tikzstyle{arrow} = [->,>=stealth]
\tikzset{
  arrow1/.style = {
    draw = black, thick, -{Latex[length = 4mm, width = 1.5mm]},
  }
}
\tikzset{
  arrow2/.style = {
    draw = purple, dashed, thick, -{Latex[length = 4mm, width = 1.5mm]},
  }
}
\begin{figure}[t]
\centering
\begin{tikzpicture}[node distance=1cm]
\node[rectangle, fill = blue!20!white, draw = blue, text width = 2.7cm, minimum height = 1.3cm, align = center](master){Confidence-Rated \\ Expert};
\node[rectangle, fill = red!20!white, draw = red, text width = 2.7cm, minimum height = 1.3cm, below of = master, yshift = -0.3cm, left of = master, xshift = -4cm, align = center](long-term){Long-Term \\ Memory};
\node[rectangle, fill = yellow!20!white, draw = yellow, text width = 2.7cm, minimum height = 1.3cm, below of = master, yshift = -1.6cm, align = center](subalg){Switching \\ Regret};
\node[rectangle, fill = green!20!white, draw = green, text width = 2.7cm, minimum height = 1.3cm, right of = master, xshift = 4cm, align = center](final){Static \\ Regret};
\draw [arrow1] (long-term) -- (master);
\draw [arrow1] (master) -- (final);
\draw [arrow1] (long-term) -- (subalg);
\draw [arrow2] (subalg) -- node [right,align = center] {provide \\ confidence } (master);
\end{tikzpicture}
\caption{Illustration of reduction. The main idea of our approach is to reduce the problem of obtaining long-term memory guarantee to the confidence-rated expert problem and the problem of obtaining switching regret. Algorithms for the latter learn and provide confidence to the confidence-rated expert problem. 
The reduction from confidence-rated expert to obtaining standard static regret is known~\citep{blum2007external}.}
\label{fig:reduction}
\end{figure}
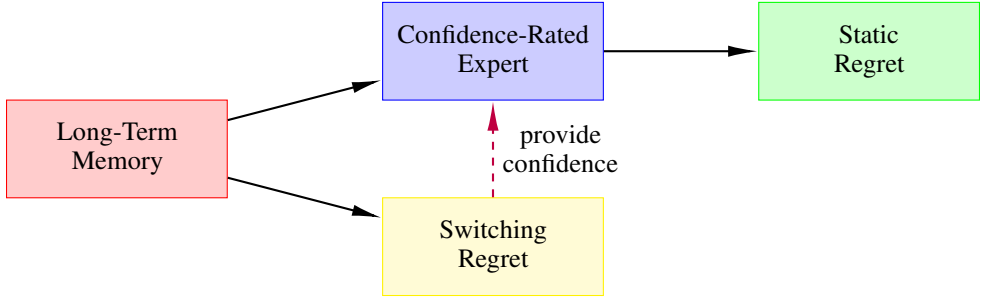

\begin{condition}
\label{con:static1}
\normalfont
There exists a constant $C > 0$ such that for any $\eta \in (0,1/5]$ and any loss sequence $c_1, \ldots, c_T \in [-2,2]^K$,
algorithm $\calA$ (possibly with knowledge of $\eta$) produces sampling distributions $w_1, \ldots, w_T \in \Delta_K$ and ensures one of the following static regret bounds:% for all $i\in[K]$: 
\begin{align}
    &\sum_{t=1}^T w_t^\top c_t - \sum_{t=1}^T c_t(i) \leq \frac{C\ln K}{\eta} + \eta \sum_{t=1}^T \abr{c_t(i)}, \quad\forall i\in[K] \label{eqn:static1_1} \\
    \text{or}\quad &\sum_{t=1}^T w_t^\top c_t - \sum_{t=1}^T c_t(i) \leq \frac{C\ln K}{\eta} + \eta \sum_{t=1}^T \abr{w_t^\top c_t - c_t(i)}, \quad\forall i\in[K]. \label{eqn:static1_2} 
\end{align}
\end{condition}

\begin{condition}
\label{con:switching1}
\normalfont
There exists a constant $C > 0$ such that for any $\eta \in (0,1/5]$, any loss sequence $h_1, \ldots, h_T \in [-3,3]^2$, and any $S \in [T]$,
algorithm $\calA_i$ (possibly with knowledge of $\eta$) produces sampling distributions $q_1, \ldots, q_T \in \Delta_2$ and ensures one of the following switching regret bounds against any sequence $b_1, \ldots, b_T \in \{0, 1\}$ with $\sum_{t=2}^T\one\cbr{b_t \neq b_{t-1}}\leq S-1$:\footnote{In terms of notation in Algorithm~\ref{alg:reduction1}, $q_t = (1-z_t(i), z_t(i))$.}
\begin{align}
    &\sum_{t=1}^T q_t^\top h_t - \sum_{t=1}^T h_t(b_t) \leq \frac{CS\ln T}{\eta} + \eta \sum_{t=1}^T \abr{h_t(b_t)},  \label{eqn:switching1_1} \\
    \text{or}\quad &\sum_{t=1}^T q_t^\top h_t - \sum_{t=1}^T h_t(b_t) \leq \frac{CS\ln T}{\eta} + \eta \sum_{t=1}^T \abr{q_t^\top h_t - h_t(b_t)},  \label{eqn:switching1_2} \\ 
    \text{or}\quad &\sum_{t=1}^T q_t^\top h_t - \sum_{t=1}^T h_t(b_t) \leq \frac{CS\ln T}{\eta} + \eta \sum_{t=1}^T \sum_{b\in\cbr{0,1}}q_t(b)\abr{h_t(b)}. \label{eqn:switching1_3} 
\end{align}
\end{condition}

We emphasize that these data-dependent bounds are all standard in the online learning literature,\footnote{In fact, most standard bounds replace the absolute value we present here with square, leading to even smaller bounds (up to a constant). We choose to use the looser ones with absolute values since this makes the conditions weaker while still being sufficient for all of our analysis.\label{ftn:square}}
and provide a few examples below (see Appendix~\ref{app:examples} for brief proofs).

\begin{proposition}
\label{prop:example1}
The following algorithms all satisfy Condition~\ref{con:static1}:
Variants of Hedge~\citep{hazan2010extracting, steinhardt2014adaptivity}, Prod~\citep{cesa2007improved},  Adapt-ML-Prod~\citep{gaillard2014second}, AdaNormalHedge~\citep{luo2015achieving}, and iProd/Squint~\citep{koolen2015second}.
\end{proposition}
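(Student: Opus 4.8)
The plan is to verify, for each listed algorithm, that one of the two inequalities~\eqref{eqn:static1_1}--\eqref{eqn:static1_2} holds with a universal constant $C$, uniformly over $\eta \in (0,1/5]$ and over loss sequences bounded in $[-2,2]^K$. The key observation is that every one of these algorithms already comes with a well-known ``second-order'' or ``local-norm'' regret bound in the literature, and the conditions we stated are deliberately weakened versions (absolute values instead of squares, as noted in footnote~\ref{ftn:square}) of those standard bounds. So the work is essentially bookkeeping: recall the published bound, rescale for the range $[-2,2]$, and crudely upper bound squares by absolute values using $|x|\le 2$ on our loss range (which costs only a constant factor absorbed into $C$).

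Concretely, I would organize the argument into two groups. For Prod, Adapt-ML-Prod, AdaNormalHedge, and iProd/Squint: these all satisfy a bound of the form $\sum_t w_t^\top c_t - \sum_t c_t(i) \le O\!\big(\tfrac{\ln K}{\eta}\big) + \eta\sum_t (c_t(i))^2$ (or with $(w_t^\top c_t - c_t(i))^2$), possibly after fixing the learning rate to $\eta$ — AdaNormalHedge and Squint are parameter-free and obtain such a bound for every $\eta$ simultaneously, while for Prod and Adapt-ML-Prod one instantiates the step size appropriately. Since $|c_t(i)|\le 2$ on our loss range, $(c_t(i))^2 \le 2|c_t(i)|$, so the square-version bound implies~\eqref{eqn:static1_1} with $C$ at most a constant multiple of the published constant. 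For the variants of Hedge of~\citep{hazan2010extracting, steinhardt2014adaptivity}: these are ``adaptive'' Hedge analyses giving a regret bound in terms of $\sum_t (w_t^\top c_t - c_t(i))^2$ or the variance of the predictions; the same squaring trick converts this into form~\eqref{eqn:static1_2}. I would present this as a short lemma-by-lemma check in the appendix, citing the precise theorem number of each paper and noting the rescaling.

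The main obstacle — really the only subtle point — is matching conventions: the cited papers variously use losses in $[0,1]$ or $[-1,1]$, state gains rather than losses, use a factor like $\tfrac12\eta$ versus $\eta$ in front of the second-order term, and sometimes phrase the bound with the learner's cumulative loss rather than instantaneous terms. I would handle this by a uniform reduction at the start of the appendix section: shift and scale any loss vector in $[-2,2]^K$ into the range each cited result expects, track how $\eta$ and the constants transform under that affine map, and verify the transformed bound still has the shape of~\eqref{eqn:static1_1} or~\eqref{eqn:static1_2} with the stated range of $\eta$ (shrinking the admissible $\eta$ to $(0,1/5]$ is harmless since all these bounds only require $\eta$ below some absolute constant). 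Once the normalization is pinned down, each individual verification is a one-line citation, so I expect the appendix proof to be quite short.
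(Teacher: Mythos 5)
Your proposal matches the paper's proof in essentially all respects: the paper likewise cites Prod's original second-order analysis for Eq.~\eqref{eqn:static1_2}, handles the parameter-free algorithms (Adapt-ML-Prod, AdaNormalHedge, iProd/Squint) by converting their $\sqrt{(\ln K)\sum_t\abr{w_t^\top c_t - c_t(i)}}+\ln K$ guarantee into the $\eta$-form via AM-GM and the constraint $\eta\le 1/5$, and relies on the same square-to-absolute-value relaxation noted in the footnote. The only difference is presentational: for the Hedge variant the paper does not merely cite \citep{hazan2010extracting, steinhardt2014adaptivity} but writes out a short self-contained potential-function proof for a specific weighting $w_t(i)\propto\exp(-\sum_{\tau<t}(\eta c_\tau(i)+\eta^2 c_\tau^2(i)))$, which yields form~\eqref{eqn:static1_1} rather than~\eqref{eqn:static1_2}; either form suffices for Condition~\ref{con:static1}, so this does not affect correctness.
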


\begin{proposition}
\label{prop:example2}
The following algorithms all satisfy Condition~\ref{con:switching1}:
Fixed-share~\citep{herbster1998tracking}, a variant of Fixed-share (Algorithm~\ref{alg:fixed-share variant} in Appendix~\ref{app:examples}), and AdaNormalHedge.TV~\citep{luo2015achieving}. 
\end{proposition}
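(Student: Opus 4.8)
The plan is to establish Condition~\ref{con:switching1} for each of the three algorithms by quoting its standard regret guarantee from the cited work and rewriting it in one of the shapes~\eqref{eqn:switching1_1}--\eqref{eqn:switching1_3}, so the proof is essentially bookkeeping built on two elementary manipulations. First, a learning-rate-tuned algorithm run with internal step size $\beta$ admits a bound of the form $\frac{A}{\beta}+\beta B$, where $A=\order(S\ln T)$ because over $K=2$ actions the $\ln K$ contribution is a constant, and $B$ is a data-dependent second-order quantity; since Condition~\ref{con:switching1} lets $\calA_i$ depend on $\eta$, we run the algorithm with $\beta$ equal to a fixed constant fraction of the target $\eta$, which makes the coefficient of $B$ exactly $\eta$ while only inflating the constant in front of $S\ln T$. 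Second, a parameter-free algorithm has a bound of the form $\order\bigl(\sqrt{(S\ln T)\,B}\bigr)$, and the AM--GM inequality $2\sqrt{ab}\le a/\eta+\eta b$ converts this into $\frac{CS\ln T}{\eta}+\eta B$ simultaneously for every $\eta$. Whenever the literature states $B$ as a sum of squares we pass to absolute values via $x^2\le 6|x|$, valid since each relevant scalar ($h_t(b)$, $q_t^\top h_t$, and $q_t^\top h_t-h_t(b_t)$) lies in $[-6,6]$; this changes only constants.

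For \textbf{Fixed-share}~\citep{herbster1998tracking}, instantiate it over the two actions $\{0,1\}$ with step size $\beta=\eta/3\le 1/5$ and mixing rate $\alpha=1/T$. The textbook potential argument, using $e^{-y}\le 1-y+y^2$ (valid because $|y|=|\beta h_t(b)|\le 3\beta=\eta\le 1/5$), yields for any comparator sequence $b_1,\dots,b_T\in\{0,1\}$ with at most $S-1$ switches
\begin{equation*}
\sum_{t=1}^T q_t^\top h_t-\sum_{t=1}^T h_t(b_t) \le \frac{\ln 2+(S-1)\ln T+(T-S)\ln\tfrac{1}{1-1/T}}{\beta} + \beta\sum_{t=1}^T\sum_{b\in\{0,1\}}q_t(b)\,h_t(b)^2 .
\end{equation*}
Since $(T-S)\ln\tfrac{1}{1-1/T}\le \tfrac{T}{T-1}\le 2$ for $T\ge 2$, the first term is at most $\tfrac{CS\ln T}{\eta}$ for a universal constant $C$; bounding $h_t(b)^2\le 3|h_t(b)|$ and recalling $\beta=\eta/3$ turns the second term into exactly $\eta\sum_t\sum_{b}q_t(b)|h_t(b)|$, which is~\eqref{eqn:switching1_3}. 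Because $\alpha$ is fixed to $1/T$, Fixed-share needs no knowledge of $S$, so it is a valid choice of $\calA_i$.

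For \textbf{AdaNormalHedge.TV}~\citep{luo2015achieving}, which is parameter-free, the known tracking-regret guarantee over these two actions against any sequence with at most $S-1$ switches is of order $\order\bigl(\sqrt{(S\ln T)\sum_{t}|q_t^\top h_t-h_t(b_t)|}\bigr)$ (its general $\ln(KT)$ factor is $\order(\ln T)$ when $K=2$); applying $2\sqrt{ab}\le a/\eta+\eta b$ with $a=\order(S\ln T)$ and $b=\sum_t|q_t^\top h_t-h_t(b_t)|$ gives~\eqref{eqn:switching1_2} for all $\eta\in(0,1/5]$. The \textbf{variant of Fixed-share} (Algorithm~\ref{alg:fixed-share variant} in Appendix~\ref{app:examples}) is handled exactly like Fixed-share, except that its refined (Prod-type) update replaces the second-order term $\sum_t\sum_b q_t(b)h_t(b)^2$ by the small-loss term $\sum_t|h_t(b_t)|$, so after the same step-size rescaling it satisfies~\eqref{eqn:switching1_1}; the computation is spelled out in Appendix~\ref{app:examples}.

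The only thing that needs care is the constant-level bookkeeping just described: matching the coefficient of the data-dependent sum to exactly $\eta$ (rescaling the internal step size for the tuned algorithms, and choosing the AM--GM split for the parameter-free one), checking that the quadratic exponential bound $e^{-y}\le 1-y+y^2$ is valid on the range $|y|\le 3\eta$ swept by the $[-3,3]$-valued losses, and verifying that each cited regret bound --- usually stated for $[0,1]$- or $[-1,1]$-valued losses --- survives an affine rescaling of the loss range (shifting losses by a constant leaves the algorithms unchanged, and scaling is absorbed into $C$). None of this is conceptually difficult; it is simply the price of stating the generic Condition~\ref{con:switching1} in a uniform, scale-specific form.
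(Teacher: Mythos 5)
Your proposal is correct and follows essentially the same route as the paper: Fixed-share is shown to satisfy Eq.~\eqref{eqn:switching1_3} via the standard second-order potential/mirror-descent argument with mixing rate $1/T$, the Fixed-share variant with the $\exp(-\eta h_t(b)-\eta^2 h_t^2(b))$ update yields the comparator-evaluated small-loss bound~\eqref{eqn:switching1_1}, and AdaNormalHedge.TV's parameter-free $\sqrt{(S\ln T)\cdot B}$ guarantee is converted to~\eqref{eqn:switching1_2} by AM--GM. Your explicit internal step-size rescaling $\beta=\eta/3$ and the $x^2\le 3|x|$ conversion are exactly the constant-level bookkeeping the paper leaves implicit, so nothing is missing.
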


We are now ready to state the main result for Algorithm~\ref{alg:reduction1} (see Appendix~\ref{app:proof_reduction1} for the proof).

\begin{theorem}
  \label{thm:reduction1}
  Suppose Conditions~\ref{con:static1} and~\ref{con:switching1} both hold. With $\eta = \min\cbr{\frac{1}{5}, \sqrt{\frac{S\ln T + n\ln K}{T}}}$, Algorithm~\ref{alg:reduction1} ensures
     $%\begin{align*}
         \Reg(i_{1:T}) = %\E\sbr{\sum_{t=1}^T \ell_t(I_t) - \sum_{t=1}^T \ell_t(i_t)} =
          \mc{O}\left(\sqrt{T(S\ln T + n\ln K)}\right)
     $ %\end{align*}
     for any loss sequence $\ell_1, \ldots, \ell_T$ and benchmark sequence $i_1, \dots, i_T$ such that $\sum_{t=2}^T\one\{i_t \neq i_{t-1}\} \leq S-1$ and $|\{i_1, \dots, i_T\}| \leq n$.
\end{theorem}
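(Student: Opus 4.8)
The plan is to decompose the overall regret $\Reg(i_{1:T}) = \E[\sum_t r_t(i_t)]$ by conditioning on which distinct action $i_t$ equals. Since there are at most $n$ distinct actions in the benchmark, write $\sum_{t=1}^T r_t(i_t) = \sum_{i \in \{i_1,\dots,i_T\}} \sum_{t : i_t = i} r_t(i)$. The key conceptual step is to relate each inner sum to the \emph{confidence-rated} regret against action $i$ with the confidence scores $z_t(i)$ produced by $\calA_i$. For the time steps where $i_t = i$, the ``correct'' benchmark sequence $b_1,\dots,b_T$ fed conceptually to $\calA_i$ is $b_t = \one\{i_t = i\}$; this sequence has at most $S-1$ switches (it is a coarsening of the benchmark switches), so Condition~\ref{con:switching1} applies to each $\calA_i$ with this $b$-sequence and the losses $h_t = (0, 5\eta - r_t(i))$.

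The first main step is the \emph{confidence-rated reduction bound}: by the reduction recalled at the end of Section~\ref{sec:setup} (with $w_t^\top c_t = 0$ by construction of $p_t$), one has $\E[\sum_t z_t(i) r_t(i)] = \E[\sum_t (w_t^\top c_t - c_t(i))]$, and then Condition~\ref{con:static1} on $\calA$ (with losses $c_t(i) = -z_t(i)r_t(i) \in [-2,2]$) gives $\sum_t z_t(i) r_t(i) \le \frac{C\ln K}{\eta} + \eta \sum_t |z_t(i) r_t(i)| \le \frac{C\ln K}{\eta} + \eta \sum_t |r_t(i)|$ (using either form of the condition, since $|w_t^\top c_t - c_t(i)| = |c_t(i)| = |z_t(i)r_t(i)|$ here as $w_t^\top c_t=0$). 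The second main step analyzes each $\calA_i$: applying Condition~\ref{con:switching1} with the benchmark $b_t = \one\{i_t=i\}$, the ``learner'' term is $\sum_t q_t^\top h_t = \sum_t z_t(i)(5\eta - r_t(i))$ and the ``comparator'' term is $\sum_t h_t(b_t) = \sum_{t : i_t=i}(5\eta - r_t(i))$; rearranging yields a bound on $\sum_{t:i_t=i} r_t(i)$ in terms of $\sum_t z_t(i)r_t(i)$ plus $\frac{CS\ln T}{\eta}$, the $5\eta$ bias terms, and an $\eta\sum_t|\,\cdot\,|$ error term (bounded using $|h_t(\cdot)| \le |r_t(i)| + 5\eta \le 1 + 5\eta \le 3$, and $|r_t(i)| \le 2$).

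Chaining these two steps eliminates the opaque quantity $\sum_t z_t(i) r_t(i)$ and leaves, for each distinct $i$, a bound of the form $\sum_{t:i_t=i} r_t(i) \le \frac{C\ln K}{\eta} + \frac{CS_i \ln T}{\eta} + (\text{small terms involving } \eta)$, where $S_i$ is the number of intervals on which $i_t = i$; crucially $\sum_i S_i \le S$ (the intervals over all distinct actions partition into the $S$ blocks of the benchmark), and there are at most $n$ distinct actions, so summing over $i$ gives $\frac{Cn\ln K}{\eta} + \frac{CS\ln T}{\eta}$ for the leading terms. The step I expect to be the main obstacle is controlling the $\eta$-weighted error terms: after summing the $\eta \sum_t |r_t(i)|$-type contributions over all $i$ one gets $\eta \sum_i \sum_t |r_t(i)|$, which is $O(\eta T n)$ naively and too large; the fix is precisely the role of the $5\eta$ bias — it forces $z_t(i)$ to be large (close to $1$) whenever $r_t(i) \lesssim 5\eta$, so that on the ``off'' rounds ($i_t \neq i$) the confidence-rated term $\sum_t z_t(i) r_t(i)$ already absorbs the bulk of $\sum_t r_t(i)$, and one must carefully play the $5\eta \sum_t z_t(i)$ gain against the $\eta \sum_t |r_t(i)|$ loss, using $\eta \le 1/5$, to show the net error is $O(\eta T)$ rather than $O(\eta T n)$. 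Once that telescoping/cancellation is done, plugging in $\eta = \min\{1/5, \sqrt{(S\ln T + n\ln K)/T}\}$ balances $\frac{1}{\eta}(S\ln T + n\ln K)$ against $\eta T$ and yields the claimed $\order(\sqrt{T(S\ln T + n\ln K)})$ bound.
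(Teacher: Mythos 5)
Your proposal follows essentially the same route as the paper's proof: decompose by distinct action, chain the static guarantee of $\calA$ (Condition~\ref{con:static1}) with the switching guarantee of $\calA_i$ (Condition~\ref{con:switching1}) applied to $b_t=\one\{i_t=i\}$, cancel the $\sum_t z_t(i)r_t(i)$ terms, sum over the $\le n$ distinct actions using $\sum_i S_i = O(S)$ and $\sum_i T_i = T$, and tune $\eta$. The one place where your description diverges from what actually works is the cancellation of the $\eta$-weighted error terms: the resolution is purely algebraic, not behavioral. You should \emph{not} bound $\eta\sum_t\abr{c_t(i)}=\eta\sum_t z_t(i)\abr{r_t(i)}$ by $\eta\sum_t\abr{r_t(i)}$; keeping the confidence weight gives $2\eta\sum_t z_t(i)$, while the $\calA_i$ step (with $\eta B\le 3\eta\sum_t z_t(i)+3\eta T_i$ and the $5\eta$ bias contributing $5\eta\sum_t z_t(i)$ on the learner's side) leaves a surplus of exactly $-2\eta\sum_t z_t(i)$ that cancels it, so the only surviving error is $8\eta T_i$, which sums to $8\eta T$. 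No claim that the bias ``forces $z_t(i)$ to be large'' is needed — nor would such a claim follow from Condition~\ref{con:switching1}, which is only a regret bound and says nothing about the behavior of $z_t(i)$.
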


%\begin{proof}[Proof sketch] 
%For each distinct action $i$ in $\calJ = \{i_1, \dots, i_T\}$, 
%we first apply the static regret bound of $\calA$ stated in Condition~\ref{con:static1}.
%With the fact $w_t^\top c_t = 0$ and $|r_t(i)|\leq 2$, this gives
%%\begin{equation}\label{eqn:A_guarantee}
%$\sum_{t=1}^T z_t(i)r_t(i) \leq \frac{C\ln K}{\eta} + 2\eta \sum_{t=1}^T z_t(i).$
%%\end{equation}
%Next we apply the switching regret bound of $\calA_i$ stated in Condition~\ref{con:switching1} with $b_t = 0$ if $i_t \neq i$ and $b_t =1$ otherwise,
%and arrange terms to arrive at (see Appendix~\ref{app:proof_reduction1} for details):
%\begin{equation*}%\label{eqn:A_i_guarantee}
%\sum_{t: i_t = i} r_t(i) \leq \frac{CS_i\ln T}{\eta} + 8\eta T_i + \sum_{t=1}^T \left(z_t(i)r_t(i) - 2\eta z_t(i)\right),
%\end{equation*}
%where $T_i = |\{t: i_t = i\}|$ and $S_i = 1+\sum_{t=2}^T \one\cbr{b_t \neq b_{t-1}}$. 
%Combining the two guarantees above and canceling terms give $\sum_{t: i_t = i} r_t(i) \leq \frac{C(S_i\ln T + \ln K)}{\eta} + 8\eta T_i$.
%Finally summing over $i \in \calJ$, using the fact $\Reg(i_{1:T}) = \E\sbr{\sum_{i\in\calJ} \sum_{t: i_t = i} r_t(i)}$, $\sum_{i\in\calJ} S_i \leq 2S + n \leq 3S$, $\sum_{i\in\calJ} T_i = T$, $\abr{\calJ} \leq n$
%and the choice of $\eta$ finish the proof.
%\end{proof}

%In fact, if both master and sub-routine algorithms are just common algorithms without any adaptive bound guarantee as shown in above assumptions, one can still prove a relatively weaker regret bound $\mc{O}\left(\sqrt{T(nS\ln T+n^2\ln K)}\right)$. See Appendix \ref{appendix:proofs} for more details.

Our bound in Theorem~\ref{thm:reduction1} is slightly worse than the existing bound of $\order\left(\sqrt{T(S\ln\frac{nT}{S} + n\ln\frac{K}{n})}\right)$~\citep{bousquet2002tracking, adamskiy2012putting},\footnote{%
In fact, using the adaptive guarantees of AdaNormalHedge~\citep{luo2015achieving} or iProd/Squint~\citep{koolen2015second} that replaces the $\ln K$ dependence in Eq.~\eqref{eqn:static1_2} by a KL divergence term,
one can further improve the term $n\ln K$ in our bound to $n\ln \frac{K}{n}$ matching previous bounds.
Since this improvement is small, we omit the details.
}
but still improves over the typical switching regret $\order(\sqrt{T(S\ln T+ S\ln K)})$ (Eq.~\eqref{eqn:expert_switching_regret}), especially when $n$ is small and $S$ and $K$ are large.
To better understand the implication of our bounds,
consider the following thought experiment.
If the learner knew about the switch points (that is, $\{t: i_t \neq i_{t-1}\}$) that naturally divide the whole game into $S$ intervals,
she could simply pick any algorithm with optimal static regret ($\sqrt{\text{``\#rounds''}\ln K}$) and apply $S$ instances of this algorithm, one for each interval, which, via a direct application of the Cauchy-Schwarz inequality, leads to switching regret $\sqrt{TS\ln K}$.
Compared to bound~\eqref{eqn:expert_switching_regret},
this implies that the price of not knowing the switch points is $\sqrt{TS\ln T}$.
Similarly, if the learner knew not only the switch points, but also the information on which intervals share the same competitor, then she could naturally apply $n$ instances of the static algorithm, one for each set of intervals with the same competitor.
Again by the Cauchy-Schwarz inequality, this leads to switching regret $\sqrt{Tn\ln K}$.
Therefore, our bound implies that the price of not having any prior information of the benchmark sequence is still $\sqrt{TS\ln T}$.

Compared to existing methods, our framework is more flexible and allows one to plug in any combination of the algorithms listed in Propositions~\ref{prop:example1} and~\ref{prop:example2}. This flexibility is crucial and allows us to solve the problems discussed in the following sections.
The approach of~\citep{adamskiy2012putting} makes use of a sleeping expert framework, a special case of the confidence-rated expert framework.
However, their approach is not a general reduction and does not allow plugging in different algorithms.
Finally, we note that our construction also shares some similarity with the black-box approach of~\citep{christiano2017manipulation} for a multi-task learning problem.

%The general framework used here has a close relation with approaches proposed in \cite{adamskiy2012putting,christiano2016collaborative}, though both of them only considered adversarial setting. In detail, \cite{adamskiy2012putting} also considered long-term memory but in log-loss setting. Authors gave a Bayesian explanation of Mixing Past Posteriors (MPP) algorithm \cite{bousquet2002tracking} via the sleeping expert framework and carefully chosen prior. However, it only involved one learning algorithm rather than two-layer learning algorithms in our framework. Besides, section 4 in \cite{adamskiy2012putting} and paper \cite{christiano2016collaborative} studied a related problem called Sparse Multitask Learning (or Collaborative Prediction in \cite{christiano2016collaborative}), where there was additional available contextual information at the beginning of each round \footnote{Note in long-term memory, we don't have any contextual information.}. Though algorithm proposed in \cite{adamskiy2012putting} is still one-layer and algorithm proposed in \cite{christiano2016collaborative} is hierarchic like ours, which make them look completely different, we find there is a deep relation between them: if we use the mix loss for master algorithm, and set Hedge as sub-routine algorithm in hierarchic framework in \cite{christiano2016collaborative}, then it exactly recovers Algorithm 2 in \cite{adamskiy2012putting}! See Appendix \ref{appendix: deep relation} for more details. 

\subsection{Best of both worlds}
\label{subsec:best-of-both-worlds}

\begin{algorithm}[t!]
 \caption{A Parameter-free Reduction for Best-of-both-worlds}
\label{alg:reduction2}
 \textbf{Define}: $M = \lfloor \log_2 \frac{\sqrt{T}}{5} \rfloor + 1$, $\eta_j = \min\cbr{\frac{1}{5}, \frac{2^{j-1}}{\sqrt{T}}}$ for $j \in [M]$ \label{line:etas} \\
 \textbf{Input}: expert algorithm $\calA$ learning over $KM$ actions with static regret guarantee (cf. Condition~\ref{con:static2}), expert algorithms $\cbr{\calA_{ij}}_{i\in[K], j\in[M]}$ learning over two actions $\{0,1\}$ with switching regret guarantee (cf. Condition~\ref{con:switching1})  \\

\For{$t=1,2, \ldots$}{
 Receive sampling distribution $w_t \in \Delta_{KM}$ from $\calA$ \\
 Receive sampling probability $z_t(i, j)$ for action ``1'' from $\calA_{ij}$ for each $i\in [K]$ and $j\in [M]$ \\
 Sample $I_t \sim p_t$ where $p_t(i) \propto \sum_{j=1}^M z_{t}(i, j) w_t(i, j), ~\forall i$, and receive $\ell_t \in [-1,1]^K$  \label{line:sampling} \\
 Feed loss vector $c_t$ to $\calA$, where $c_t(i, j) = -z_t(i, j)r_t(i)$ with $r_t(i) = p_t^\top \ell_t - \ell_t(i)$  \\
 Feed loss vector $\left(0, 5\eta_j|r_t(i)| - r_t(i)\right)$ to $\calA_{ij}$ for each $i\in[K]$ and $j\in[M]$ \label{line:new_loss}
}
\end{algorithm}

To further demonstrate the power of our approach,
we now show how to use our framework to construct a parameter-free algorithm that enjoys the best of both adversarial and stochastic environments,
resolving the open problem of~\citep{warmuth2014open} (see Algorithm~\ref{alg:reduction2}).
The key is to derive an adaptive switching regret bound that replaces the dependence on $T$ by the sum of the magnitudes of the instantaneous regret $\sum_t |r_t(i)|$, which previous works~\citep{gaillard2014second, luo2015achieving} show is sufficient for adapting to the stochastic setting and achieving logarithmic regret.

To achieve this goal, the first modification we need is to change the bias term for the loss of action ``1'' for $\calA_i$ from $5\eta$ to $5\eta|r_t(i)|$.
Following the proof of Theorem~\ref{thm:reduction1}, one can show that the dependence  on $|\{t: i_t = i\}|$ now becomes $\sum_{t: i_t = i} \abr{r_t(i)}$ for the regret against $i$.
%the previous bound
%$\sum_{t: i_t = i} r_t(i) \leq \frac{C(S_i\ln T + \ln K)}{\eta} + 8\eta T_i$
%becomes $\sum_{t: i_t = i} r_t(i) \leq \frac{C(S_i\ln T + \ln K)}{\eta} + 8\eta \sum_{t: i_t = i} \abr{r_t(i)}$.
If we could tune $\eta$ optimally in terms of this data-dependent quality, then this would imply logarithmic regret in the stochastic setting %(defined in Section~\ref{sec:setup}) 
by the same reasoning as in~\citep{gaillard2014second, luo2015achieving}.

However, the difficulty is that the optimal tuning of $\eta$ is unknown beforehand, and more importantly, different actions require tuning $\eta$ differently.
To address this issue, at a high level we discretize the learning rate and pick $M = \Theta(\ln T)$ exponentially increasing values (Line~\ref{line:etas}),
then we make $M = \Theta(\ln T)$ copies of each action $i \in [K]$, one for each learning rate $\eta_j$. 
More specifically, this means that the number of actions for $\calA$ increases from $K$ to $KM$, and so does the number of sub-routines with switching regret, now denoted as $\calA_{ij}$ for $i\in[K]$ and $j\in[M]$.
Different copies of an action $i$ share the same loss $\ell_t(i)$ for $\calA$,
while action ``1'' for $\calA_{ij}$ now suffers loss $5\eta_j |r_t(i)| - r_t(i)$ (Line~\ref{line:new_loss}).
The rest of the construction remains the same.
Note that selecting a copy of an action is the same as selecting the corresponding action, which explains the update rule of the sampling probability $p_t$ in Line~\ref{line:sampling} that marginalizes over $j$.
Also note that for a vector in $\fR^{KM}$ (e.g., $w_t, c_t, z_t$),
we use $(i, j)$ to index its coordinates for $i\in[K]$ and $j\in[M]$.

Finally, with this new construction, we need algorithm $\calA$ to exhibit a more adaptive static regret bound and in some sense be aware of the fact that different actions now correspond to different learning rates.
More precisely, we replace Condition~\ref{con:static1} with the following condition:

\begin{condition}
\label{con:static2}
\normalfont
There exists a constant $C > 0$ such that for any $\eta_1, \ldots, \eta_M \in (0,1/5]$ and any loss sequence $c_1, \ldots, c_T \in [-2,2]^{KM}$,
algorithm $\calA$ (possibly with knowledge of $\eta_1, \ldots, \eta_M$) produces sampling distributions $w_1, \ldots, w_T \in \Delta_{KM}$ and ensures the following static regret bounds: for all $i\in[K]$ and $j \in [M]$:\footnote{%
In fact an analogue of Eq.~\eqref{eqn:static1_1} with individual learning rates would also suffice,
but we are not aware of any algorithms that achieve such guarantee.
}
\begin{align}
\sum_{t=1}^T w_t^\top c_t - \sum_{t=1}^T c_t(i, j) \leq \frac{C\ln (KM)}{\eta_j} + \eta_j \sum_{t=1}^T \abr{w_t^\top c_t - c_t(i, j)}. \label{eqn:static2} 
\end{align}
\end{condition}

Once again, this requirement is achievable by many existing algorithms
and we provide some examples below (see Appendix~\ref{app:examples} for proofs).

\begin{proposition}
\label{prop:example3}
The following algorithms all satisfy Condition~\ref{con:static2}:
A variant of Hedge (Algorithm~\ref{alg:hedge_variant2} in Appendix~\ref{app:examples}), Adapt-ML-Prod~\citep{gaillard2014second}, AdaNormalHedge~\citep{luo2015achieving}, and iProd/Squint~\citep{koolen2015second}.
\end{proposition}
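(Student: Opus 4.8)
The plan is to run any of the listed algorithms over the $KM$ ``flattened'' actions $(i,j)$ as if they were ordinary experts, and to observe that each such algorithm already comes with a well-known \emph{second-order} adaptive static regret bound that holds \emph{simultaneously} for every comparator; Eq.~\eqref{eqn:static2} then follows by a one-line AM--GM weakening in which the learning rate $\eta_j$ is chosen \emph{in hindsight}, separately for each comparator $(i,j)$. Concretely, for Adapt-ML-Prod, AdaNormalHedge, and iProd/Squint the known guarantee, after rescaling the losses from $[-2,2]$ to a unit range (which only affects absolute constants), has the form
\[
\sum_{t=1}^T w_t^\top c_t - \sum_{t=1}^T c_t(i,j) \;\le\; c_0\sqrt{V_{(i,j)}\big(\ln(KM) + \ln\ln T\big)} + c_0\big(\ln(KM)+\ln\ln T\big)
\]
for an absolute constant $c_0$, where $V_{(i,j)}=\sum_t\big(w_t^\top c_t-c_t(i,j)\big)^2$ (the looser versions of these bounds stated with absolute values in place of squares also suffice; cf.\ footnote~\ref{ftn:square}). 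Since $M=\Theta(\ln T)$ we have $\ln\ln T\le\ln(KM)+\order(1)$, so the right-hand side is $\order\!\big(\sqrt{V_{(i,j)}\ln(KM)}+\ln(KM)\big)$.

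The second step is the weakening itself. Fix $(i,j)$. Since $\eta_j\le 1/5<1$ we have $\ln(KM)\le\ln(KM)/\eta_j$, and by AM--GM $\sqrt{V_{(i,j)}\ln(KM)}\le a\,\ln(KM)/\eta_j+\eta_j V_{(i,j)}/(4a)$ for any $a>0$. Choosing $a$ a large enough absolute constant and then bounding $V_{(i,j)}\le 4\sum_t|w_t^\top c_t-c_t(i,j)|$ (valid because $c_t\in[-2,2]^{KM}$ forces $|w_t^\top c_t-c_t(i,j)|\le 4$) turns the displayed bound into exactly Eq.~\eqref{eqn:static2} with an enlarged constant $C$. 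Note that this argument never requires $\calA$ to know the $\eta_j$'s; it only uses that the adaptive bound holds for all comparators at once, which is precisely what makes these parameter-free algorithms convenient here.

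The Hedge variant (Algorithm~\ref{alg:hedge_variant2}) is handled differently, since a single fixed learning rate cannot yield a comparator-dependent $\eta_j$; that variant instead hard-wires the learning rate $\eta_j$ into the $j$-th copy of each action (this is where the ``possibly with knowledge of $\eta_1,\dots,\eta_M$'' clause of Condition~\ref{con:static2} is used). Here I would run the standard ``prod''/excess-loss analysis coordinate by coordinate: since all learning rates are small enough that $\eta_j|w_t^\top c_t-c_t(i,j)|$ stays below $1$, applying $e^{y}\le 1+y+y^2$ to the normalized exponential weights yields $\sum_t(w_t^\top c_t-c_t(i,j))\le \ln(KM)/\eta_j+\eta_j\sum_t(w_t^\top c_t-c_t(i,j))^2$, and then the square is bounded by $4|w_t^\top c_t-c_t(i,j)|$ as above. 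The step I expect to require the most care is exactly this last algorithm: one must choose the prior / normalization so that a $\ln(KM)$ factor (rather than $\ln K$ or $\ln M$ alone) appears in front of each $1/\eta_j$, and one must verify that the excess-loss prod inequality holds over the full range $c_t\in[-2,2]^{KM}$, not only for small or nonnegative losses. For the other three algorithms the adaptive bounds can be quoted essentially verbatim from their references (see Appendix~\ref{app:examples}), so little new work is needed there.
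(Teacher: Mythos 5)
Your proposal follows the same route as the paper's proof. For Adapt-ML-Prod, AdaNormalHedge, and iProd/Squint the argument is complete and matches the paper exactly: quote the simultaneous second-order adaptive bound and weaken it via AM--GM with the hindsight-chosen $\eta_j$ (your observation that $\ln\ln T \le \ln(KM)+\order(1)$ because $M=\Theta(\ln T)$ is a nice touch that the paper glosses over). For the Hedge variant, however, you correctly flag the normalization as "the step requiring the most care" but then leave it unresolved, and it is in fact the only non-routine ingredient. A uniform prior with per-coordinate learning rates does \emph{not} make the standard potential argument telescope: the fix in Algorithm~\ref{alg:hedge_variant2} is to set $w_t(i,j)\propto \eta_j\exp\rbr{\sum_{\tau<t}(\eta_j r_\tau(i,j)-\eta_j^2 r_\tau^2(i,j))}$, i.e.\ to put an extra factor $\eta_j$ in front of the exponential weight. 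With $\Phi_{t,(i,j)}=\exp\rbr{\sum_{\tau\le t}(\eta_j r_\tau(i,j)-\eta_j^2 r_\tau^2(i,j))}$ and the prod inequality $e^{x-x^2}\le 1+x$, one gets $\sum_{i,j}\Phi_{t,(i,j)}\le\sum_{i,j}\Phi_{t-1,(i,j)}(1+\eta_j r_t(i,j))$, and the first-order terms vanish precisely because $\sum_{i,j}w_t(i,j)r_t(i,j)=0$ translates into $\sum_{i,j}\eta_j\Phi_{t-1,(i,j)}r_t(i,j)=0$ thanks to that $\eta_j$ prefactor; the potential then stays at most $KM$, giving the $\ln(KM)/\eta_j$ term for every $(i,j)$ simultaneously. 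Without identifying this weighting your sketch of the Hedge-variant case does not close, so you should either supply it or restrict the proposition to the three parameter-free algorithms.
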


We now state our main result for Algorithm~\ref{alg:reduction2} (see Appendix~\ref{app:reduction2} for the proof).

\begin{theorem}
  \label{thm:reduction2}
Suppose algorithm $\calA$ satisfies Condition~\ref{con:static2} and $\cbr{\calA_{ij}}_{i\in[K], j\in[M]}$ all satisfy Condition~\ref{con:switching1}. Algorithm~\ref{alg:reduction2} ensures that
%for any $i\in[K]$ and any $\calT \subset [T]$ with $S_\calT$ disjoint intervals,
%\begin{align}
%    \sum_{t\in \calT} r_t(i) \leq \order\rbr{\sqrt{\rbr{S_\calT \ln T + \ln (NM)}\sum_{t \in \calT} \abr{r_t(i)}} + S_\calT \ln T + \ln (NM) }.     \label{eqn:adaptive_bound}
%\end{align}
%This implies that 
for any benchmark sequence $i_1, \dots, i_T$ such that $\sum_{t=2}^T\one\{i_t \neq i_{t-1}\} \leq S-1$ and $|\{i_1, \dots, i_T\}| \leq n$, the following hold:
\begin{itemize}[leftmargin=*,labelindent=2mm,labelsep=2mm]
%\vspace{-0.8em}
\item In the adversarial setting, we have
$
\Reg(i_{1:T}) = \order\rbr{\sqrt{T(S\ln T + n\ln (K\ln T))}};
$
%\vspace{-1em}
\item In the stochastic setting (defined in Section~\ref{sec:setup}), we have
$
\Reg(i_{1:T}) = \order\rbr{\sum_{i=1}^n \frac{S_i \ln T + \ln (K\ln T)}{\alpha_i}},
$
where $S_i = 1+\sum_{t=2}^T \one\cbr{(i_{t-1}=i \land i_t \neq i) \lor (i_{t-1}\neq i \land i_t = i)}$ s.t. $\sum_{i\in[n]}  S_i \leq 3S$.\footnote{This definition of $S_i$ is the same as the one in the proof of Theorem~\ref{thm:reduction1}.% and satisfies $\sum_{i\in[n]}  S_i \leq 3S$.
}
\end{itemize}
\end{theorem}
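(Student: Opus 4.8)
The plan is to mirror the analysis behind Theorem~\ref{thm:reduction1} while tracking data-dependent quantities so that the learning-rate grid can be exploited. Write $\rho_i := \sum_{t:\,i_t=i} r_t(i)$ for the regret accumulated against comparator $i$ on the rounds where it is active, so that $\Reg(i_{1:T}) = \sum_{i=1}^n \E[\rho_i]$; the whole proof reduces to bounding each $\E[\rho_i]$.

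The heart of the argument is a per-action, per-learning-rate estimate. From the update $p_t(i)\propto\sum_j z_t(i,j)w_t(i,j)$ one checks $w_t^\top c_t = 0$ just as in Section~\ref{sec:setup}, so applying Condition~\ref{con:static2} to coordinate $(i,j)$ and using $-c_t(i,j) = z_t(i,j)r_t(i)$ gives $\sum_t z_t(i,j) r_t(i) \le \tfrac{C\ln(KM)}{\eta_j} + \eta_j\sum_t z_t(i,j)|r_t(i)|$. Separately, $\calA_{ij}$ is run against the binary comparator $b_t = \one\{i_t=i\}$, which switches at most $S_i-1$ times; applying Condition~\ref{con:switching1} to the loss $h_t = (0,\,5\eta_j|r_t(i)| - r_t(i))$, expanding $q_t^\top h_t$ and $h_t(b_t)$, and substituting the first inequality, every occurrence of $\eta_j\sum_t z_t(i,j)|r_t(i)|$ is cancelled by the contribution of the data-dependent bias term $5\eta_j|r_t(i)|$ (this is exactly why the bias is proportional to $|r_t(i)|$ rather than a constant as in Algorithm~\ref{alg:reduction1}), leaving, pathwise and for \emph{every} $j\in[M]$,
\[
\rho_i \le \frac{C B_i}{\eta_j} + 5\eta_j R_i ,\qquad B_i := \ln(KM)+S_i\ln T ,\quad R_i := \sum_{t:\,i_t=i}|r_t(i)| .
\]
One checks that this cancellation is robust to which of the three forms \eqref{eqn:switching1_1}--\eqref{eqn:switching1_3} of Condition~\ref{con:switching1} holds, since in all cases the residual error is $O(\eta_j)\cdot\bigl(R_i + \sum_t z_t(i,j)|r_t(i)|\bigr)$, which is absorbed once $|r_t(i)|\le 2$ is used. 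Because the $\eta_j$ form a geometric grid of ratio $2$ covering $[T^{-1/2},\,\Theta(1)]$, selecting the $j$ whose $\eta_j$ best matches the ideal rate $\sqrt{B_i/R_i}$ (and handling the two boundary cases, where the ideal rate escapes the grid, directly via $R_i\le 2T$) yields the pathwise bound $\rho_i = O\bigl(\sqrt{B_i R_i} + B_i\bigr)$.

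For the adversarial guarantee, bound $R_i \le 2|\{t:i_t=i\}|$ and sum over $i$: Cauchy--Schwarz with $\sum_i |\{t:i_t=i\}| = T$, $\sum_i S_i \le 3S$, and $M=\Theta(\ln T)$ gives $\Reg(i_{1:T}) \le \sum_i O\bigl(\sqrt{B_i|\{t:i_t=i\}|} + B_i\bigr) = O\bigl(\sqrt{T(S\ln T + n\ln(K\ln T))}\bigr)$. For the stochastic guarantee, the extra ingredient is a self-bounding step: whenever $i_t = i$ we have $|r_t(i)| = \bigl|\sum_{k\neq i}p_t(k)(\ell_t(k)-\ell_t(i))\bigr| \le 2(1-p_t(i))$, while the gap assumption gives $\E[r_t(i)\mid\mathcal H_{t-1}] = \sum_{k\neq i}p_t(k)\,\E[\ell_t(k)-\ell_t(i)\mid\mathcal H_{t-1}] \ge \alpha_i(1-p_t(i))$; hence $\E[R_i] \le \tfrac{2}{\alpha_i}\E[\rho_i]$. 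Combining with $\E[\rho_i] \le O\bigl(\sqrt{B_i\,\E[R_i]} + B_i\bigr)$ (Jensen on the pathwise bound) turns this into $\E[\rho_i] \le O\bigl(\sqrt{B_i\,\E[\rho_i]/\alpha_i} + B_i\bigr)$, a quadratic inequality in $\sqrt{\E[\rho_i]}$ that solves to $\E[\rho_i] = O(B_i/\alpha_i)$ (using $\alpha_i\le 2$); summing over $i$ gives $\Reg(i_{1:T}) = O\bigl(\sum_{i=1}^n (S_i\ln T + \ln(K\ln T))/\alpha_i\bigr)$.

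The main obstacle is the per-$(i,j)$ estimate: one must combine the static guarantee of $\calA$ with the switching guarantee of each $\calA_{ij}$ and verify that the $5\eta_j|r_t(i)|$ bias precisely absorbs all accumulated error terms under any admissible form of Condition~\ref{con:switching1}, while keeping the bound uniform in $j$ so that the learning-rate discretization --- which is what lets different comparators effectively use different rates --- can deliver the $\sqrt{B_iR_i}$ shape. The remaining steps (Cauchy--Schwarz over comparators, the self-bounding argument, and solving the quadratic) are standard and follow~\citep{gaillard2014second, luo2015achieving}.
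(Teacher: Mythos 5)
Your proposal is correct and follows essentially the same route as the paper's proof: the same per-$(i,j)$ combination of Condition~\ref{con:static2} with Condition~\ref{con:switching1} in which the $5\eta_j|r_t(i)|$ bias absorbs the data-dependent error terms, the same selection of the best $\eta_j$ from the geometric grid to obtain the adaptive bound $\order(\sqrt{B_iR_i}+B_i)$ per comparator, and the same Cauchy--Schwarz (adversarial) and self-bounding/quadratic (stochastic) conclusions. The only differences are cosmetic (e.g., the constant $5$ vs.\ $7$ in front of $\eta_j R_i$, and phrasing the stochastic step as a quadratic in $\sqrt{\E[\rho_i]}$ rather than the paper's AM-GM rearrangement).
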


In other words, with a negligible price of $\ln\ln T$ for the adversarial setting,
our algorithm achieves logarithmic regret in the stochastic setting with favorable dependence on $S$ and $n$.
The best prior result is achieved by AdaNormalHedge.TV~\citep{luo2015achieving},
with regret $\order\rbr{\sqrt{T(S\ln (TK\ln T))}}$ for the adversarial case and $\order\rbr{\sum_{i=1}^n \frac{S_i \ln (TK\ln T)}{\alpha_i}}$ for the stochastic case. We also remark that a variant of the algorithm of~\citep{bousquet2002tracking} with a doubling trick can achieve a guarantee similar to ours, but weaker in the sense that each $\alpha_i$ is replaced by $\min_i \alpha_i$. 
To the best of our knowledge this was previously unknown and we provide the details in Appendix~\ref{app:doubling_trick} for completeness.

\section{Long-term Memory under Bandit Feedback}
\label{sec:bandit}

In this section, we move on to the bandit setting where the learner only observes the loss of the selected action $\ell_t(I_t)$ instead of $\ell_t$.
As mentioned in Section~\ref{sec:intro}, one could directly generalize the approach of~\citep{bousquet2002tracking, adamskiy2012putting, cesa2012mirror} to obtain a bound of order $\order(\sqrt{TK(S\ln T + n\ln K)})$, a natural generalization of the full information guarantee,
but such a bound is not a meaningful improvement compared to~\eqref{eqn:MAB_switching_regret}, due to the $\sqrt{K}$ dependence that is unavoidable for MAB in the worst case.
Therefore, we consider a special case where the dependence on $K$ is much smaller: the sparse MAB problem~\citep{bubeck2018sparsity}.
Specifically, in this setting we make the additional assumption that all loss vectors are $\rho$-sparse for some $\rho\in[K]$, that is, $\norm{\ell_t}_0 \leq \rho$ for all $t$.
It was shown in~\citep{bubeck2018sparsity} that for sparse MAB the static regret is of order $\order(\sqrt{T\rho\ln K} + K\ln T)$, exhibiting a much favorable dependence on $K$.

\paragraph{Negative result.} 
To the best of our knowledge, there are no prior results on switching regret for sparse MAB.
In light of bound~\eqref{eqn:MAB_switching_regret}, a natural conjecture would be that it would be possible to achieve switching regret of
$\order(\sqrt{T\rho S\ln(KT)} + KS\ln T)$ with $S$ switches.
Perhaps surprisingly, we show that this is in fact impossible. 

\begin{theorem}
\label{thm:lower_bound}
For any $T, S, K \geq 2$ and any MAB algorithm,
there exists a sequence of loss vectors that are $2$-sparse, such that the switching regret of this algorithm is at least $\Omega(\sqrt{TKS})$.
\end{theorem}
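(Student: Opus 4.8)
The plan is to reduce to the classical $\Omega(\sqrt{TK})$ lower bound for (non-switching) MAB by a block construction. Partition the $T$ rounds into $S$ contiguous blocks $B_1, \dots, B_S$, each of length roughly $T/S$. On block $B_m$ I would run an independent hard instance for MAB restricted to a small pool of actions: concretely, I would use a stochastic construction where one ``good'' action in block $B_m$ has slightly lower expected loss than the rest, and the benchmark sequence picks that good action throughout $B_m$. Since the benchmark changes its action at most once per block boundary, it has at most $S-1$ switches, so it is a valid competitor for the switching regret. The key point is that on each block the learner faces a fresh MAB problem and, by the standard minimax lower bound (Auer et al.), must incur regret $\Omega(\sqrt{|B_m| K}) = \Omega(\sqrt{(T/S)K})$ on that block no matter what it learned before, because the identity of the good action in $B_m$ is chosen independently of earlier blocks. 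Summing over the $S$ blocks gives total switching regret $\Omega(S \cdot \sqrt{(T/S)K}) = \Omega(\sqrt{TKS})$.

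The one real constraint to respect is $2$-sparsity of the loss vectors. I cannot use the usual construction where every action has loss Bernoulli$(1/2)$ except the good one. Instead I would use a sparse encoding: at each round, sample a small number of coordinates to be ``active'' (nonzero) so that each loss vector has at most two nonzero entries, while still ensuring that distinguishing the good action from a typical action requires $\Omega(|B_m| K)$ observations in expectation. The standard trick here (as in sparse-loss lower bounds, e.g., via a random ``which coordinate is revealed'' mechanism) is: at each round pick a uniformly random action $J_t \in [K]$ and set $\ell_t(J_t)$ to be a fresh Bernoulli with parameter $1/2$ for a non-special action and $1/2 - \Delta$ for the special one, with $\ell_t(i) = 0$ for $i \ne J_t$; this vector is $1$-sparse (so certainly $2$-sparse) and matches the sparse-MAB setting. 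One checks that the information the learner gains about the identity of the special action per round is $O(\Delta^2 / K)$ via the usual KL/chi-square computation (the $1/K$ because the special action's coordinate is revealed with probability only $1/K$), so after $|B_m|$ rounds the learner still misidentifies the special action with constant probability provided $\Delta = \Theta(\sqrt{K/|B_m|}) = \Theta(\sqrt{KS/T})$, yielding per-block regret $\Omega(|B_m| \Delta) = \Omega(\sqrt{|B_m| K S /T}\cdot|B_m|)$; wait—recompute: per-block regret is $|B_m|\cdot\Delta\cdot(\text{const prob of error}) = \Omega((T/S)\cdot\sqrt{KS/T}) = \Omega(\sqrt{TK/S})$, and summing over $S$ blocks gives $\Omega(\sqrt{TKS})$, as claimed.

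Carrying this out, the steps in order are: (i) fix the block decomposition and, independently for each block, draw the special action uniformly at random; (ii) define the $1$-sparse stochastic loss process described above within each block, with gap $\Delta$ to be tuned; (iii) let the benchmark sequence play the special action of the current block, verifying it has at most $S-1$ switches and that its expected loss is $|B_m|(1/2-\Delta)/K$ per block below the all-uniform baseline—actually below any fixed action's expected loss by $\Omega(|B_m|\Delta/K)$, hmm, the scaling needs care because losses are only revealed with probability $1/K$, so I should rescale $\Delta$ accordingly or equivalently track expected per-block regret directly as $\E[\sum_{t\in B_m}(\ell_t(I_t)-\ell_t(i^*_m))]$; (iv) apply the per-block information-theoretic argument (Pinsker / Le Cam two-point or the standard averaging-over-the-hidden-action argument from Auer et al.) conditionally on the history before block $B_m$, using independence of $B_m$'s hidden action from that history; (v) sum over blocks and tune $\Delta$. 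The main obstacle I anticipate is step (iv) combined with the sparsity constraint: making the per-block lower bound come out to $\Omega(\sqrt{|B_m|K})$ rather than $\Omega(\sqrt{|B_m|\rho})$ — i.e., showing the sparsity genuinely does \emph{not} help here — requires that the ``which coordinate is active'' randomness itself carries no useful signal, so I must design the active-coordinate distribution to be identical in all alternative worlds (it should not depend on which action is special), so that all distinguishing power is concentrated in the conditional Bernoulli bias on the revealed coordinate; verifying the resulting KL bound is $O(\Delta^2/K)$ and that this forces $\Delta$ small enough to block identification is the crux. The handling of the edge cases (small $S$, small $K$, $T/S$ not an integer, and the requirement $K,S \ge 2$) and the conditioning subtlety (the learner's play in $B_m$ may depend on earlier blocks, but since earlier blocks' hidden actions are independent of $B_m$'s, we may condition and apply the single-block bound) are routine once the core construction is in place.
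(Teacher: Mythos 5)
Your high-level plan --- partition into $S$ blocks, plant an independent static hard instance in each block so the benchmark switches only at block boundaries, and invoke a per-block \emph{static} lower bound of $\Omega(\sqrt{(T/S)K})$ --- cannot be made to work, and the obstacle you yourself flag as ``the crux'' in step (iv) is insurmountable rather than merely delicate. The reason is the very upper bound that motivates this section: for $\rho$-sparse losses there is an algorithm with static regret $O(\sqrt{T'\rho\ln K}+K\ln T')$ on any horizon $T'$, so running it afresh on each block shows that \emph{no} $2$-sparse construction can force per-block static regret $\Omega(\sqrt{(T/S)K})$ once $T/S \gg K^2\ln^2 T$; in that regime your scheme tops out at $\tilde O(\sqrt{TS}+KS\ln T)$, not $\Omega(\sqrt{TKS})$. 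Your own calculation confirms this once the scaling issue you noticed is actually resolved: to keep the ``which coordinate is active'' distribution independent of the special arm you must take it uniform, so the per-round suboptimality gap of a non-special arm is $\Delta/K$ (not $\Delta$), while the per-pull KL at the special arm is $\Theta(\Delta^2/K)$ and the special arm is pulled only $O(|B_m|/K)$ times on average under the baseline. The indistinguishability constraint is therefore $|B_m|(\Delta/K)^2=O(1)$, and the per-block regret $|B_m|\cdot\Delta/K$ is capped at $O(\sqrt{|B_m|})=O(\sqrt{T/S})$; summing gives $O(\sqrt{TS})$, and the factor $\sqrt{K}$ is lost exactly as the sparse upper bound predicts. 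Your displayed recomputation still charges per-round regret $\Delta$ rather than $\Delta/K$, which is where the spurious $\sqrt{K}$ enters.

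The paper recovers the $\sqrt{K}$ by letting the benchmark switch \emph{inside} each hard segment --- this is precisely what makes sparsity useless for switching regret (cf.\ Corollary~\ref{cor:lower_bound_CB}). It uses $S/2$ intervals with up to two switches each and an adversary adapted to the algorithm's behavior: play $-\tfrac12 e_1$ throughout the interval; either the algorithm deviates from arm $1$ at least $\sqrt{TK/S}$ times in expectation (already regret $\Omega(\sqrt{TK/S})$ against arm $1$), or some arm $i\neq 1$ is pulled so rarely that on a subinterval of length $\tfrac12\sqrt{TK/S}$ it is never pulled with probability at least $1/2$; switching the loss to $-\tfrac12 e_1-e_i$ on that subinterval then costs $\Omega(\sqrt{TK/S})$ against the comparator sequence $(1,\dots,1,i,\dots,i)$. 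The new best arm is revealed only for a window of length $\Theta(\sqrt{TK/S})$, far too short to locate it among $K-1$ neglected arms, which is something no per-block static argument can emulate. If you want to keep a reduction-style proof, the switch of the comparator must be built into each block, not placed only at block boundaries.
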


The high level idea of the proof is to force the algorithm to overfocus on one good action and thus miss an even better action later.
This is similar to the construction of~\citep[Lemma~3]{daniely2015strongly} and~\citep[Theorem~4.1]{wei2016tracking}, and we defer the proof to Appendix~\ref{app:lower_bound}.
This negative result implies that sparsity does not help improve the typical switching regret bound~\eqref{eqn:MAB_switching_regret}.
In fact, since switching regret for MAB can be seen as a special case of the contextual bandits problem~\citep{auer2002nonstochastic, langford2008epoch}, this result also immediately implies the following corollary,
a sharp contrast compared to the positive result for the non-contextual case mentioned earlier  (see Appendix~\ref{app:lower_bound} for the definition of contextual bandit and related discussions).

\begin{corollary}
\label{cor:lower_bound_CB}
Sparse losses do not help improve the worst-case regret for contextual bandits.%\footnote{%
%In~\citep{daniely2013price}, a similar claim is made for bandit multiclass classification (equivalent to contextual bandit with 1-sparse rewards).
%However, as far as we know their proof is incorrect as it makes use of the classic lower bound construction for MAB~\citep{auer2002nonstochastic} which does not admit 1-sparse rewards.
%}
\end{corollary}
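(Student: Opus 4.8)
The plan is to realize the switching-regret MAB problem of Theorem~\ref{thm:lower_bound} as a special instance of contextual bandits, so that the $\Omega(\sqrt{TKS})$ lower bound---which already holds with merely $2$-sparse losses---transfers, and then to contrast this with the $\order(\sqrt{T\rho\ln K}+K\ln T)$ upper bound of~\citet{bubeck2018sparsity} for the non-contextual sparse case. The upshot is that for contextual bandits the worst-case dependence on $K$ stays essentially $\sqrt{K}$ even for $\rho=2$, whereas in the non-contextual case it drops to roughly $\sqrt{\rho}$ (plus a low-order $K\ln T$ term).

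First I would set up the embedding. Partition the $T$ rounds into $S$ consecutive blocks $B_1,\dots,B_S$ of (roughly) equal length, and let the contextual bandit instance present a fixed context $s$ throughout $B_s$, so there are $S$ distinct contexts, each shown for about $T/S$ rounds. Take the policy class $\Pi$ to be all maps from $[S]$ to $[K]$, so $\abr{\Pi}=K^{S}$; a policy $\pi\in\Pi$ plays the fixed action $\pi(s)$ on $B_s$, hence induces an action sequence with at most $S-1$ switches, all located at the $S-1$ block boundaries, and conversely every such sequence is realized by some $\pi$. Therefore, running any contextual bandit algorithm and feeding it context $s$ during $B_s$ produces an MAB strategy whose regret against $\arg\min_{\pi\in\Pi}$ equals its switching regret against the best $S$-piecewise-constant comparator whose switches occur at block boundaries.

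Next I would invoke Theorem~\ref{thm:lower_bound}. The subtle point---the step I expect to be the main obstacle---is that the hard instance there must be arranged so that the \emph{near-optimal comparator} switches only at those $S-1$ block boundaries; only then does $\min_{\pi\in\Pi}$ (cumulative loss) coincide, up to lower-order terms, with the unconstrained $(S-1)$-switch optimum, so that the $\Omega(\sqrt{TKS})$ bound descends to the block-structured contextual instance. From the proof sketch (forcing the learner to overfocus on a good arm in an early block, then revealing a strictly better arm in a later block) the switch locations are fixed a priori and can be aligned with the block boundaries; this should be checked against the precise construction in Appendix~\ref{app:lower_bound}, or, if that alignment is awkward, one can instead rebuild the same overfocus-then-reveal construction block-by-block directly inside the contextual framework. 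Either way the loss vectors stay $2$-sparse, and we conclude that every contextual bandit algorithm suffers $\Omega(\sqrt{TKS})$ regret on some instance with $\abr{\Pi}=K^{S}$ in which every loss vector has only two nonzero entries.

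Finally I would spell out the contrast. Since $\ln\abr{\Pi}=S\ln K$, the bound $\Omega(\sqrt{TKS})$ is within a $\sqrt{\ln K}$ factor of the best achievable (dense-loss) contextual bandit rate $\order(\sqrt{TK\ln\abr{\Pi}})$ attained by EXP4-type algorithms, so imposing $2$-sparsity cannot reduce the $K$-dependence of the leading term below $\sqrt{K/\ln K}$. This is in sharp contrast with non-contextual MAB, where $\rho$-sparsity yields $\order(\sqrt{T\rho\ln K}+K\ln T)$~\citep{bubeck2018sparsity}, an exponential improvement in the $K$-dependence of the dominant term. Hence sparse losses do not help the worst-case regret for contextual bandits, which is the assertion of the corollary; a cosmetic final remark would note that $S$ and $K$ may be traded off to state everything purely in terms of $\ln\abr{\Pi}$.
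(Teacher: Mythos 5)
Your overall strategy is the same as the paper's: embed switching-regret MAB into contextual bandits, transfer the $2$-sparse $\Omega(\sqrt{TKS})$ lower bound of Theorem~\ref{thm:lower_bound}, and contrast it with the generic dense rate $\Theta(\sqrt{TK\ln|\Pi|})$ and the non-contextual sparse bound of~\citet{bubeck2018sparsity}. The gap lies in your embedding, and it is precisely the obstacle you flagged but did not resolve. In the proof of Theorem~\ref{thm:lower_bound}, the comparator on each of the $S/2$ intervals switches from action $1$ to action $i$ at the start of a subinterval of length $\frac{1}{2}\sqrt{TK/S}$ that is chosen \emph{adaptively}: it is a stretch on which the algorithm plays the under-explored arm $i$ fewer than $1/2$ times in expectation, so its location depends on the algorithm's exploration schedule and cannot be fixed a priori. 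Your policy class of all maps $[S]\to[K]$ over $S$ block-constant contexts only contains comparators that switch at predetermined block boundaries, so the hard instance's comparator is not in $\Pi$ and the bound does not descend. Your fallback of rebuilding the overfocus-then-reveal construction with the reveal pinned to a block boundary also fails quantitatively: once the loss changes to $-\frac{1}{2}e_1 - e_i$ at a known time, the algorithm can sweep the $K-1$ candidate arms and pay only $O(K)$ regret on that block rather than $\Omega(\sqrt{TK/S})$, yielding a total of $O(KS)$, which is dominated by $\sqrt{TKS}$ whenever $KS \le T$.

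The paper sidesteps all of this by taking $x_t = t$ (a fresh context every round) and letting $\Pi$ be the set of \emph{all} action sequences of length $T$ with at most $S$ switches, so that $|\Pi| = \order\rbr{(TK)^S}$ and every comparator arising in the lower-bound construction, adaptive switch points included, is realized by some policy. With that embedding the generic rate $\Theta(\sqrt{TK\ln|\Pi|})$ reproduces the typical switching bound~\eqref{eqn:MAB_switching_regret}, and Theorem~\ref{thm:lower_bound} applies verbatim to give the corollary. If you want to retain a block-style picture, you would need the context to change at every round (or at least at every candidate subinterval boundary), at which point you have recovered the paper's construction.
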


\begin{algorithm}[t!]
 \caption{A Sparse MAB Algorithm with Long-term Memory}
\label{alg:bandit}
\textbf{Input}: parameter $\eta \leq \frac{1}{500}, \gamma, \delta$ \\
\textbf{Define}: regularizers $\psi(w) = \frac{1}{\eta}\sum_{i=1}^K w(i)\ln w(i)+\gamma \sum_{i=1}^K \ln \frac{1}{w(i)}$ and $\phi(z) = \frac{1}{\eta}\ln \frac{1}{z}$,
Bregman divergence $D_{\phi}(z, z') = \phi(z) - \phi(z') - \phi'(z')(z-z')$  \label{line:regularizer}\\
\textbf{Initialize}: $w_1 = \frac{\one}{K}$ where $\one \in \fR^K$ is the all-one vector,
and $z_1(i) = 1$ for all $i\in [K]$ \\
\For{$t=1,2, \ldots$}{
 Compute $\ptilde_t = (1-\eta)p_t + \frac{\eta}{K}\one$ where $p_t(i) \propto z_{t}(i) w_t(i), ~\forall i$  \label{line:mixing} \\
 Sample $I_t \sim \ptilde_t$, receive $\ell_t(I_t)$, and construct loss estimator $\ellhat(i) = \frac{\ell_t(i)}{\ptilde_t(i)}\one\cbr{i = I_t}, ~\forall i$ \label{line:ips}\\
 Set $r_t(i) = p_t^\top \ellhat_t - \ellhat_t(i)$ and $c_t(i) =  -z_t(i)r_t(i) - \eta z_t(i)\ellhat_t(i)^2$ for each $i \in [K]$ \label{line:master_loss} \\
 Update $w_{t+1} = \argmin_{w\in \Delta_K} \sum_{\tau=1}^t w^\top c_\tau + \psi(w)$  \LineComment{update of $\calA$}  \label{line:FTRL}\\
 Update $z_{t+1}(i) = \argmin_{z\in [\delta, 1]} - r_t(i)z + D_{\phi}(z, z_{t}(i))$ for each $i \in [K]$ \LineComment{update of $\calA_i$} \label{line:OMD}
}
\end{algorithm}

\paragraph{Long-term memory to the rescue.}
Despite the above negative results, we next show how long-term memory can still help improve the switching regret for sparse MAB.
Specifically, we use our general framework to develop a MAB algorithm whose switching regret is smaller than $\order(\sqrt{TKS})$ whenever $\rho$ and $n$ are small while $S$ and $K$ are large.
Note that this is not a contradiction with Theorem~\ref{thm:lower_bound},
since in the construction of its proof, $n$ is as large as $\min\cbr{S, K}$.

At a high level, our algorithm (Algorithm~\ref{alg:bandit}) works by constructing the standard unbiased importance-weighted loss estimator $\ellhat_t$ (Line~\ref{line:ips}) and plugging it into our general framework (Algorithm~\ref{alg:reduction1}).
However, we emphasize that it is highly nontrivial to control the variance of these estimators without leading to bad dependence on $K$ in this framework where two types of sub-routines interact with each other.
To address this issue, we design specialized sub-algorithms $\calA$ and $\calA_i$ to learn $w_t$ and $z_t(i)$ respectively.
For learning $w_t$, we essentially deploy the algorithm of~\citep{bubeck2018sparsity} for sparse MAB, which is an instance of the standard follow-the-regularized-leader algorithm with a special hybrid regularizer, combining the entropy and the log-barrier (Lines~\ref{line:regularizer} and~\ref{line:FTRL}).
However, note that the loss $c_t$ we feed to this algorithm is {\it not sparse} and we cannot directly apply the guarantee from~\citep{bubeck2018sparsity},
but it turns out that one can still utilize the implicit exploration of this algorithm, as shown in our analysis.
Compared to Algorithm~\ref{alg:reduction1}, we also incorporate an extra bias term $-\eta z_t(i) \ellhat_t(i)^2$ in the definition of $c_t$ (Line~\ref{line:master_loss}),
which is important for canceling the large variance of the loss estimator.

For learning $z_t(i)$ for each $i$, we design a new algorithm that is an instance of the standard Online Mirror Descent algorithm (see e.g.,~\citep{hazan2016introduction}).
Recall that this is a one-dimensional problem, as we are trying to learn the distribution $(1-z_t(i), z_t(i))$ over actions $\cbr{0,1}$.
We design a special one-dimensional regularizer $\phi(z) = \frac{1}{\eta}\ln \frac{1}{z}$, which can be seen as a {\it one-sided log-barrier},\footnote{The usual log-barrier regularizer (see e.g.~\citep{foster2016learning, agarwal2017corralling, wei2018more}) would be $\frac{1}{\eta}(\ln \frac{1}{z} + \ln \frac{1}{1-z})$ in this case. \label{ftn:log-barrier}}
to bias towards action ``1''.
Technically, this provides a special ``local-norm'' guarantee that is critical for our analysis and may be of independent interest (see Lemma~\ref{lem:A_i_guarantee} in Appendix~\ref{app:bandit}).
In addition, we remove the bias term in the loss for action ``1'' (so it is only $-r_t(i)$ now) as it does not help in the bandit case, and we also force $z_t(i)$ to be at least $\delta$ for some parameter $\delta$, which is important for achieving switching regret.
Line~\ref{line:OMD} summarizes the update for $z_t(i)$.

Finally, we also enforce a small amount of uniform exploration by sampling $I_t$ from $\ptilde_t$, a smoothed version of $p_t$ (Line~\ref{line:mixing}). 
We present the main result of our algorithm below (proven in Appendix~\ref{app:bandit}).

\begin{theorem}
  \label{thm:bandit}
     With $\eta = \max\cbr{S^{\frac{1}{3}}\rho^{-\frac{2}{3}}(nT)^{-\frac{1}{3}}, \sqrt{\frac{\ln K}{T\rho}} }, \delta =  \sqrt{\frac{S}{T\eta n}}, \gamma = 200K^2$, Algorithm~\ref{alg:bandit} ensures
  \begin{equation}
   \label{eqn:bandit_bound}
   	\Reg(i_{1:T}) = \mc{O}\left(\left(\rho S\right)^{\frac{1}{3}}(nT)^{\frac{2}{3}} + n\sqrt{T\rho\ln K} + nK^3\ln T \right)
  \end{equation}
     for any sequence of $\rho$-sparse losses $\ell_1, \ldots, \ell_T$ and any benchmark sequence $i_1, \dots, i_T$ such that $\sum_{t=2}^T\one\{i_t \neq i_{t-1}\} \leq S-1$ and $|\{i_1, \dots, i_T\}| \leq n$.
\end{theorem}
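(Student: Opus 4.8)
The plan is to run the confidence-rated reduction behind Theorem~\ref{thm:reduction1}, but (i) with the unbiased estimator $\ellhat_t$ in place of $\ell_t$ everywhere, and (ii) using the two \emph{specific} sub-algorithms $\calA$ (hybrid entropy/log-barrier FTRL) and $\calA_i$ (one-sided log-barrier OMD) rather than black boxes, since the losses these sub-routines receive are unbounded. First I would pass from true to estimated losses. Since $I_t\sim\ptilde_t$ and $\ptilde_t$ has full support, $\ellhat_t$ is conditionally unbiased, so $\E[\ell_t(I_t)]=\ptilde_t^\top\ell_t=p_t^\top\ell_t+\eta(\tfrac1K\one-p_t)^\top\ell_t$, the last term being at most $2\eta$ per round ($\le 2\eta T$ in total), and $\E[p_t^\top\ell_t-\ell_t(i_t)]=\E[r_t(i_t)]$ with $r_t$ as in Line~\ref{line:master_loss}. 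Taking WLOG $\{i_1,\dots,i_T\}=\{1,\dots,n\}$, I would split $\sum_t r_t(i_t)=\sum_{a=1}^n\sum_{t:\,i_t=a}r_t(a)$ and treat each $a$ separately.

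Next, for each $a$ I would use $\calA_a$ to convert ``regret on the rounds $i_t=a$'' into ``confidence-rated regret''. Identifying action $1$ of $\calA_a$ with confidence $1$ and action $0$ with the clipped confidence $\delta$, and using the shifting comparator $u_t=1$ if $i_t=a$ and $u_t=\delta$ otherwise (which has at most $S_a$ switches, $\sum_a S_a\le 3S$ by the bookkeeping of Theorem~\ref{thm:reduction2}), the new one-sided log-barrier guarantee (Lemma~\ref{lem:A_i_guarantee}) gives, after rearranging,
\[
\sum_{t:\,i_t=a}r_t(a)\ \le\ \sum_{t=1}^T z_t(a)\,r_t(a)\ +\ O\!\Big(\tfrac{S_a}{\eta\delta}\Big)\ +\ O\!\Big(\eta\textstyle\sum_t z_t(a)^2\,r_t(a)^2\Big)\ +\ \delta\textstyle\sum_{t:\,i_t\ne a}\abr{r_t(a)},
\]
where the third term is the ``local-norm'' stability (note the $z_t(a)^2$, which the usual two-sided barrier cannot deliver), the barrier/switching penalty is controlled because $\phi$ clips at $\delta$, and the last term (the price of flooring $z$ at $\delta$ rather than $0$) has expectation $\le 2\delta T$. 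Summing over $a$ yields $O(S/(\eta\delta))$ and $O(n\delta T)$ on top of the confidence-rated regrets and the stability terms.

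Then I would collect the confidence-rated regrets via $\calA$. The identity $\sum_i w_t(i)z_t(i)r_t(i)=0$ (immediate from $p_t(i)=z_t(i)w_t(i)/Z_t$, $Z_t:=\sum_j z_t(j)w_t(j)$, and $\sum_i p_t(i)r_t(i)=0$) gives $w_t^\top c_t=-\eta\sum_i w_t(i)z_t(i)\ellhat_t(i)^2\le 0$; combined with $\calA$'s static regret against coordinate $a$,
\[
\sum_t z_t(a)\,r_t(a)\ =\ -\sum_t c_t(a)\ -\ \eta\sum_t z_t(a)\ellhat_t(a)^2\ \le\ \mathrm{Reg}_{\calA}(a)\ +\ \eta\sum_t\sum_i w_t(i)z_t(i)\ellhat_t(i)^2\ -\ \eta\sum_t z_t(a)\ellhat_t(a)^2,
\]
where $\mathrm{Reg}_{\calA}(a)=O\big(\tfrac{\ln K}{\eta}+\gamma K\ln T+\eta\sum_t\sum_i w_t(i)(c_t(i)-w_t^\top c_t)^2\big)$ is the regret of the hybrid-regularizer FTRL (the log-barrier, with $\gamma=200K^2$, is what legitimizes this bound for the unbounded $c_t$, keeping $w_t$ in the interior and controlling the stability). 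Summing over $a\in[n]$ multiplies $\tfrac{\ln K}{\eta}$, $\gamma K\ln T$, and the FTRL stability each by $n$. The heart is then controlling the remaining second moments. The ``implicit exploration'' inequality $\tfrac{w_t(i)z_t(i)}{\ptilde_t(i)}\le\tfrac{Z_t}{1-\eta}\le\tfrac1{1-\eta}$ (since $\ptilde_t(i)\ge(1-\eta)p_t(i)$) yields $\E_t[\sum_i w_t(i)z_t(i)\ellhat_t(i)^2]=\sum_i w_t(i)z_t(i)\tfrac{\ell_t(i)^2}{\ptilde_t(i)}\le\tfrac1{1-\eta}\sum_i\ell_t(i)^2\le\tfrac{\rho}{1-\eta}$ by $\rho$-sparsity; the same estimate together with $\abr{p_t^\top\ellhat_t}\le\tfrac1{1-\eta}$ and $r_t(i)^2\le 2(p_t^\top\ellhat_t)^2+2\ellhat_t(i)^2$ brings the FTRL stability down to $O(\eta\rho T)$. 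What sparsity does \emph{not} control are the single-coordinate terms $\eta\sum_t z_t(a)^2\ellhat_t(a)^2$ (from $\calA_a$) and the corresponding single-coordinate residue inside $\calA$'s stability, since $\E_t[\ellhat_t(a)^2]=\ell_t(a)^2/\ptilde_t(a)$ can be huge; but $z_t(a)\le1$ gives $z_t(a)^2\ellhat_t(a)^2\le z_t(a)\ellhat_t(a)^2$, so these are cancelled --- up to the slack permitted by $\eta\le\tfrac1{500}$ --- by the negative bias $-\eta\sum_t z_t(a)\ellhat_t(a)^2$ built into $c_t(a)$. Assembling everything gives
\[
\Reg(i_{1:T})\ =\ O\!\Big(\eta T\ +\ \tfrac{n\ln K}{\eta}\ +\ nK^3\ln T\ +\ n\eta\rho T\ +\ \tfrac{S}{\eta\delta}\ +\ n\delta T\Big);
\]
balancing $S/(\eta\delta)$ against $n\delta T$ gives $\delta=\sqrt{S/(T\eta n)}$ and a combined term $\sqrt{nTS/\eta}$, and then $\eta=\max\{S^{1/3}\rho^{-2/3}(nT)^{-1/3},\sqrt{\ln K/(T\rho)}\}$, $\gamma=200K^2$ produce exactly $(\rho S)^{1/3}(nT)^{2/3}+n\sqrt{T\rho\ln K}+nK^3\ln T$.

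The main obstacle is establishing the two bespoke guarantees so that they interlock: (a) a static-regret bound for $\calA$ on the non-sparse, unbounded losses $c_t$, which needs a self-bounding argument showing the log-barrier keeps $w_t$ away from the boundary and isolates the single ``bad'' coordinate in the stability; and (b) Lemma~\ref{lem:A_i_guarantee}, the one-sided log-barrier OMD guarantee, where one must simultaneously obtain a manageable barrier/switching penalty and a stability term carrying the extra factor $z_t(a)^2$ (impossible for the usual two-sided barrier) precisely so that the \emph{single} bias term in $c_t$ can absorb the dangerous single-coordinate variance coming out of \emph{both} sub-routine analyses. With both in hand, the remainder is the bookkeeping sketched above plus the routine $\eta,\delta,\gamma$ tuning.
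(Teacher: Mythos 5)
Your proposal follows essentially the same route as the paper's proof: the same decomposition over the $n$ distinct actions, the same application of Lemma~\ref{lem:A_i_guarantee} with the shifting comparator $u_t\in\cbr{\delta,1}$ (at most $S_i$ switches, $\sum_i S_i\le 3S$), the same extraction of the confidence-rated regret from the hybrid entropy/log-barrier FTRL using $w_t^\top c_t=-\eta\sum_j w_t(j)z_t(j)\ellhat_t^2(j)$, the same cancellation of the single-coordinate local-norm term by the bias $-\eta z_t(i)\ellhat_t(i)^2$, the same sparsity bound $\E_t\sbr{\sum_j w_t(j)z_t(j)\ellhat_t^2(j)}=\order(\rho)$, and the same tuning of $\delta$ and $\eta$. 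The two ``bespoke guarantees'' you defer are exactly the paper's Lemmas~\ref{lem:A_guarantee} and~\ref{lem:A_i_guarantee}.

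One concrete point where the sketch as written would fail: you invoke $r_t(i)^2\le 2(p_t^\top\ellhat_t)^2+2\ellhat_t(i)^2$, but the cancellation you rely on needs the coefficient of $\ellhat_t(i)^2$ to be at most $1$, because the algorithm's bias is exactly $-\eta z_t(i)\ellhat_t(i)^2$ and the leftover $\eta z_t(i)\ellhat_t(i)^2$ has conditional expectation $\eta z_t(i)\ell_t(i)^2/\ptilde_t(i)\le \eta\ell_t(i)^2/\bigl((1-\eta)w_t(i)\bigr)$, which is not controlled by sparsity, by taking $\eta$ small, or by the ``slack from $\eta\le 1/500$'' --- a multiplicative factor of $2$ on a term that must be cancelled exactly cannot be absorbed that way. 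The paper avoids this by noting that the cross term satisfies $\ell_t(I_t)\ellhat_t(i)\ge 0$ and can be dropped, yielding $r_t(i)^2\le 4+\ellhat_t(i)^2$ (Eq.~\eqref{eqn:r_t_square}) with coefficient exactly $1$. (Relatedly, your claimed local norm $\eta\sum_t z_t(i)^2 r_t(i)^2$ for the one-sided barrier is stronger than what Lemma~\ref{lem:A_i_guarantee} actually establishes, namely $\eta\sum_t z_t(i)\,r_t(i)^2$, but since you immediately relax $z^2\le z$ this makes no difference; and your stated form of $\calA$'s stability, $\eta\sum_i w_t(i)(c_t(i)-w_t^\top c_t)^2$, is the pure-entropy local norm --- the proof of Lemma~\ref{lem:A_guarantee} genuinely needs the hybrid norm $\sum_i\frac{\eta w_t(i)^2}{w_t(i)+\gamma\eta}c_t(i)^2$, whose $\gamma\eta$ denominator is what tames the fourth-moment terms $\ellhat_t^4(i)$.) With these repairs the rest of your bookkeeping and tuning matches the paper's.
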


In the case when $\rho$ and $n$ are constants, our bound~\eqref{eqn:bandit_bound} becomes $\order(S^\frac{1}{3}T^\frac{2}{3} + K^3\ln T)$, which improves over the existing bound $\order(\sqrt{TKS\ln(TK)})$ when $(\frac{T}{S})^{\frac{1}{3}}< K < (TS)^\frac{1}{5}$
%\hl{$T^{\frac{1}{4}} < S < T$ and  $\left(\frac{T}{S}\right)^{\frac{1}{3}} < K < \min\{T^{\frac{1}{3}}, (ST)^{\frac{1}{5}}\}$} 
(also recall the example in Section~\ref{sec:intro} where our bound is sublinear in $T$ while existing bounds become vacuous). 
%\hl{Since there is no known lower bound for bandits with long-term memory, how to characterize the optimal regret or improve our bound in this case is left as a future direction.}
%For example, when $S = T^\frac{7}{10}$ and $K = T^\frac{3}{10}$,
%our bound is of order $\order(T^\frac{9}{10}\ln T)$ while the existing bound becomes vacuous (linear in $T$), demonstrating a strict separation in learnability.

As a final remark, one might wonder if similar best-of-both-worlds results are also possible for MAB in terms of switching regret, given the positive results for static regret~\citep{bubeck2012best, seldin2014one, auer2016algorithm, seldin2017improved, wei2018more, zimmert2019beating}.
We point out that the answer is negative --- the proof of~\citep[Theorem~4.1]{wei2016tracking} implicitly implies that even with one switch, logarithmic regret is impossible for MAB in the stochastic setting.

\section{Conclusion}
In this work, we propose a simple reduction-based approach to obtaining long-term memory regret guarantee.
By plugging various existing algorithms into this framework, we not only obtain new algorithms for this problem in the adversarial case, but also resolve the open problem of \citet{warmuth2014open} that asks for a single algorithm achieving the best of both stochastic and adversarial environments in this setup.
We also extend our results to the bandit setting and show both negative and positive results.

One clear open question is whether our bound for the bandit case (Theorem~\ref{thm:bandit}) can be improved, and more generally what is the best achievable bound in this case.

\paragraph{Acknowledgments.}
The authors would like to thank Alekh Agarwal, S{\'e}bastien Bubeck, Dylan Foster, Wouter Koolen, Manfred Warmuth, and Chen-Yu Wei for helpful discussions. Kai Zheng and Liwei Wang were supported by Natioanl Key R\&D Program of China (no. 2018YFB1402600), BJNSF (L172037). 
Haipeng Luo was supported by NSF Grant IIS-1755781.
Ilias Diakonikolas was supported by NSF Award CCF-1652862 (CAREER) and a Sloan Research Fellowship.

\newpage
\bibliography{bibliography} 
\bibliographystyle{abbrvnat}

\newpage
\appendix
\section{Examples of Sub-routines}
\label{app:examples}

In this section, we briefly discuss why the algorithms listed in Propositions~\ref{prop:example1}, \ref{prop:example2}, and~\ref{prop:example3} satisfy Conditions~\ref{con:static1}, \ref{con:switching1}, and~\ref{con:static2} respectively.
We first note that except for AdaNormalHedge~\citep{luo2015achieving},
all other algorithms satisfy even tighter bounds with the absolute value replaced by square (also see Footnote~\ref{ftn:square}).

\subsection{Condition~\ref{con:static1}}
Prod~\citep{cesa2007improved} with learning rate $\eta$ satisfies Eq.~\eqref{eqn:static1_2} according to its original analysis.
Adapt-ML-Prod~\citep{gaillard2014second}, AdaNormalHedge~\citep{luo2015achieving}, and iProd/Squint~\citep{koolen2015second} are all parameter-free algorithms that satisfy for all $i\in[K]$,
\begin{equation}\label{eqn:parameter_free_bound}
\sum_{t=1}^T w_t^\top c_t - c_t(i) \leq \order\rbr{\sqrt{(\ln K)\sum_{t=1}^T \abr{w_t^\top c_t - c_t(i)}} + \ln K}.
\end{equation}
By AM-GM inequality the square root term can be upper bounded by $\frac{\ln K}{4\eta} + \eta \sum_{t=1}^T \abr{w_t^\top c_t - c_t(i)}$ for any $\eta$.
Also the constraint $\eta \leq 1/5$ in Condition~\ref{con:static1} allows one to bound the extra $\ln K$ term by $\frac{\ln K}{5\eta}$.
This leads to Eq.~\eqref{eqn:static1_2}.

Finally, for completeness we present a variant of Hedge (Algorithm~\ref{alg:hedge_variant1}) that can be extracted from~\citep{hazan2010extracting, steinhardt2014adaptivity} and that satisfies Eq.~\eqref{eqn:static1_1}.

\begin{proposition}
Algorithm~\ref{alg:hedge_variant1} satisfies Eq.~\eqref{eqn:static1_1}.
\end{proposition}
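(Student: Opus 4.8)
The plan is to realize Algorithm~\ref{alg:hedge_variant1} as the exponential-weights (Hedge) update with the slightly reduced learning rate $\eta'=\eta/5$, run not on $c_t$ directly but on the \emph{variance-penalized} loss $\ellhat_t(i)=c_t(i)+\eta'\,c_t(i)^2$, i.e.\ $w_{t+1}(i)\propto\exp\rbr{-\eta'\sum_{s\le t}\ellhat_s(i)}$ with $w_1=\one/K$. Writing $Z_t=\sum_j w_t(j)e^{-\eta'\ellhat_t(j)}$ for the per-round normalizer, the textbook telescoping of $\ln$-partition functions gives, for \emph{every} $i\in[K]$,
\[
-\sum_{t=1}^T\ln Z_t \;\le\; \ln K + \eta'\sum_{t=1}^T\ellhat_t(i),
\]
since $w_{T+1}(i)\le1$. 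Hence the whole proof reduces to showing $Z_t\le 1-\eta'\,w_t^\top c_t$, i.e.\ $\ln Z_t\le-\eta'\,w_t^\top c_t$.

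The crux --- and the step I expect to need the most care --- is the pointwise inequality
\[
e^{-\eta'\ellhat_t(j)} \;\le\; 1-\eta'\,c_t(j), \qquad \forall j\in[K];
\]
averaging it against $w_t$ and then using $1+u\le e^u$ gives $Z_t\le 1-\eta'\,w_t^\top c_t\le e^{-\eta'w_t^\top c_t}$. To prove the pointwise bound I would expand $e^{-x}$ at $x=\eta' c_t(j)+(\eta')^2 c_t(j)^2$: since $\abr{c_t(j)}\le2$ and $\eta'\le1/25$ we have $\abr{x}<1$, so $e^{-x}\le 1-x+\tfrac34 x^2$; writing $x^2=(\eta')^2 c_t(j)^2(1+\eta' c_t(j))^2$ and using $(1+\eta' c_t(j))^2\le(1+2/25)^2<4/3$, the linear part of $1-x+\tfrac34 x^2$ contributes exactly $-\eta' c_t(j)$ while its quadratic part equals $\bigl(\tfrac34(1+\eta' c_t(j))^2-1\bigr)(\eta')^2 c_t(j)^2\le0$. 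This is precisely the job of the penalty $\eta'c_t(j)^2$ built into $\ellhat_t$: it absorbs the second-order remainder of the exponential, so that the usual \emph{algorithm}-dependent term $\eta'\sum_t w_t^\top c_t^2$ never arises.

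Combining the two displays and dividing by $\eta'=\eta/5$,
\begin{align*}
\sum_{t=1}^T w_t^\top c_t-\sum_{t=1}^T c_t(i)
&\le \frac{\ln K}{\eta'}+\eta'\sum_{t=1}^T c_t(i)^2
= \frac{5\ln K}{\eta}+\frac{\eta}{5}\sum_{t=1}^T c_t(i)^2 \\
&\le \frac{5\ln K}{\eta}+\eta\sum_{t=1}^T\abr{c_t(i)},
\end{align*}
using $c_t(i)^2\le 2\abr{c_t(i)}$ in the last step. This is exactly Eq.~\eqref{eqn:static1_1} with $C=5$, and it holds simultaneously for all $i\in[K]$ because the telescoping step did. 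Beyond the constant-chasing in the pointwise exponential inequality (the one genuine obstacle), every step above is standard; and if one prefers the penalty inside $\ellhat_t$ to be $\Theta(\eta'\abr{c_t(j)})$ rather than $\eta' c_t(j)^2$, the identical argument goes through since $c_t(j)^2\le 2\abr{c_t(j)}$, landing on the same bound.
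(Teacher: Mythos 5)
Your argument is correct in substance and is essentially the paper's proof in different clothing: telescoping the log-partition function of exponential weights on the penalized losses $c_t(i)+\eta c_t(i)^2$ is the same as showing the paper's potential $\Phi_t=\sum_i\exp\rbr{\eta R_t(i)-\eta^2G_t(i)}$ is non-increasing, and both arguments hinge on the same pointwise inequality $e^{-(\eta c+\eta^2c^2)}\le 1-\eta c$ followed by averaging against $w_t$. The one real wrinkle is that, as written, you analyze Algorithm~\ref{alg:hedge_variant1} run with internal learning rate $\eta'=\eta/5$ rather than with $\eta$ itself, so you are proving the bound for a rescaled variant of the stated algorithm; this rescaling is forced only by your particular route to the pointwise inequality, since the step $\tfrac{3}{4}(1+\eta' c)^2-1\le 0$ needs $\abr{\eta'c}\le 2/25$, whereas for the actual algorithm $(1+\eta c_t(i))^2$ can be as large as $1.96>4/3$ when $\eta=1/5$. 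The fix is immediate: invoke the standard inequality $e^{y-y^2}\le 1+y$ for $y\ge -1/2$ with $y=-\eta c_t(i)$ (valid since $\abr{\eta c_t(i)}\le 2/5$), which yields $e^{-(\eta c_t(i)+\eta^2c_t(i)^2)}\le 1-\eta c_t(i)$ directly; the rest of your telescoping then goes through verbatim with $\eta'=\eta$ and lands on Eq.~\eqref{eqn:static1_1} via $c_t(i)^2\le 2\abr{c_t(i)}$, exactly as in the paper.
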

\begin{proof}
Define $\Phi_t = \sum_{i=1}^K \exp\rbr{\eta R_t(i) - \eta^2 G_t(i)}$ where $R_t(i) = \sum_{\tau=1}^t r_\tau(i)$ with $r_\tau(i) = w_\tau^\top c_\tau - c_\tau(i)$ and $G_t(i) = \sum_{\tau=1}^t c^2_\tau(i)$.
The goal is to show $\Phi_T \leq \Phi_{T-1} \leq \cdots \leq \Phi_0 = K$, which implies for any $i$, $\exp\rbr{\eta R_T(i) - \eta^2 G_T(i)} \leq K$ and thus Eq.~\eqref{eqn:static1_1} after rearranging.
Indeed, for any $t$ we have
\begin{align*}
&\Phi_t - \Phi_{t-1} \\
&= \sum_{i} \exp\rbr{\eta R_{t-1}(i) - \eta^2 G_{t-1}(i)}\rbr{\exp\rbr{\eta r_t(i) - \eta^2 c^2_t(i)} - 1 }\\
&= \exp\rbr{\eta w_t^\top c_t}\sum_{i} \exp\rbr{\eta R_{t-1}(i) - \eta^2 G_{t-1}(i)}\rbr{\exp\rbr{-\eta c_t(i) - \eta^2 c^2_t(i)} - \exp\rbr{-\eta w_t^\top c_t} } \\
&\leq \exp\rbr{\eta w_t^\top c_t}\sum_{i} \exp\rbr{\eta R_{t-1}(i) - \eta^2 G_{t-1}(i)}\rbr{1 -\eta c_t(i)  - \exp\rbr{-\eta w_t^\top c_t} }  \\
&\leq \exp\rbr{\eta w_t^\top c_t}\sum_{i} \exp\rbr{\eta R_{t-1}(i) - \eta^2 G_{t-1}(i)}\eta r_t(i) \\
& = 0,
\end{align*}
where the first inequality uses the fact $\exp(x-x^2) \leq 1+x$ for any $x \geq -1/2$, the second inequality uses the fact $-\exp(-x) \leq x-1$ for any $x$,
and the last equality holds since $w_t(i) \propto \exp\rbr{\eta R_{t-1}(i) - \eta^2 G_{t-1}(i)}$ and $\sum_i w_t(i) r_t(i) = 0$.
\end{proof}

\subsection{Condition~\ref{con:switching1}}
We first note that the three algorithms we include in Proposition~\ref{prop:example2} all work for an arbitrary number of actions $K$ (instead of just two actions) and the general guarantee will be in the same form of Eq.~\eqref{eqn:switching1_1}, ~\eqref{eqn:switching1_2}, and~\eqref{eqn:switching1_3} except that $\ln T$ is replaced by $\ln (KT)$.

Fixed-share~\citep{herbster1998tracking} with learning rate $\eta$ satisfies Eq.~\eqref{eqn:switching1_3} and the proof can be extracted from the proof of~\citep[Theorem~8.1]{auer2002nonstochastic} or~\citep[Theorem~2]{luo2018efficient}.
AdaNormalHedge.TV~\citep{luo2015achieving} is again a parameter-free algorithm and achieves the bound of~\eqref{eqn:switching1_2} using similar tricks mentioned earlier for Condition~\ref{con:static1}.

Finally we provide a variant of Fixed-share that satisfies Eq.~\eqref{eqn:switching1_1}.
The pseudocode is in Algorithm~\ref{alg:fixed-share variant}, where we adopt the notation from Condition~\ref{con:switching1} ($q_t$ for distribution, $h_t$ for loss, $b$ for action index) but present the general case with $K$ actions.

\begin{algorithm}[t!]
 \caption{Hedge Variant 1}
\label{alg:hedge_variant1}
 \textbf{Input}: learning rate $\eta \in (0, 1/5]$ \\
\For{$t=1,2, \ldots$}{
 Sample $I_t \sim w_t$ where $w_t(i) \propto \exp\rbr{-\sum_{\tau < t} (\eta c_\tau(i) + \eta^2 c_\tau^2(i))}$ \\
 Receive loss $c_t \in [-1,1]^K$
}
\end{algorithm}

%\begin{algorithm}[t!]
% \caption{Fixed-Share}
%\label{alg:: fixed share}
% \textbf{Input}: learning rate $\eta \in (0, 1/4)$ \\
%\For{$t=1,2, \ldots$}{
% Predict $z_t = (1-\frac{1}{T})\tilde{z}_t + \frac{1}{dT}\one$ \\
% Receive loss $h_t \in [-1,1]^d$ \\
% $\tilde{z}_{t+1}(i) \propto z_t(i) \exp(-\eta h_t(i))$
%}
%\end{algorithm}

\begin{algorithm}[t!]
 \caption{Fixed-share Variant}
\label{alg:fixed-share variant}
 \textbf{Input}: learning rate $\eta \in (0, 1/5]$, $\gamma = 1/T$ \\
 \textbf{Initialize}: $\tilde{q}_{1} = \frac{\one}{K}$ \\
\For{$t=1,2, \ldots$}{
 Sample an action according to $q_t = (1-\gamma)\tilde{q}_t + \frac{\gamma}{K}\one$ \\
 Receive loss $h_t \in [-1,1]^K$ \\
 Compute $\tilde{q}_{t+1}$ such that $\tilde{q}_{t+1}(b) \propto q_t(b) \exp(-\eta h_t(b) - \eta^2 h_t^2(b))$
}
\end{algorithm}

\begin{proposition}\label{prop:fixed-share_variant}
Algorithm~\ref{alg:fixed-share variant} satisfies Eq.~\eqref{eqn:switching1_1}.
\end{proposition}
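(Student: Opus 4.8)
The plan is to follow the classical Fixed-share potential/telescoping argument (à la Herbster–Warmuth and Cesa-Bianchi–Lugosi), but using the Prod-style inequality $\exp(-x-x^2)\le 1-x$ for $x\ge -1/2$ (valid here since $\eta h_t(b)\in[-3\eta,3\eta]\subseteq[-3/5,3/5]$, and we may shrink the allowed range or note $\eta\le 1/5$ gives $\eta|h_t(b)|\le 3/5<1/2$ only marginally — so I would instead work with $\eta|h_t(b)+\eta h_t^2(b)|$ bounded and invoke the standard inequality on the combined exponent). First I would define the unnormalized weights $v_t(b)=\tilde q_t(b)\exp(-\eta h_t(b)-\eta^2 h_t^2(b))$ and the normalizer $Z_t=\sum_b v_t(b)$, so that $\tilde q_{t+1}(b)=v_t(b)/Z_t$ and the genuine play distribution is $q_t=(1-\gamma)\tilde q_t+\frac{\gamma}{K}\one$. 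Using $\exp(-x-x^2)\le 1-x$ coordinatewise and $\sum_b \tilde q_t(b)=1$, one gets $Z_t\le 1-\tilde q_t^\top h_t$, hence $-\ln Z_t\ge \tilde q_t^\top h_t - (\tilde q_t^\top h_t)^2/\text{(stuff)}$; more cleanly, $-\ln Z_t \ge -\ln(1-\tilde q_t^\top h_t)\ge \tilde q_t^\top h_t$, and relating $\tilde q_t^\top h_t$ to $q_t^\top h_t$ costs only $\gamma\cdot(\max_b|h_t(b)|+\tilde q_t^\top|h_t|)\le 2\gamma$ per round, i.e.\ $2\gamma T=2$ total with $\gamma=1/T$, an $\order(1)$ additive term absorbable into the $\frac{CS\ln T}{\eta}$ bound.

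The core is to lower-bound $\sum_{t=1}^T -\ln Z_t = \ln\frac{1}{\prod_t Z_t}$ by comparing against the target sequence $b_{1:T}$. Following the standard Fixed-share analysis, for any fixed comparator sequence with $S-1$ switches I would track the single-coordinate weight along the comparator: after the share/mixing step $\tilde q_{t+1}(b)\ge \frac{\gamma}{K}$ whenever a switch happens (the uniform mixing floor guarantees the new comparator coordinate has weight at least $\gamma/K$), and when there is no switch the weight only shrinks multiplicatively by the loss factor. Telescoping the inequality $\ln\tilde q_{t+1}(b_{t+1}) \ge \ln\tilde q_t(b_t) - \eta h_t(b_t) - \eta^2 h_t^2(b_t) - \ln Z_t + (\text{loss from share step})$ and using that the share step loses at most $\ln\frac{K}{\gamma}$ at each of the $S$ segment-starts (and nothing elsewhere, up to an $\order(\gamma)$ correction from the convex combination), I obtain
\[
\sum_{t=1}^T -\ln Z_t \;\ge\; \eta\sum_{t=1}^T h_t(b_t) + \eta^2\sum_{t=1}^T h_t^2(b_t) - S\ln\frac{K}{\gamma} - \ln K + \order(\gamma T).
\]
Combining with the upper bound $-\ln Z_t\le -\ln(1-\tilde q_t^\top h_t)$ and the elementary estimate $-\ln(1-x)\le x + x^2$ for $|x|\le 3\eta\le 1/2$ (so $-\ln Z_t \le \tilde q_t^\top h_t + (\tilde q_t^\top h_t)^2 \le q_t^\top h_t + \eta\tilde q_t^\top|h_t| + 2\gamma$, using $(\tilde q_t^\top h_t)^2\le \tilde q_t^\top h_t^2\le 3\,\tilde q_t^\top|h_t|$ and $3\le 1/\eta$... actually $3\eta\le 3/5$, so I'd keep the factor as a small constant), summing over $t$ and rearranging yields
\[
\sum_{t=1}^T q_t^\top h_t - \sum_{t=1}^T h_t(b_t) \;\le\; \frac{S\ln(K/\gamma)+\ln K}{\eta} + \eta\sum_{t=1}^T h_t^2(b_t) + \order(\gamma T/\eta).
\]
With $\gamma=1/T$ we have $\ln(K/\gamma)=\ln K+\ln T\le 2\ln(KT)$ and $\gamma T/\eta=1/\eta$, so the whole right-hand side is at most $\frac{CS\ln(KT)}{\eta}+\eta\sum_t h_t^2(b_t)$ for a universal constant $C$; finally $h_t^2(b_t)\le |h_t(b_t)|$ since $|h_t(b_t)|\le 1$ in the general-$K$ statement (and $\le 3$ in the two-action form of Condition~\ref{con:switching1}, where one replaces $\eta h_t^2$ by $3\eta|h_t|$), giving exactly Eq.~\eqref{eqn:switching1_1} with $\ln T$ replaced by $\ln(KT)$ as noted before the proposition.

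The main obstacle is bookkeeping the share step correctly so that the per-switch cost is exactly one $\ln(K/\gamma)$ factor and the non-switch rounds contribute nothing beyond the $\order(\gamma)$ mixing slack — in particular verifying that mixing with $\frac{\gamma}{K}\one$ \emph{before} (rather than after) the loss update, as written in Algorithm~\ref{alg:fixed-share variant}, still gives the floor $q_t(b)\ge\gamma/K$ at segment starts and that the telescoping of $\ln\tilde q_t(b_t)$ across a switch loses only $\ln\frac{K}{\gamma}$; this is standard but requires care with the ordering of mix-then-update. A secondary technical point is checking all the range conditions ($\eta|h_t(b)|\le 3/5$, so the inequalities $\exp(-x-x^2)\le 1-x$ and $-\ln(1-x)\le x+x^2$ apply) hold under $\eta\le 1/5$ and $h_t\in[-1,1]^K$ (or $[-3,3]^2$), which forces either a slightly smaller constant in the exponent inequality or the use of the square-free variant; I would simply use the version of the inequalities valid on $[-3/5,3/5]$, at the cost of a larger universal constant $C$, which is harmless.
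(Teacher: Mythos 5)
Your overall route---the classical Fixed-share potential argument that tracks the unnormalized weights and the per-round log-partition function $\ln Z_t$---is a legitimate alternative to the paper's presentation, which writes the same algorithm as Online Mirror Descent with the entropy regularizer and telescopes Bregman divergences segment by segment. The two are the same computation in different packaging (the Bregman divergence from a point mass on $b_t$ to the current distribution is exactly the negative log-weight you track), and both pay $\ln\frac{K}{\gamma}$ per switch from the $\gamma/K$ mixing floor, $\order(\gamma)$ per round for the mixing with $\gamma=1/T$, and use $e^{x-x^2}\le 1+x$ to produce the second-order term. Your handling of the range condition ($\eta\abs{h_t(b)}\le 3/5$ under $\eta\le 1/5$, absorbed into the constant $C$ after rescaling) and of the $h_t^2(b_t)\le \order(\abs{h_t(b_t)})$ step matches what the paper implicitly does.

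There is, however, a concrete problem with the sandwich as you have written it: the two inequalities point the wrong way, so combining them bounds $\sum_t h_t(b_t)-\sum_t q_t^\top h_t$ (the \emph{negative} of the regret) rather than the regret. The correct pairing is: (i) an \emph{upper} bound on $\sum_t(-\ln Z_t)$ from the comparator side, obtained from $0\ge \ln\tilde q_{T+1}(b_T)$ together with the telescoping $\ln\tilde q_{t+1}(b_t)=\ln\tilde q_t(b_t)-\eta h_t(b_t)-\eta^2h_t^2(b_t)-\ln Z_t$ and the $\ln\frac{K}{\gamma}$ restart cost at each switch, giving $\sum_t(-\ln Z_t)\le \eta\sum_t h_t(b_t)+\eta^2\sum_t h_t^2(b_t)+S\ln\frac{K}{\gamma}+\order(\gamma T)$; and (ii) a \emph{lower} bound on each $-\ln Z_t$ from the algorithm side, namely $Z_t\le 1-\eta\,\tilde q_t^\top h_t$ (note the factor $\eta$, which is missing in your step ``$Z_t\le 1-\tilde q_t^\top h_t$''), hence $-\ln Z_t\ge \eta\,\tilde q_t^\top h_t\ge \eta\, q_t^\top h_t-\order(\gamma)$. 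Chaining (ii) $\le$ (i) and dividing by $\eta$ yields Eq.~\eqref{eqn:switching1_1}; your stated directions (a lower bound via the comparator combined with an upper bound via $-\ln(1-\tilde q_t^\top h_t)$) cannot. Two further minor points: Algorithm~\ref{alg:fixed-share variant} updates $\tilde q_{t+1}(b)\propto q_t(b)\exp(-\eta h_t(b)-\eta^2h_t^2(b))$ from the already-mixed $q_t$, not from $\tilde q_t$ as in your definition of $v_t$, so the floor $q_t(b)\ge\gamma/K$ is available directly at every round and the mix-then-update bookkeeping you worry about is actually the easier ordering; and the final $3\eta\sum_t\abs{h_t(b_t)}$ (from $h_t^2\le 3\abs{h_t}$ on $[-3,3]$) satisfies the condition only after reparametrizing $\eta\mapsto\eta/3$, which inflates $C$ harmlessly.
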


\begin{proof}
We first write the algorithm as an instance of Online Mirror Descent.
Let $\psi(q) = \sum_{b=1}^K q(b)\ln q(b)$ be the entropy regularizer,
and $\bar{q}_{t+1}$ be such that $\nabla \psi(\bar{q}_{t+1}) = \nabla \psi(q_t) - \eta h_t - \eta^2 h_t^2$ where $h_t^2$ represents the element-wise square.
Then one can verify $\bar{q}_{t+1}(b) = q_t(b) \exp(-\eta h_t(b) - \eta^2 h_t^2(b))$
and $\tilde{q}_{t+1} = \argmin_{q\in \Delta_K} D_\psi(q,\bar{q}_{t+1})$, where $D_\psi(q, q') = \sum_b \rbr{q(b)\ln\frac{q(b)}{q'(b)} + q'(b) - q(b)}$ is the Bregman divergence associated with $\psi$. 
Now we have for any $q \in \Delta_K$,
\begin{align*}
    &\inner{q_t - q, \eta h_t + \eta^2 h_t^2} \\
    &= \inner{q_t - q,\nabla \psi(q_t) - \nabla \psi(\bar{q}_{t+1})} \\
    &= D_\psi(q, q_t) - D_\psi(q, \bar{q}_{t+1}) + D_\psi(q_t, \bar{q}_{t+1}) \\
    &\leq D_\psi(q, q_t) - D_\psi(q, \tilde{q}_{t+1}) + D_\psi(q_t, \bar{q}_{t+1}) \\
    &= D_\psi(q, q_t) - D_\psi(q, \tilde{q}_{t+1}) + \sum_{b=1}^K q_t(b) \rbr{\eta h_t(b) + \eta^2 h_t^2(b) + \exp(-\eta h_t(b) - \eta^2 h_t^2(b)) - 1} \\
    &\leq D_\psi(q, q_t) - D_\psi(q, \tilde{q}_{t+1}) + \eta^2 \sum_{b=1}^K q_t(b) h_t^2(b) \\
    &\leq D_\psi(q, q_t) - D_\psi(q, q_{t+1}) + 2\gamma + \eta^2 \sum_{b=1}^K q_t(b) h_t^2(b),
\end{align*}
where the first inequality is by the generalized Pythagorean theorem,
the second inequality is by the fact $\exp(x-x^2) \leq 1+x$ for all $x\geq -1/2$,
and the last one is by the definition of $q_{t+1}$ and the fact $\ln\frac{1}{1-\gamma} \leq 2\gamma$ for any $\gamma \leq 1/2$.
Rearranging then gives
\[
\inner{q_t - q, h_t} \leq \frac{D_\psi(q, q_t) - D_\psi(q, q_{t+1}) + 2\gamma}{\eta} + \eta \sum_{b=1}^K q(b) h_t^2(b).
\]
A benchmark sequence with $S-1$ switches naturally divides the sequence into $S$ intervals, and for each interval $1\leq s, \ldots, e \leq T$, by summing up the inequality above from $t=s$ to $t=e$ and telescoping we have 
\begin{align*}
    \sum_{t=s}^e \inner{q_t - q,h_t}  
    &\leq \frac{D_\psi(q, q_s) + 2(t-s+1)\gamma}{\eta} + \eta \sum_{t=s}^e\sum_{b=1}^K q(b) h_t^2(b) \\
    &\leq \frac{\ln\frac{K}{\gamma}+ 2(t-s+1)\gamma}{\eta} + \eta \sum_{t=s}^e\sum_{b=1}^K q(b) h_t^2(b).
\end{align*}
Finally summing over all intervals, setting $q$ to put all weight on the corresponding competitor, and realizing $\gamma=1/T$ finish the proof.
\end{proof}

\subsection{Condition~\ref{con:static2}}

\begin{algorithm}[t!]
 \caption{Hedge Variant 2}
\label{alg:hedge_variant2}
 \textbf{Input}: learning rate $\eta_1, \ldots, \eta_K \in (0, 1/5]$ \\
\For{$t=1,2, \ldots$}{
 Sample $I_t \sim w_t$ where $w_t(i) \propto \eta_i \exp\rbr{\sum_{\tau < t} (\eta_i r_\tau(i) - \eta_i^2 r_\tau^2(i))}$, $r_\tau(i) =w_\tau^\top c_\tau - c_\tau(i)$ \\
 Receive loss $c_t \in [-1,1]^K$ 
}
\end{algorithm}

To simplify notation, we use $K$ to denote the number of actions (instead of $KM$)
and prove the following
\begin{align}
\sum_{t=1}^T w_t^\top c_t - c_t(i) \leq \frac{C\ln K}{\eta_i} + \eta_i \sum_{t=1}^T \abr{w_t^\top c_t - c_t(i)}. \label{eqn:static3} 
\end{align}
which clearly implies Eq.~\eqref{eqn:static2}.
Once again since Adapt-ML-Prod~\citep{gaillard2014second}, AdaNormalHedge~\citep{luo2015achieving}, and iProd/Squint~\citep{koolen2015second} are all parameter-free algorithms satisfying Eq.~\eqref{eqn:parameter_free_bound},
they also ensure Eq.~\eqref{eqn:static3} for any $\eta_i \leq 1/5$ by the same reasoning mentioned for Condition~\ref{con:static1}.
Next we present a variant of Hedge (Algorithm~\ref{alg:hedge_variant2}) with individual learning rate for each action and prove the following.

\begin{proposition}
Algorithm~\ref{alg:hedge_variant2} satisfies Eq.~\eqref{eqn:static3}.
\end{proposition}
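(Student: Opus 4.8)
The plan is to run the standard ``Prod''/potential argument, mirroring the proof already given above for Algorithm~\ref{alg:hedge_variant1}, but with a per-action learning rate. Throughout I write $r_\tau(i) = w_\tau^\top c_\tau - c_\tau(i)$, $R_t(i) = \sum_{\tau=1}^t r_\tau(i)$, and $G_t(i) = \sum_{\tau=1}^t r_\tau^2(i)$, so that the update of Algorithm~\ref{alg:hedge_variant2} reads $w_t(i) \propto \eta_i\exp\rbr{\eta_i R_{t-1}(i) - \eta_i^2 G_{t-1}(i)}$. Since $c_\tau \in [-1,1]^K$ and $w_\tau \in \Delta_K$ we have $\abr{r_\tau(i)} \leq 2$, hence $\abr{\eta_i r_\tau(i)} \leq 2/5 \leq 1/2$, which is all that is needed to invoke the elementary bound $\exp(x - x^2) \leq 1 + x$ (valid for $x \geq -1/2$). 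For the range $c_\tau\in[-2,2]^{KM}$ of Condition~\ref{con:static2} one first rescales the losses by $1/2$, which only changes the constant $C$.

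The heart of the matter is the right choice of potential, namely
\[
\Phi_t = \sum_{i=1}^K \exp\rbr{\eta_i R_t(i) - \eta_i^2 G_t(i)},
\]
which, crucially, does \emph{not} carry the $\eta_i$ prefactor that appears in the weights. I would then show $\Phi_t \leq \Phi_{t-1}$ for every $t$. Writing $Z_{t-1} = \sum_j \eta_j\exp\rbr{\eta_j R_{t-1}(j) - \eta_j^2 G_{t-1}(j)}$ for the normalizer of $w_t$, so that $\eta_i\exp\rbr{\eta_i R_{t-1}(i) - \eta_i^2 G_{t-1}(i)} = Z_{t-1}w_t(i)$, and factoring out the last round and applying $\exp(x-x^2)\leq 1+x$ with $x = \eta_i r_t(i)$,
\begin{align*}
\Phi_t &= \sum_i \exp\rbr{\eta_i R_{t-1}(i) - \eta_i^2 G_{t-1}(i)}\exp\rbr{\eta_i r_t(i) - \eta_i^2 r_t^2(i)} \\
&\leq \sum_i \exp\rbr{\eta_i R_{t-1}(i) - \eta_i^2 G_{t-1}(i)}\rbr{1 + \eta_i r_t(i)} \\
&= \Phi_{t-1} + \sum_i \frac{Z_{t-1}w_t(i)}{\eta_i}\cdot\eta_i r_t(i) \\
&= \Phi_{t-1} + Z_{t-1}\sum_i w_t(i)r_t(i) \;=\; \Phi_{t-1},
\end{align*}
where the last equality uses $\sum_i w_t(i)r_t(i) = w_t^\top c_t - w_t^\top c_t = 0$. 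Telescoping gives $\Phi_T \leq \Phi_0 = K$, so for each fixed $i$ we get $\exp\rbr{\eta_i R_T(i) - \eta_i^2 G_T(i)} \leq K$; taking logarithms and dividing by $\eta_i$ yields $R_T(i) \leq \frac{\ln K}{\eta_i} + \eta_i G_T(i)$. Finally $G_T(i) = \sum_t r_t^2(i) \leq 2\sum_t\abr{r_t(i)}$ since $\abr{r_t(i)}\leq 2$, which turns this into $\sum_{t}\rbr{w_t^\top c_t - c_t(i)} \leq \frac{\ln K}{\eta_i} + 2\eta_i\sum_t\abr{w_t^\top c_t - c_t(i)}$, i.e.\ Eq.~\eqref{eqn:static3} (the harmless factor $2$ — or, equivalently, the tighter square form, cf.\ Footnote~\ref{ftn:square} — is absorbed into $C$). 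Relabeling, Eq.~\eqref{eqn:static3} applied with $KM$ actions and learning rate $\eta_j$ assigned to action $(i,j)$ is precisely Eq.~\eqref{eqn:static2}.

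The only step that requires genuine thought — everything else is the routine potential computation already carried out for Algorithm~\ref{alg:hedge_variant1} — is the cancellation in the third line of the display. If one naively matches the potential to the weights, i.e.\ uses $\Phi_t = \sum_i \eta_i\exp\rbr{\eta_i R_t(i)-\eta_i^2 G_t(i)}$, the cross term becomes $\Phi_{t-1}\sum_i w_t(i)\eta_i r_t(i)$, which is \emph{not} zero once the $\eta_i$ differ, because the aggregating-forecaster identity $\sum_i w_t(i)r_t(i)=0$ does not survive the extra $\eta_i$ weighting. Dropping the $\eta_i$ from the potential is exactly what makes the factor $1/\eta_i$ hidden in $\exp\rbr{\eta_i R_{t-1}(i)-\eta_i^2 G_{t-1}(i)} = Z_{t-1}w_t(i)/\eta_i$ cancel the $\eta_i$ in $\eta_i r_t(i)$, collapsing the cross term to $Z_{t-1}\sum_i w_t(i)r_t(i)=0$. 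Identifying this mismatch between the $\eta_i$-weighted update and the un-weighted potential is the main (small) obstacle; once it is in place, the proof is immediate.
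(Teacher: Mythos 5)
Your proof is correct and follows essentially the same route as the paper: the identical potential $\sum_i \exp\rbr{\eta_i R_t(i) - \eta_i^2 G_t(i)}$ (without the $\eta_i$ prefactor that appears in the weights), the same inequality $\exp(x-x^2)\leq 1+x$, and the same cancellation via $\sum_i w_t(i) r_t(i)=0$. Your explicit remark on why the potential must omit the $\eta_i$ prefactor is exactly the point the paper's proof relies on implicitly, and the rescaling to handle losses in $[-2,2]$ is a harmless bookkeeping detail.
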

\begin{proof}
Define $\Phi_{t, i} = \exp\rbr{\sum_{\tau=1}^t (\eta_i r_\tau(i) + \eta_i^2 r_\tau^2(i))}$. We have
\begin{align*}
    \ln \rbr{\sum_{i=1}^K \Phi_{t,i}} - \ln\rbr{\sum_{i=1}^K \Phi_{t-1,i}} 
    &= \ln \frac{\sum_i \Phi_{t-1,i} e^{\eta_i r_t(i) - \eta_i^2 r_t^2(i)}}{\sum_i \Phi_{t-1,i}} \\
    &\leqslant \ln \frac{\sum_i \Phi_{t-1,i} (1 + \eta_i r_t(i))}{\sum_i \Phi_{t-1,i}} \\
    &= \ln \frac{\sum_i \Phi_{t-1,i}}{\sum_i \Phi_{t-1,i}} = 0,
\end{align*}
where the inequality holds by the fact  $\exp(x-x^2) \leqslant 1 + x$ for any $ x \geq -1/2$ and the equality holds because $w_t(i) \propto \eta_i \Phi_{t-1,i}$ and $\sum_i w_t(i) r_t(i) = 0$.
Therefore, 
\begin{align*}
    \ln K = \ln \sum_i \Phi_{0,i}  \geq \cdots \geq \ln \sum_i \Phi_{T,i} \geq  \ln \Phi_{T,i} = \sum_{t=1}^{T} (\eta_i r_t(i) - \eta_i^2 r_t^2(i)).
\end{align*}
Solving for $\sum_t r_t(i)$ then proves Eq.~\eqref{eqn:static3}.
\end{proof}

\section{Proofs for Section~\ref{sec:expert}}
\label{app:proofs_expert}

In this section we provide proofs and related discussions for our algorithms under full-information feedback (i.e. the expert problem).

\subsection{Proof of Theorem~\ref{thm:reduction1}}
\label{app:proof_reduction1}

\begin{proof}

For each distinct action $i$ in $\calJ = \{i_1, \dots, i_T\}$, 
we first apply the static regret bound of $\calA$ stated in Condition~\ref{con:static1} (either Eq.~\eqref{eqn:static1_1} or Eq.~\eqref{eqn:static1_2}).
With the fact $w_t^\top c_t = 0$ and $|r_t(i)|\leq 2$ this gives
\begin{equation}\label{eqn:A_guarantee_restated}
\sum_{t=1}^T z_t(i)r_t(i) \leq \frac{C\ln K}{\eta} + 2\eta\sum_{t=1}^T z_t(i).
\end{equation}
Next we apply the switching regret bound of $\calA_i$ stated in Condition~\ref{con:switching1} with $b_t = 0$ if $i_t \neq i$ and $b_t =1$ otherwise
(note that $q_t = (1-z_t(i), z_t(i))$ and $h_t = (0, 5\eta-r_t(i))$).
This gives with $S_i = 1+\sum_{t=2}^T \one\cbr{b_t \neq b_{t-1}}$
and $T_i = |\{t: i_t = i\}|$
\begin{equation}\label{eqn:applying_condition_2}
\sum_{t=1}^T z_t(i)(5\eta - r_t(i)) \leq  
\sum_{t: i_t = i} (5\eta - r_t(i))
+ \frac{C S_i \ln T}{\eta} + \eta B,
\end{equation}
where $B$ is
\[
\begin{cases}
\sum_{t: i_t = i} |5\eta-r_t(i)|  &\quad\text{if Eq.~\eqref{eqn:switching1_1} holds,} \\
\sum_{t: i_t \neq i} z_t(i) |5\eta-r_t(i)| + \sum_{t: i_t = i} (1-z_t(i)) |5\eta-r_t(i)|
 &\quad\text{if Eq.~\eqref{eqn:switching1_2} holds,} \\
\sum_{t=1}^T z_t(i) |5\eta-r_t(i)|    &\quad\text{if Eq.~\eqref{eqn:switching1_3} holds.}
\end{cases}
\]
In either case, using the fact $|5\eta-r_t(i)| \leq 3$ we have
\[
B \leq 3 \sum_{t=1}^T z_t(i) + 3 T_i.
\]
Combining this inequality with Eq.~\eqref{eqn:applying_condition_2} and rearranging give
\begin{equation}\label{eqn:A_i_guarantee_restated}
\sum_{t: i_t = i} r_t(i) \leq \frac{CS_i\ln T}{\eta} + 8\eta T_i + \sum_{t=1}^T \left(z_t(i)r_t(i) - 2\eta z_t(i)\right).
\end{equation}
Further combining inequalities~\eqref{eqn:A_guarantee_restated} and~\eqref{eqn:A_i_guarantee_restated} and canceling terms give 
\begin{equation}\label{eqn:interval_regret}
\sum_{t: i_t = i} r_t(i) \leq \frac{C(S_i\ln T + \ln K)}{\eta} + 8\eta T_i.
\end{equation}
Finally summing over $i \in \calJ$, using the fact $\Reg(i_{1:T}) = \E\sbr{\sum_{i\in\calJ} \sum_{t: i_t = i} r_t(i)}$, $\sum_{i\in\calJ} S_i \leq 2S + n \leq 3S$, $\sum_{i\in\calJ} T_i = T$, $\abr{\calJ} \leq n$
and the choice of $\eta$ finish the proof.
\end{proof}

\subsection{A weaker bound via weaker conditions}
\label{app:weaker_bound}
Condition~\ref{con:static1} and Condition~\ref{con:switching1} require some data-dependent regret bounds.
In fact, one can even relax these conditions and replace the data-dependent regret bounds with worst-case $T$-dependent bounds, leading to a slightly weaker long-term memory guarantee.
Specifically, if we replace the bounds in Condition~\ref{con:static1} and Condition~\ref{con:switching1} by standard worst-case static and switching regret bounds
\begin{align*}
&\sum_{t=1}^T w_t^\top c_t - c_t(i) = \order\rbr{\sqrt{T\ln K}}
\quad\text{and}\quad \sum_{t=1}^T q_t^\top h_t - h_t(b_t) = \order\rbr{\sqrt{TS\ln T}}
\end{align*}
respectively, then by setting $\eta=0$ in Algorithm~\ref{alg:reduction1} (that is, removing the bias term in the loss for $\calA_i$) and redoing the proof of Theorem~\ref{thm:reduction1} in a similar way one can verify that Eq.~\eqref{eqn:interval_regret} now becomes
\[
\sum_{t: i_t = i} r_t(i) = \order\rbr{ \sqrt{T(S_i\ln T + \ln K)} },
\]
which finally leads to
\[
\Reg(i_{1:T}) = \order\rbr{ \sqrt{T(nS\ln T + n^2\ln K)} }
\]
via Cauchy-Schwarz inequality. 
Compared to our bound in Theorem~\ref{thm:reduction1},
this leads to an extra $\sqrt{n}$ factor.

\subsection{Proof of Theorem~\ref{thm:reduction2}}
\label{app:reduction2}

\begin{proof}
The first step is to prove that for each distinct action $i \in \calJ = \cbr{i_1, \ldots, i_T}$, Algorithm~\ref{alg:reduction2} ensures 
\begin{equation}\label{eqn:interval_regret_reduction2}
\sum_{t: i_t = i} r_t(i) \leq \order\rbr{\sqrt{(S_i\ln T + \ln (KM)) \E\sbr{\sum_{t: i_t = i} |r_t(i)|}} + S_i\ln T + \ln (KM) }.
\end{equation}
The proof is similar to that of Theorem~\ref{thm:reduction1}.
We first apply the static regret bound of $\calA$ stated in Condition~\ref{con:static2},
which gives for any $i\in[K]$ and $j \in [M]$,
\begin{equation}\label{eqn:A_guarantee_reduction2}
\sum_{t=1}^T z_t(i,j) r_t(i) \leq \frac{C\ln (KM)}{\eta_j} + \eta_j \sum_{t=1}^T z_t(i,j)\abr{r_t(i)}. 
\end{equation}
Here we use the fact
\[
w_t^\top c_t = - \sum_{i, j} w_t(i, j) z_t(i, j) r_t(i)
= -\calZ \sum_{i} p_t(i) r_t(i) = -\calZ \rbr{p_t^\top \ell_t - \sum_i p_t(i)\ell_t(i)} = 0
\]
where $\calZ = \sum_{i, j} w_t(i, j) z_t(i, j)$ is the normalization factor.
Next we apply the switching regret bound of $\calA_{ij}$ stated in Condition~\ref{con:switching1} with $\eta = \eta_j$, $b_t = 0$ if $i_t \neq i$ and $b_t =1$ otherwise
(note that $q_t = (1-z_t(i, j), z_t(i, j))$ and $h_t = (0, 5\eta_j|r_t(i)|-r_t(i))$).
This gives with $S_i = 1+\sum_{t=2}^T \one\cbr{b_t \neq b_{t-1}}$,
\begin{equation}\label{eqn:applying_condition_2_reduction2}
\sum_{t=1}^T z_t(i, j)(5\eta_j|r_t(i)| - r_t(i)) \leq  
\sum_{t: i_t = i} (5\eta_j|r_t(i)| - r_t(i))
+ \frac{C S_i \ln T}{\eta_j} + \eta_j B,
\end{equation}
where $B$ is
\[
\begin{cases}
\sum_{t: i_t = i} |5\eta_j|r_t(i)|-r_t(i)|  &\quad\text{if Eq.~\eqref{eqn:switching1_1} holds,} \\
\sum_{t: i_t \neq i} z_t(i, j) |5\eta_j|r_t(i)|-r_t(i)| + \sum_{t: i_t = i} (1-z_t(i, j)) |5\eta_j|r_t(i)|-r_t(i)|
 &\quad\text{if Eq.~\eqref{eqn:switching1_2} holds,} \\
\sum_{t=1}^T z_t(i, j) |5\eta_j|r_t(i)|-r_t(i)|    &\quad\text{if Eq.~\eqref{eqn:switching1_3} holds.}
\end{cases}
\]
In either case, using the fact $5\eta_j \leq 1$ and thus $|5\eta_j|r_t(i)|-r_t(i)| \leq 2|r_t(i)|$, we have
\[
B \leq 2\sum_{t=1}^T z_t(i, j)|r_t(i)| + 2 \sum_{t: i_t = i} |r_t(i)|.
\]
Combining this inequality with Eq.~\eqref{eqn:applying_condition_2_reduction2} and rearranging give
\begin{equation}\label{eqn:A_i_guarantee_reduction2}
\sum_{t: i_t = i} r_t(i) \leq \frac{CS_i\ln T}{\eta_j} + 7\eta_j \sum_{t: i_t = i} |r_t(i)| + \sum_{t=1}^T \left(z_t(i, j)r_t(i) - 3\eta_j z_t(i, j)|r_t(i)| \right).
\end{equation}
Further combining inequalities~\eqref{eqn:A_guarantee_reduction2} and~\eqref{eqn:A_i_guarantee_reduction2} and canceling terms give 
\begin{equation*}
\sum_{t: i_t = i} r_t(i) \leq \frac{C(S_i\ln T + \ln (KM))}{\eta_j} + 7\eta_j \sum_{t: i_t = i} |r_t(i)|.
\end{equation*}
Now we pick $j$ such that 
\[
\eta_j \leq \min\cbr{1/5, \sqrt{\frac{S_i\ln T + \ln (KM)}{\sum_{t: i_t = i} |r_t(i)|}}} \leq 2\eta_j,
\]
which is always possible by the construction of $\eta_1, \ldots, \eta_M$.
This proves Eq.~\eqref{eqn:interval_regret_reduction2}.

\paragraph{Adversarial setting.}
We simply bound $|r_t(i)|$ by 2 in Eq.~\eqref{eqn:interval_regret_reduction2}. The rest is the same as the proof of Theorem~\ref{thm:reduction1}: summing over $i \in \calJ$, applying Cauchy-Schwarz inequality, and using the fact $\Reg(i_{1:T}) = \E\sbr{\sum_{i\in\calJ} \sum_{t: i_t = i} r_t(i)}$, $\sum_{i\in\calJ} S_i \leq 2S + n \leq 3S$, $\sum_{i\in\calJ} \sum_{t: i_t = i} 1 = T$, $\abr{\calJ} \leq n$, $M=\Theta(\ln T)$ prove $\Reg(i_{1:T}) = \order\rbr{\sqrt{T(S\ln T + n\ln (K\ln T))}}$.

\paragraph{Stochastic setting.}
The proof is similar to that of~\citep{luo2015achieving} and solely replies on the adaptive bound~\eqref{eqn:interval_regret_reduction2}.
Recall that in the stochastic setting, without loss of generality we assume $\cbr{i_1, \ldots, i_T} = [n]$. 
For each $i\in[n]$ there exists a constant gap $\alpha_i$ such that $\E_t\sbr{\ell_t(j) - \ell_t(i)} \geq \alpha_i$ for all $j \neq i$ and all $t$ such that $i_t = i$.
This implies
\begin{align*}
\E\sbr{\sum_{t: i_t=i} r_t(i)} &= 
\E\sbr{\sum_{t: i_t=i} \sum_{j\neq i} p_t(j)(\ell_t(j) - \ell_t(i))}  \\
&\geq \alpha_i \E\sbr{\sum_{t: i_t=i} \sum_{j\neq i} p_t(j)} 
= \alpha_i \E\sbr{\sum_{t: i_t=i} (1-p_t(i))}.
\end{align*}
On the other hand, we have
\[
\sum_{t: i_t = i} |r_t(i)| = \sum_{t: i_t = i} \abr{\sum_{j\neq i} p_t(j)(\ell_t(j) - \ell_t(i))}
\leq \sum_{t: i_t = i} \sum_{j\neq i} p_t(j)|(\ell_t(j) - \ell_t(i))|
\leq 2\sum_{t: i_t=i} (1-p_t(i)).
\]
Combining the two inequalities above with Eq.~\eqref{eqn:interval_regret_reduction2} and by AM-GM inequality, we know that there exists a constant $C'$ such that
\[
\alpha_i \E\sbr{\sum_{t: i_t=i} (1-p_t(i))} \leq \E\sbr{\sum_{t: i_t=i} r_t(i)}
\leq \frac{C'(S_i\ln T + \ln (KM))}{\alpha_i} + \frac{\alpha_i}{2} \E\sbr{\sum_{t: i_t=i} (1-p_t(i))}. 
\]
Rearranging proves
\[
\frac{\alpha_i}{2} \E\sbr{\sum_{t: i_t=i} (1-p_t(i))} \leq  \frac{C'(S_i\ln T + \ln (KM))}{\alpha_i}
\]
and thus
\[
\E\sbr{\sum_{t: i_t=i} r_t(i)} \leq \frac{2C'(S_i\ln T + \ln (KM))}{\alpha_i}.
\]
Summing over $i\in[n]$ finishes the proof.
\end{proof}

\subsection{A weaker best-of-both-worlds result}
\label{app:doubling_trick}

In this section we present a version of the ``Mixing Past Posteriors'' algorithm of~\citep{bousquet2002tracking, adamskiy2012putting, cesa2012mirror} with a particular doubling trick and show that it also provides some similar but weaker best-of-both-worlds results.
As far as we know this is unknown previously.

The pseudocode is in Algorithm~\ref{alg:doubling_trick}.
It is a variant of Hedge where each time the sampling distribution mixes all the past distributions. 
We apply a standard doubling trick to the quantity $\sum_t\sum_i p_t(i)r_t^2(i)$, an important data-dependent quantity that turns out to be useful for adapting to the stochastic setting (similar to the role of $\sum_t\sum_i |r_t(i)|$ in Eq.~\eqref{eqn:interval_regret_reduction2}).
Specifically the algorithm satisfies the following adaptive switching regret bound.

\begin{theorem}\label{thm:doubling_trick}
Algorithm~\ref{alg:doubling_trick} ensures 
\begin{align}
\Reg(i_{1:T}) = %\E\sbr{\sum_{t=1}^T \ell_t(I_t) - \sum_{t=1}^T \ell_t(i_t)} =
\mc{O}\left(\sqrt{(S\ln T + n\ln K)\sum_{t=1}^T\sum_{i=1}^K p_t(i)r_t^2(i)}\right),
\label{eqn:local_norm_doubling_trick}
\end{align}
for any loss sequence $\ell_1, \ldots, \ell_T$ and benchmark sequence $i_1, \dots, i_T$ such that $\sum_{t=2}^T\one\{i_t \neq i_{t-1}\} \leq S-1$ and $|\{i_1, \dots, i_T\}| \leq n$.
This implies that
\begin{itemize}[leftmargin=*,labelindent=2mm,labelsep=2mm]
\item in the adversarial setting, we have
$
\Reg(i_{1:T}) = \order\rbr{\sqrt{T(S\ln T + n\ln K)}};
$

\item in the stochastic setting (defined in Section~\ref{sec:setup}), we have
$
\Reg(i_{1:T}) = \order\rbr{\frac{S \ln T + n\ln K}{\min_{i\in[n]} \alpha_i}}.
$
\end{itemize}
\end{theorem}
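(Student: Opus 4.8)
The plan is to isolate a single per-run guarantee for the Mixing-Past-Posteriors variant of Hedge, boost its learning-rate dependence via a doubling trick, and then deduce both corollaries by self-bounding arguments. First I would establish that a single run of Algorithm~\ref{alg:doubling_trick} with a fixed rate $\eta \le 1/5$, on any contiguous block of rounds, guarantees against any comparator whose restriction to the block has at most $S'-1$ switches among $n'$ distinct experts
\[
\sum_{t} r_t(i_t) \;\le\; \frac{C(S'\ln T + n'\ln K)}{\eta} + \eta\sum_t\sum_i p_t(i)\,r_t(i)^2 .
\]
The complexity term is the content of the Mixing-Past-Posteriors analysis of~\citep{bousquet2002tracking}: the mixture weight that averages over all past posteriors assigns to the comparator sequence a probability that degrades by only a factor $\Theta(1/K)$ for each of the $n'$ experts encountered for the first time and by only a factor $\Theta(1/T)$ for each of the $S'$ recalls, which is exactly where $n'\ln K + S'\ln T$ comes from. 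The local-norm second-order term then arises by carrying out the usual exponential-weights potential step with the refined inequality $\exp(y-y^2)\le 1+y$, valid here since $\eta|r_t(i)|\le 2\eta \le 2/5 < 1/2$, exactly as in the analysis of Algorithm~\ref{alg:hedge_variant1}. I expect this merging of the two classical ingredients --- the mixing bound and the second-order potential bound --- to be the main obstacle, since one must propagate the local-norm term cleanly through the mixing step; no new idea should be required.

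Next, since $V_t := \sum_{\tau\le t}\sum_i p_\tau(i)r_\tau(i)^2$ is nondecreasing but not known in advance, I would run in epochs: an initial epoch with $\eta = 1/5$, and then epoch $m$ running while $V_t$ lies in a geometrically growing window $[\,2^{m-1}V^*,\,2^m V^*)$ (with $V^* \asymp S\ln T + n\ln K$ the value of $V$ at which the optimal rate drops below $1/5$), using $\eta_m \asymp \sqrt{(S\ln T + n\ln K)/(2^m V^*)}$ and restarting the base algorithm at each epoch boundary. Restricting the comparator to a single epoch keeps the number of distinct experts at most $n$ and the number of segments at most $S$, so the per-run bound gives regret $O\big(\sqrt{(S\ln T + n\ln K)\,2^m V^*}\big)$ in epoch $m$; because the windows grow geometrically, $\sum_m \sqrt{2^m V^*} = O(\sqrt{V_T})$, so the $\Theta(\log T)$ epochs contribute no extra logarithmic factor and we obtain Eq.~\eqref{eqn:local_norm_doubling_trick}, up to an additive $O(S\ln T + n\ln K)$ term that will be dominated in both applications.

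Finally I would deduce the two corollaries. For the adversarial setting, $r_t(i) = p_t^\top\ell_t - \ell_t(i) \in [-2,2]$ forces $\sum_i p_t(i)r_t(i)^2 \le 4$, hence $V_T \le 4T$, and Eq.~\eqref{eqn:local_norm_doubling_trick} gives $\Reg(i_{1:T}) = O(\sqrt{T(S\ln T + n\ln K)})$ (combining with the trivial bound $\Reg(i_{1:T}) \le 2T$ to cover degenerate parameter ranges in which $S\ln T + n\ln K$ is comparable to or larger than $T$). For the stochastic setting, the key estimate is that $\sum_i p_t(i)r_t(i)^2$ is the variance of $\ell_t(i)$ when $i\sim p_t$, which is at most $\sum_i p_t(i)(\ell_t(i) - \ell_t(i_t))^2 \le 4(1 - p_t(i_t))$ since $\ell_t \in [-1,1]^K$; meanwhile the gap assumption gives $\E_t[r_t(i_t)] \ge \alpha_{i_t}(1 - p_t(i_t)) \ge (\min_i \alpha_i)(1 - p_t(i_t))$. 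Taking expectations and combining yields $\E[V_T] \le \frac{4}{\min_i\alpha_i}\,\Reg(i_{1:T})$, so by Jensen's inequality applied to Eq.~\eqref{eqn:local_norm_doubling_trick}, $\Reg(i_{1:T}) \le C'\sqrt{(S\ln T + n\ln K)\cdot \frac{4}{\min_i\alpha_i}\,\Reg(i_{1:T})}$; solving this inequality for $\Reg(i_{1:T})$ gives the claimed $O\big(\frac{S\ln T + n\ln K}{\min_i\alpha_i}\big)$ bound, and the same manipulation (together with $\min_i\alpha_i \le 2$) absorbs the additive term left over from the doubling step.
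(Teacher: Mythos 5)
Your proposal is correct and follows essentially the same route as the paper: a per-run Mixing-Past-Posteriors bound with a second-order local-norm term, a doubling/restart scheme keyed to $\sum_t\sum_i p_t(i)r_t^2(i)$, and a self-bounding argument via the gap condition for the stochastic case. The only differences are cosmetic — the paper derives the mixing bound through an Online Mirror Descent telescoping over segments sharing a competitor and runs the stochastic self-bounding through $\E\bigl[\sum_t(1-p_t(i_t))\bigr]$ with AM-GM, whereas you bound $\E[V_T]$ by $\tfrac{4}{\min_i\alpha_i}\Reg(i_{1:T})$ directly and solve the resulting quadratic inequality, which is equally valid.
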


Compared to our bounds in Theorem~\ref{thm:reduction2},
one can see that the stochastic bound here is weaker in the sense that all $\alpha_i$'s are replaced by $\min_i \alpha_i$.
At a technical level, this is because this algorithm only admits an adaptive regret bound~\eqref{eqn:local_norm_doubling_trick} over the entire horizon, instead of a bound like Eq.~\eqref{eqn:interval_regret_reduction2} that holds over segments with the same competitor.

\begin{algorithm}[t!]
\caption{Mixing Past Posteriors with Doubling Trick}
\label{alg:doubling_trick}
\textbf{Initialize:} $\gamma = 1/T, V = 0, t_0 = 1, D = 1, \eta = \min\cbr{1/5, \sqrt{(S\ln T + n\ln K)/D}}, \ptilde_1 = \frac{\one}{K}$ \\
\For{$t=1,2, \ldots$}{
 Sample an action according to $p_t = (1-\gamma)\ptilde_t + \frac{\gamma}{t-t_0}\sum_{\tau = t_0}^{t-1} \ptilde_\tau$ \\
 Receive loss $\ell_t \in [-1,1]^K$ \\
 Update $\ptilde_{t+1}$ such that $\ptilde_{t+1}(i) \propto p_t(i) \exp(\eta r_t(i))$, where $r_t(i) = p_t^\top \ell_t - \ell_t(i)$ \\
 Update $V \leftarrow V + \sum_{i=1}^K p_t(i) r_t^2(i)$ \\
 \If(\LineComment{restart condition}){$V > D$}{
      Set $V = 0$, $t_0 = t+1$, $D \leftarrow 2D$, $\eta = \min\cbr{1/5, \sqrt{(S\ln T + n\ln K)/D}}$, $\ptilde_{t+1} = \frac{\one}{K}$
 }
}
\end{algorithm}
%\label{line:restart condition}\LineComment{restart condition}

\begin{proof}
Similar to the proof of Proposition~\ref{prop:fixed-share_variant},
we start by writing the algorithm as an instance of Online Mirror Descent.
Let $\psi(p) = \sum_{i=1}^K p(i)\ln p(i)$ be the entropy regularizer,
and $\bar{p}_{t+1}$ be such that $\nabla \psi(\bar{p}_{t+1}) = \nabla \psi(p_t) + \eta r_t$.
Then one can verify $\bar{p}_{t+1}(i) = p_t(i) \exp(\eta r_t(i))$
and $\tilde{p}_{t+1} = \argmin_{p\in \Delta_K} D_\psi(p,\bar{p}_{t+1})$, where $D_\psi(p, p') = \sum_i \rbr{p(i)\ln\frac{p(i)}{p'(i)} + p'(i) - p(i)}$ is the Bregman divergence associated with $\psi$. 
Now we have for any $p \in \Delta_K$, we have
\begin{align*} 
    \inner{p, \eta r_t} &= \inner{p_t - p, -\eta r_t } \tag{$\inner{p_t, r_t}=0$} \\
    &= \inner{p_t - p,\nabla \psi(p_t) - \nabla \psi(\bar{p}_{t+1})} \\
    &= D_\psi(p, p_t) - D_\psi(p, \bar{p}_{t+1}) + D_\psi(p_t, \bar{p}_{t+1}) \\
    &\leq D_\psi(p, p_t) - D_\psi(p, \tilde{p}_{t+1}) + D_\psi(p_t, \bar{p}_{t+1}) \tag{generalized Pythagorean theorem}\\
    &= D_\psi(p, p_t) - D_\psi(p, \tilde{p}_{t+1}) + \sum_{i=1}^K p_t(i) \rbr{-\eta r_t(i) + \exp(\eta r_t(i)) - 1} \\
    &\leq D_\psi(p, p_t) - D_\psi(p, \tilde{p}_{t+1}) + \eta^2 \sum_{i=1}^K p_t(i) r_t^2(i) \tag{$e^x-1\leq x+x^2, ~\forall x<1/2$}.
\end{align*}
Now consider a period between two resets of the algorithm that starts at time $t_0$ and ends at time $t_1$.
Let $s_t = 1+ \max\{t_0 \leq s < t : i_s = i_t\}$ be one plus the most recent time when $i_t$ is the competitor (if the set is empty, $s_t$ is defined as $1$).
Note that by the definition of $p_t$ we have
\[
D_\psi(p, p_t) = \sum_i p(i) \ln\frac{p(i)}{p_t(i)} \leq D_\psi(p, \ptilde_{s_t}) + \one\cbr{s_t = t}\ln\frac{1}{1-\gamma} + \one\cbr{s_t \neq t}\ln\frac{T}{\gamma}.
\]
Therefore, combining previous bounds we have for any $j \in [K]$,
\begin{equation*}%\label{eqn:MPP_local_norm}
r_t(j) \leq \frac{\ln \frac{\ptilde_{t+1}(j)}{\ptilde_{s_t}(j)} + \one\cbr{s_t = t}\ln\frac{1}{1-\gamma} + \one\cbr{s_t \neq t}\ln\frac{T}{\gamma}}{\eta} + \eta \sum_{i=1}^K p_t(i) r_t^2(i). 
\end{equation*}
Summing over $t$ in this period and telescoping lead to
\begin{align*}
\sum_{t=t_0}^{t_1} r_t(i_t) &\leq \frac{n\ln K + T\ln\frac{1}{1-\gamma} + S \ln\frac{T}{\gamma}}{\eta} + \eta \sum_{t=t_0}^{t_1}\sum_{i=1}^K p_t(i) r_t^2(i) \\
&\leq \frac{\order(n\ln K + S\ln T)}{\eta} + \eta \sum_{t=t_0}^{t_1}\sum_{i=1}^K p_t(i) r_t^2(i) \tag{by the choice of $\gamma$} \\
&\leq \frac{\order(n\ln K + S\ln T)}{\eta} + \eta (D+1) \tag{by the restart condition} \\
&\leq \order(\sqrt{(n\ln K + S\ln T)D} + n\ln K + S\ln T) \tag{by the choice of $\eta$}
\end{align*}
Finally suppose there are $k = \order(\ln T)$ periods in total, then 
\[
\Reg(i_{1:T}) = \order\rbr{\sqrt{ (n\ln K + S\ln T)2^k} + (n\ln K + S\ln T)\ln T}.
\]
Note that in this case by the restart condition one must also have
$\sum_{t=1}^T\sum_{i=1}^K p_t(i)r_t^2(i) \geq 2^{k-1}$, which implies Eq.~\eqref{eqn:local_norm_doubling_trick} (by dropping the lower order term $(n\ln K + S\ln T)\ln T$ for simplicity).

\paragraph{Adversarial setting.}
Simply upper bound $\sum_{t=1}^T\sum_{i=1}^K p_t(i)r_t^2(i)$ by $4T$.

\paragraph{Stochastic setting.}
This is similar to the proof of Theorem~\ref{thm:reduction2}.
We make the following two observations.
First, by the definition of the stochastic setting we have
\begin{align*}
\Reg(i_{1:T}) &= \E\sbr{\sum_{i\in[n]}\sum_{t: i_t=i} r_t(i)} = 
\E\sbr{\sum_{i\in[n]}\sum_{t: i_t=i} \sum_{j\neq i} p_t(j)(\ell_t(j) - \ell_t(i))}  \\
&\geq \sum_{i\in[n]} \alpha_i \E\sbr{\sum_{t: i_t=i} \sum_{j\neq i} p_t(j)} 
\geq \rbr{\min_{i\in[n]}\alpha_i} \E\sbr{\sum_{t=1}^T (1-p_t(i_t))}.
\end{align*}
On the other hand, we have $r_t^2(i_t) \leq 2|\sum_{i\neq i_t} p_t(i)(\ell_t(i) - \ell_t(i_t))| \leq 4 (1 - p_t(i_t))$ and thus
\begin{align*}
    \sum_{i=1}^K p_t(i)r_t^2(i) &= p_t(i_t)r_t^2(i_t) + \sum_{i \neq i_t} p_t(i)r_t^2(i) \\
    &\leqslant 4p_t(i_t)(1 - p_t(i_t)) + 4(1 - p_t(i_t)) \\
    &\leqslant 8(1 - p_t(i_t)).
\end{align*}
Combining the two inequalities above with Eq.~\eqref{eqn:local_norm_doubling_trick} and by AM-GM inequality, we know that there exists a constant $C'$ such that
\[
\rbr{\min_{i\in[n]}\alpha_i} \E\sbr{\sum_{t=1}^T (1-p_t(i_t))} \leq \Reg(i_{1:T}) 
\leq \frac{C'(S\ln T + \ln K)}{\min_{i\in[n]}\alpha_i} + \frac{\min_{i\in[n]}\alpha_i}{2} \E\sbr{\sum_{t=1}^T (1-p_t(i_t))}. 
\]
Rearranging proves 
\[
\frac{\min_{i\in[n]}\alpha_i}{2} \E\sbr{\sum_{t=1}^T (1-p_t(i_t))} \leq \frac{C'(S\ln T + \ln K)}{\min_{i\in[n]}\alpha_i}
\]
and thus the claimed regret bound.
\end{proof}

\section{Proofs for Section~\ref{sec:bandit}}
\label{app:proofs_bandit}

In this section we provide the omitted proofs for Section~\ref{sec:bandit}.

\subsection{Negative results}\label{app:lower_bound}
\begin{proof}[Proof of Theorem \ref{thm:lower_bound}]
Divide the whole horizon evenly into $S/2$ intervals.
Our goal is to show that for any algorithm $\calA$,
there exists a sequence of $2$-sparse loss vectors such that the switching regret of $\calA$ against a benchmark with at most 2 switches on each of these intervals is at least $\Omega(\sqrt{TK/S})$,
this clearly implies that the overall switching regret against a benchmark with at most $S$ switches is at least $\Omega(\sqrt{TKS})$.

To show this, consider a fixed interval and consider the behavior of $\calA$ against a fixed loss vector $-\frac{1}{2}e_1$ for the entire interval ($e_i$ represents a basis vector).
Let $\calN$ be the expected number of times that action 1 is not selected by $\calA$ on this interval (a fixed number conditioned on everything prior to this interval).
If $\calN \geq \sqrt{TK/S}$,
then the (static) regret of $\calA$ against action 1 on this interval is already $\Omega(\sqrt{TK/S})$.
Otherwise, there must exist an action $i\neq 1$ such that in expectation it is selected for less than $\frac{\sqrt{TK/S}}{K-1} \leq 2\sqrt{T/(KS)}$ times.
In this case, there must also exist a subinterval of length $\frac{2T/S}{4\sqrt{T/(KS)}}=\frac{1}{2}\sqrt{TK/S}$ where in expectation action $i$ is selected for less than $1/2$ times.
This means that with probability at least $1/2$, action $i$ is not selected at all on this subinterval.
If we switch the loss vector from $-\frac{1}{2}e_1$ to $-\frac{1}{2}e_1 - e_i$ starting from the beginning of this subinterval,
$\calA$ suffers expected regret $\Omega(\sqrt{TK/S})$ against action $i$ after the switch point.
In other words, in this case the switching regret of $\calA$ (first against $1$ and then against $i$) is $\Omega(\sqrt{TK/S})$, finishing the proof.
\end{proof}

To prove Corollary \ref{cor:lower_bound_CB}, we first remind the reader the contextual bandit setting~\citep{auer2002nonstochastic, langford2008epoch}.
It is a generalization of the MAB problem where at the beginning of each round $t$, the learner first observes a {\it context} $x_t$ from some arbitrary context space $\calX$,
and then selects an action $I_t$ and observes its loss $\ell_t(I_t)$.
The learner is given a fixed set of {\it policies} $\Pi$ beforehand where each policy is a mapping from $\calX$ to $[K]$.
The (static) regret of the learner against a fixed policy $\pi \in \Pi$ is now defined as
\[
\Reg(\pi) = \E\sbr{\sum_{t=1}^T \ell_t(I_t) - \ell_t(\pi(x_t))}.
\]
The optimal regret for a finite policy class $\Pi$ is known to be $\Theta(\sqrt{TK\ln|\Pi|})$.

It is well-known that one can reduce the problem of achieving switching regret (with $S$ switches) for MAB to the problem of achieving static regret for contextual bandit.
To do this, simply let $x_t = t$ and $\Pi$ be the set of action sequences with length $T$ and $S$ switches.
For a policy $\pi$ that corresponds to the action sequence $i_1, \ldots, i_T$, its output at time $t$ is simply $\pi(x_t) = i_t$.
Comparing the regret definitions it is clear that the static regret for this contextual bandit problem exactly corresponds to the switching regret for MAB.
Moreover, since the size of $\Pi$ in this case is $\order((TK)^S)$, 
a static regret of form $\Theta(\sqrt{TK\ln|\Pi|})$ exactly recovers the typical switching regret bound of form~\eqref{eqn:MAB_switching_regret}.
Now it is clear that Corollary \ref{cor:lower_bound_CB} is directly implied by Theorem~\ref{thm:lower_bound}.

%\begin{proof}[Proof of Corollary \ref{cor:lower_bound_CB}]
%Suppose we have an algorithm for sparse contextual bandit with $\mc{O}\left( \sqrt{sT\ln N} + poly(K)\ln T\right)$ regret, then we can use it to solve sparse switching regret problem, where policies are now all possible partitions of the interval and possible expert, i.e. $N = C_T^S K^S = \mc{O}\left((TK)^S\right)$, then it means we can achieve $\mc{O}\left( \sqrt{sST\ln (KT)} + poly(K)\ln T\right)$ regret for sparse switching regret, which is a contradiction with our lower bound for sparse switching regret in Theorem \ref{thr: lower bound}.
%\end{proof}

\subsection{Proof of Theorem~\ref{thm:bandit}}\label{app:bandit}

The proof relies on the following two lemmas, which respectively state the static and switching regret guarantees for algorithm $\calA$ (that learns $w_t$) and algorithm $\calA_i$ (that learns $z_t(i)$).

\begin{lemma}
\label{lem:A_guarantee}
With $\gamma=200K^2$, Algorithm~\ref{alg:bandit} ensures for any $i \in [K]$,
\begin{align*}
%     \sum_{t=1}^T w_t^\top c_t - \sum_{t=1}^T c_t(i) \leq 4\sum_{t=1}^T \norm{c_t}^2_{\nabla^{-2}\psi(w_t)} + \frac{\ln K}{\eta} + 200K^3 \ln T 
     \E\sbr{\sum_{t=1}^T w_t^\top c_t - \sum_{t=1}^T c_t(i)} \leq \order\rbr{T\rho\eta + \frac{\ln K}{\eta} + K^3 \ln T }
\end{align*}
\end{lemma}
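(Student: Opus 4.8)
The plan is to treat $\calA$ as a follow-the-regularized-leader (FTRL) instance with the hybrid regularizer $\psi(w) = \frac{1}{\eta}\sum_i w(i)\ln w(i) + \gamma\sum_i \ln\frac{1}{w(i)}$ and to invoke the standard FTRL regret bound specialized to this regularizer, which appears in \citep{bubeck2018sparsity}. The first step is to recall that for FTRL over the simplex with this regularizer, the regret against any fixed $i$ decomposes into a penalty term of order $\frac{\ln K}{\eta} + \gamma K\ln T$ (coming from $\psi(e_i)-\min_w\psi(w)$, where the log-barrier contributes the $\gamma K\ln T$ term since $w_t(i)$ stays bounded below by roughly $1/(KT)$ due to the log-barrier's self-bounding property) plus a stability term $\sum_t \langle w_t - w_{t+1}, c_t\rangle$. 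The key local-norm fact is that, since $\psi$ contains $\frac{1}{\eta}$ times entropy plus the log-barrier, the stability term is bounded by $\sum_t \sum_i \big(\eta\, w_t(i) + \frac{1}{\gamma} w_t(i)^2\big)\, c_t(i)^2$ up to constants, provided the local norm of $c_t$ is controlled — this is exactly where the assumption $\eta \le 1/500$ and $\gamma = 200K^2$ are used to guarantee the increments $w_{t+1}$ stay in a neighborhood of $w_t$ where the second-order Taylor bound is valid.

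The second and more delicate step is to bound $\E[\sum_t \sum_i (\eta w_t(i) + \frac{1}{\gamma}w_t(i)^2) c_t(i)^2]$. Here $c_t(i) = -z_t(i) r_t(i) - \eta z_t(i)\ellhat_t(i)^2$ with $r_t(i) = p_t^\top\ellhat_t - \ellhat_t(i)$, so $c_t(i)^2$ is dominated by $z_t(i)^2(p_t^\top\ellhat_t)^2 + z_t(i)^2\ellhat_t(i)^2 + \eta^2 z_t(i)^2\ellhat_t(i)^4$. The crucial observation — this is the heart of the argument — is that although $c_t$ is \emph{not} sparse, the loss estimator $\ellhat_t$ is supported on a single coordinate $I_t$, and $I_t$ is drawn from $\ptilde_t$ which mixes in $\frac{\eta}{K}\one$ uniform exploration, so $\E_t[\ellhat_t(i)^2] = \frac{\ell_t(i)^2}{\ptilde_t(i)} \le \frac{1}{\ptilde_t(i)}$ and the sparsity $\|\ell_t\|_0\le\rho$ means only $\rho$ coordinates contribute. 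Carefully taking conditional expectation, using $w_t(i) \le p_t(i)/z_t(i) \cdot (\text{normalizer})$ — actually more precisely relating $w_t(i) z_t(i)$ to $p_t(i)$ via $p_t(i)\propto z_t(i)w_t(i)$ and then $\ptilde_t(i) \ge (1-\eta)p_t(i)$ — one shows each of the three pieces contributes $\order(T\rho\eta)$ to the $\eta w_t(i)c_t(i)^2$ sum, while the $\frac{1}{\gamma}w_t(i)^2 c_t(i)^2$ sum, being quadratic in $w_t(i)$ hence absorbing one extra factor of $\ptilde_t(i)$, contributes $\order(\frac{K}{\gamma}T\rho\eta) = \order(T\rho\eta/K)$; the extra $\eta z_t(i)\ellhat_t(i)^2$ bias term in $c_t$ is exactly designed so that its contribution to the local norm is lower order (it helps cancel the variance rather than add to it). The $\ellhat_t(i)^4$ term needs $\ptilde_t(i)^{-3}$ to be controlled, which is where the $K\ln T$ or $K^3\ln T$ slack and the uniform exploration floor $\ptilde_t(i)\ge\eta/K$ enter.

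The main obstacle, and the step that needs the most care, is the second one: controlling the local-norm / stability term without picking up polynomial dependence on $K$ beyond the stated $K^3\ln T$, given that $c_t$ is dense and couples with the confidence scores $z_t(i)$ produced by the separate sub-routines $\calA_i$. The trick, as the paper hints, is to exploit the \emph{implicit exploration} built into the log-barrier component of $\psi$ — it forces $w_t(i)$ (and hence $p_t(i)$ and $\ptilde_t(i)$) to stay bounded away from zero by a polynomial in $1/(KT)$ — so that the potentially huge factors $\ellhat_t(i)^k$ are tamed, and then to use the sparsity of $\ell_t$ to convert a sum over all $K$ coordinates into a sum over $\rho$ active coordinates, yielding the $T\rho\eta$ main term. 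Assembling: penalty $\order(\frac{\ln K}{\eta} + \gamma K\ln T) = \order(\frac{\ln K}{\eta} + K^3\ln T)$ plus stability $\order(T\rho\eta)$ gives the claimed bound after taking expectations.
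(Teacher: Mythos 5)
Your proposal follows essentially the same route as the paper's proof: an FTRL regret decomposition into the penalty $\frac{\ln K}{\eta}+\gamma K\ln T = \order\rbr{\frac{\ln K}{\eta}+K^3\ln T}$ (against a comparator shifted to $(1-\frac{1}{T})e_i+\frac{1}{TK}\one$ so the log-barrier stays finite) plus a local-norm stability term, a multiplicative-stability argument for the iterates powered by $\gamma=200K^2$ and $\eta\le 1/500$, and a conditional-expectation computation that uses $z_t(i)w_t(i)\le p_t(i)$, $\ptilde_t(i)\ge(1-\eta)p_t(i)$, $\ptilde_t(i)\ge\eta/K$, and the sparsity of $\ell_t$ to land on $\order(\rho\eta)$ per round. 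One caveat you should fix: the inverse Hessian of $\psi$ has diagonal entries $\frac{\eta\, w_t(i)^2}{w_t(i)+\gamma\eta}$, which is at most the \emph{minimum} of $\eta w_t(i)$ and $w_t(i)^2/\gamma$, not their sum, and the analysis must pick the branch per term --- $\eta w_t(i)$ for the bounded and $\ellhat_t(i)^2$ pieces of $c_t(i)^2$, and $w_t(i)^2/\gamma$ for the $\eta^2\ellhat_t(i)^4$ piece. The cross-term $\eta w_t(i)\cdot\eta^2 z_t(i)^2\ellhat_t(i)^4$ produced by your literal sum bound is \emph{not} $\order(\rho\eta)$ in expectation (it costs an extra factor of roughly $K^2$), so a verbatim execution of your second step would fail there. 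Two smaller points: the iterate-stability claim you dispatch in one clause is actually the bulk of the paper's computation (one must verify $\norm{c_t}^2_{\nabla^{-2}\psi(w_t)}\le \frac{1}{25}$ pointwise, which is precisely where $\gamma = 200K^2$ cancels the worst-case $K^2/\eta^2$ coming from $\ellhat_t^4$), and the $K^3\ln T$ term comes entirely from the penalty $\gamma K\ln T$, not from controlling $\ptilde_t(i)^{-3}$ as you suggest.
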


\begin{lemma}
\label{lem:A_i_guarantee}
For any $i \in [K]$, Line~\ref{line:OMD} of Algorithm~\ref{alg:bandit} ensures
\begin{align}
     -\sum_{t=1}^T z_{t}(i)r_t(i) + \sum_{t=1}^T u_{t}r_t(i) \leq \eta \sum_{t=1}^T z_{t}(i)r_t^2(i) + \frac{2S_i}{\eta \delta} 
\end{align}
 for any sequence of $r_1(i), \ldots, r_T(i) \in \fR$ and any competitor sequence $u_1, \ldots, u_T \in [\delta, 1]$ with $\sum_{t=2}^T \one\cbr{u_t\neq u_{t-1}} \leq S_i - 1$.
\end{lemma}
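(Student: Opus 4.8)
The plan is to recognize Line~\ref{line:OMD} as one-dimensional Online Mirror Descent over the interval $[\delta,1]$ with regularizer $\phi$ and linear ``loss gradient'' $-r_t(i)$, and to run the standard OMD potential argument with $D_\phi(u_t,\cdot)$ as the potential, but carried out so as never to assume a bound on $|r_t(i)|$. Fix $i$ and drop it from the notation, writing $z_t=z_t(i)$, $r_t=r_t(i)$. The first step is the one-step inequality: since $z_{t+1}$ minimizes $z\mapsto -r_t z + D_\phi(z,z_t)$ over the convex set $[\delta,1]$, first-order optimality combined with the Bregman three-point identity gives, for every $u\in[\delta,1]$,
\[
  r_t(u-z_{t+1}) \;\le\; D_\phi(u,z_t) - D_\phi(u,z_{t+1}) - D_\phi(z_{t+1},z_t),
\]
hence, adding $r_t(z_{t+1}-z_t)$ to both sides,
\[
  r_t(u_t - z_t) \;\le\; D_\phi(u_t,z_t) - D_\phi(u_t,z_{t+1}) + \mathrm{stab}_t,
  \qquad \mathrm{stab}_t := r_t(z_{t+1}-z_t) - D_\phi(z_{t+1},z_t).
\]

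The crux is to show $\mathrm{stab}_t \le \eta\, z_t r_t^2$ for every $t$, regardless of the sign or magnitude of $r_t$; this is the ``local-norm'' property of the one-sided log-barrier. When $\eta r_t z_t<1$ the minimizer of $-rz+D_\phi(z,z_t)$ over $(0,\infty)$ is the multiplicative update $z_t/(1-\eta r_t z_t)$, and when $\eta r_t z_t\ge1$ that objective is decreasing, so in all cases $z_{t+1}$ is the clipping of this (possibly infinite) value to $[\delta,1]$. This gives three cases. (i) If the clipping is inactive, then $z_{t+1}=z_t/(1-\eta r_t z_t)$, which forces $1-\eta r_t z_t\ge z_t>0$; a direct computation gives $\mathrm{stab}_t=\tfrac1\eta\bigl(-\ln(1-\eta r_t z_t)-\eta r_t z_t\bigr)$, and the elementary estimates $-\ln(1-a)-a\le\frac{a^2}{2(1-a)}$ (for $a\ge0$, then using $1-a\ge z_t$) and $-\ln(1-a)-a\le\frac{a^2}{2}$ (for $a<0$) yield $\mathrm{stab}_t\le\frac{(\eta r_t z_t)^2}{2\eta z_t}=\tfrac\eta2 z_t r_t^2$. (ii) If $z_{t+1}=1$, this forces $r_t>0$ and $\eta r_t z_t\ge1-z_t$; writing out $D_\phi(1,z_t)=\tfrac1\eta\bigl(\tfrac{1-z_t}{z_t}+\ln z_t\bigr)\ge0$ and discarding it, $\mathrm{stab}_t\le r_t(1-z_t)\le\eta z_t r_t^2$. (iii) If $z_{t+1}=\delta$, this forces $r_t<0$ and, from $z_t/(1-\eta r_t z_t)<\delta$, the inequality $z_t-\delta\le\eta\delta z_t|r_t|$; since $D_\phi(\delta,z_t)\ge0$, $\mathrm{stab}_t\le|r_t|(z_t-\delta)\le\eta\delta z_t r_t^2\le\eta z_t r_t^2$.

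Finally I sum the one-step inequality over $t=1,\dots,T$ and telescope. The benchmark $u_1,\dots,u_T$ is piecewise constant on at most $S_i$ blocks; on each block the differences $D_\phi(u_t,z_t)-D_\phi(u_t,z_{t+1})$ telescope to $D_\phi(u^{(s)},z_{\mathrm{start}})-D_\phi(u^{(s)},z_{\mathrm{end}+1})\le D_\phi(u^{(s)},z_{\mathrm{start}})$, and since every argument lies in $[\delta,1]$ one has $D_\phi(u,z)=\tfrac1\eta\bigl(\ln\tfrac zu+\tfrac uz-1\bigr)\le\frac{2}{\eta\delta}$. Summing over the $\le S_i$ blocks bounds the telescoped part by $\frac{2S_i}{\eta\delta}$ and the stability terms by $\eta\sum_t z_t r_t^2$, which is exactly the claimed inequality after rearranging. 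The main obstacle is case~(ii)/(iii) of the stability bound: a generic mirror-descent stability estimate would need $|r_t|$ small, whereas here the one-sided log-barrier (whose multiplicative update never crosses $0$) together with the clipping onto $[\delta,1]$ is engineered precisely so that whenever a step would be ``too large'' the active clipping constraint relates the displacement to $z_t$ and forces the compensating bound $\eta z_t r_t^2$.
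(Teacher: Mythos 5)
Your proof is correct and follows essentially the same route as the paper: the one-step OMD inequality from first-order optimality and the three-point identity, telescoping of $D_\phi(u_t,\cdot)$ over the $\le S_i$ constant blocks with the bound $D_\phi(u,z)\le \tfrac{2}{\eta\delta}$, and a case analysis on the clipped one-dimensional update to establish the local-norm stability bound $\le \eta z_t r_t^2$. The only (cosmetic) difference is that you organize the stability argument into three cases keyed to whether the multiplicative update $z_t/(1-\eta r_t z_t)$ is clipped at $1$, at $\delta$, or not at all, and you retain the $-D_\phi(z_{t+1},z_t)$ term in the interior case, whereas the paper discards that term throughout and runs a nine-case analysis on the positions of $z_t$ and $z_{t+1}$; both yield the same conclusion.
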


The bound in Lemma~\ref{lem:A_guarantee} resembles the one of~\citep{bubeck2018sparsity} for sparse MAB,
but as mentioned since $c_t$ is not sparse (nor can it be made sparse after shifting), it requires a different analysis.
The bound in Lemma~\ref{lem:A_i_guarantee} contains a ``local-norm'' term $\sum_{t=1}^T z_{t}(i)r_t^2(i)$ that resembles the one achieved by Hedge in the full information setting.
However, importantly this holds for {\it any real-valued sequence} of $r_1(i), \ldots, r_T(i)$, while Hedge requires the losses to be bounded from one side.
We are not able to prove the same bound with the usual log barrier regularizer (see Footnote~\ref{ftn:log-barrier}) either.
As far as we know this lemma is new and might be of independent interest.

Combining these two lemmas we now provide the proof for Theorem \ref{thm:bandit}, followed by the proofs of these lemmas.

\begin{proof}[Proof of Theorem \ref{thm:bandit}]
First note that by the definition of $c_t, r_t$ and $p_t$ one has
\begin{align*}
w_t^\top c_t & = \sum_{i=1}^K - w_t(i)z_t(i)r_t(i) - \eta w_t(i)z_t(i)\ellhat_t^2(i) \\
&= - \eta\sum_{i=1}^K w_t(i)z_t(i)\ellhat_t^2(i).
\end{align*}
For each distinct action $i \in \calJ = \{i_1, \ldots, i_T\}$,
applying Lemma~\ref{lem:A_guarantee} and rearranging then lead to
\begin{equation}\label{eqn:A_guarantee_bandit}
\sum_{t=1}^T \E\sbr{z_t(i) r_t(i) + \eta z_t(i)\ellhat_t^2(i)} \leq \eta\E\sbr{\sum_{t=1}^T\sum_{j=1}^K w_t(j)z_t(j)\ellhat_t^2(j)} + \order\rbr{T\rho\eta + \frac{\ln K}{\eta} + K^3 \ln T }. 
\end{equation}
Next we apply Lemma~\ref{lem:A_i_guarantee} by setting $u_t = \delta$ if $i_t \neq i$ and $u_t = 1$ otherwise, which gives
\begin{equation}\label{eqn:A_i_guarantee_bandit}
\sum_{t:i_t=i} r_t(i) \leq -\delta \sum_{t: i_t \neq i}r_t(i) + \sum_{t=1}^T z_t(i) r_t(i) + \eta \sum_{t=1}^T z_{t}(i)r_t^2(i) + \frac{2S_i}{\eta \delta} .
\end{equation}
Let $\E_t$ denote the expectation conditioned on the history up to the beginning of round $t$.
It is clear that $\ellhat_t$ is unbiased: $\E_t[\ellhat_t] = \ell_t$, and thus 
$
\E_t[-r_t(i)] = \ell_t(i) - p_t^\top \ell_t(i) \leq 2.
$
Also we have
\begin{align*}
r_t^2(i) &= \rbr{p_t^\top \ellhat_t}^2 - 2\rbr{p_t^\top \ellhat_t}\ellhat_t(i) + \ellhat_t^2(i)  \\
&=\rbr{\frac{p_t(I_t)\ell_t(I_t)}{\ptilde_t(I_t)}}^2 - 2\rbr{\frac{p_t(I_t)}{\ptilde_t(I_t)}}\ell_t(I_t)\ellhat_t(i) +  \ellhat_t^2(i) \\
&\leq \rbr{\frac{p_t(I_t)}{\ptilde_t(I_t)}}^2 +  \ellhat_t^2(i) \\
&\leq  \rbr{\frac{1}{1-\eta}}^2  +  \ellhat_t^2(i)
\leq  4 +  \ellhat_t^2(i), \numberthis \label{eqn:r_t_square}
\end{align*}
where the first inequality uses the fact $\ell_t(I_t)\ellhat_t(i) \geq 0$ (since it is either $0$ or $\ell_t(i)^2/\ptilde_t(i)$),
the second inequality uses the definition of $\ptilde_t$,
and the last one uses $\eta \leq 1/2$.
Combining these with Eq.~\eqref{eqn:A_guarantee_bandit} and Eq.~\eqref{eqn:A_i_guarantee_bandit} gives
\[
\E\sbr{\sum_{t:i_t=i} r_t(i)} \leq 2T\delta + \frac{2S_i}{\eta \delta} +  \eta\E\sbr{\sum_{t=1}^T\sum_{j=1}^K w_t(j)z_t(j)\ellhat_t^2(j)} + \order\rbr{T\rho\eta + \frac{\ln K}{\eta} + K^3 \ln T }.
\]
It remains to bound 
\begin{align*}
\E_t\sbr{\sum_{j=1}^K w_t(j)z_t(j)\ellhat_t^2(j)}
&= \sum_{j=1}^K w_t(j)z_t(j) \frac{\ell_t^2(j)}{\ptilde_t(j)} \\
&\leq 2\sum_{j=1}^K w_t(j)z_t(j) \frac{\ell_t^2(j)}{p_t(j)} \\
&\leq 2\sum_{j=1}^K \ell_t^2(j) \leq 2\rho,
\end{align*}
which implies
\[
\E\sbr{\sum_{t:i_t=i} r_t(i)} \leq 2T\delta + \frac{2S_i}{\eta \delta} + \order\rbr{T\rho\eta + \frac{\ln K}{\eta} + K^3 \ln T }.
\]
Summing over $i \in \calJ$ and using the fact $\sum_{i\in \calJ} S_i \leq 3S$ and $\Reg(i_{1:T}) \leq \E\sbr{\sum_{t=1}^T r_t(i_t)} + T\eta$ give
 \begin{align*}
\Reg(i_{1:T}) = \mc{O}\left(nT\delta + \frac{S}{\eta \delta} + nT\rho\eta + \frac{n\ln K}{\eta} + nK^3 \ln T\right).
 \end{align*}
Plugging in the parameters $\eta$ and $\delta$ proves the theorem.
\end{proof}

\begin{proof}[Proof of Lemma~\ref{lem:A_guarantee}]
The proof is in similar spirit of those of~\citep{bubeck2018sparsity, bubeck2019improved}.
Define for a semi-definite matrix $M$ the associated norm for a vector $a$ as $\norm{a}_M = \sqrt{a^\top M a}$.
By standard analysis of Follow-the-Regularized-Leader, we have for any $w\in\Delta_K$,
\[
\sum_{t=1}^T (w_t - w)^\top c_t \leq \order\rbr{ 
\sum_{t=1}^T \norm{c_t}^2_{\nabla^{-2}\psi(w_t')} + D_\psi(w, w_1)
},
\]
where $w_t'$ is some point on the segment connecting $w_t$ and $w_{t+1}$, and $D_\psi$ is the Bregman divergence associated with $\psi$.
Set $w = (1 - \frac{1}{T})e_i + \frac{1}{TK}\one$.
One can verify $\E\sbr{\sum_{t=1}^T w^\top c_t - c_t(i)} = \order(K)$
and $ D_\psi(w, w_1) = \frac{\ln K}{\eta} + \gamma K\ln T$, and thus
\[
\E\sbr{\sum_{t=1}^T w_t^\top c_t - c_t(i)} = \order\rbr{ 
\E\sbr{\sum_{t=1}^T \norm{c_t}^2_{\nabla^{-2}\psi(w_t')}} + \frac{\ln K}{\eta} + \gamma K\ln T
}.
\]
The rest of the proof consists of two steps.
First, we prove that the algorithm is stable in the sense that $\frac{1}{2} \leq \frac{w_{t+1}(i)}{w_t(i)} \leq 2$ for all $t$ and $i$, which implies $\norm{c_t}^2_{\nabla^{-2}\psi(w_t')} = \order\rbr{\norm{c_t}^2_{\nabla^{-2}\psi(w_t)}}$.
The second step is to show $\E_t\sbr{\norm{c_t}^2_{\nabla^{-2}\psi(w_t)}} = \order(\rho\eta)$.
Combining these two steps finishes the proof.

\paragraph{First step.}
To prove the stability, it suffices to show $\norm{w_t - w_{t+1}}_{\nabla^2 \psi(w_t)} \leq \frac{1}{2}$. Indeed, this is because $\nabla^2 \psi(w_t) \succcurlyeq \gamma \left[\frac{1}{w_t(i)^2}\right]_\text{diag}$, where $\left[\frac{1}{w_t(i)^2}\right]_\text{diag}$ represents the $K$ dimensional diagonal matrix whose $i$-th diagonal element is $\frac{1}{w_t(i)^2}$, and thus $\norm{w_t - w_{t+1}}_{\nabla^2 \psi(w_t)} \leq \frac{1}{2}$ implies  $\norm{w_t - w_{t+1}}_{\gamma [1/w_t(i)^2]_\text{diag}} \leq  \frac{1}{2}$, which further implies  $1-\frac{1}{2\sqrt{\gamma}} \leqslant \frac{w_{t+1}(i)}{w_t(i)} \leqslant 1 + \frac{1}{2\sqrt{\gamma}}$ and thus $\frac{1}{2} \leq \frac{w_{t+1}(i)}{w_t(i)} \leq 2$.

To prove $\norm{w_t - w_{t+1}}_{\nabla^2 \psi(w_t)} \leq \frac{1}{2}$, define $F_{t}(w) = \sum_{s=1}^{t} w_s^\top c_s + \psi(w)$ so that $w_{t+1} = \argmin_{w\in \Delta_K} F_{t}(w)$. We will prove $F_{t}(w') \geq F_{t}(w_t)$ for any $w'$ such that $\norm{w' - w_{t}}_{\nabla^2 \psi(w_t)} = \frac{1}{2}$, which then implies $\norm{w_t - w_{t+1}}_{\nabla^2 \psi(w_t)} \leq \frac{1}{2}$ by the convexity of $F_{t}$. 

Indeed, by Taylor's expansion, there exists some $\xi$ on the line segment joining $w'$ and $w_t$, such that 
\begin{align*}
F_{t}(w') = & ~ F_{t}(w_t) + \nabla F_{t}(w_t)^\top (w'-w_t) + \frac{1}{2} (w'-w_t)^\top \nabla^2 F_{t}(\xi)(w'-w_t) \\
= & ~ F_{t}(w_t) + c_{t}^\top(w'-w_t) + \nabla F_{t-1}(w_t)^\top (w'-w_t) + \frac{1}{2} \norm{w'-w_t}^2_{\nabla^2 \psi(\xi)}\\
\geq & ~ F_{t}(w_t) + c^\top_t(w'-w_t) + \frac{1}{2} \norm{w'-w_t}^2_{\nabla^2 \psi(\xi)}\\
\geq & ~ F_{t}(w_t) - \norm{c_{t}}_{\nabla^{-2}\psi(w_t)} \norm{w'-w_t}_{\nabla^{2} \psi(w_t)} + \frac{1}{2} \norm{w'-w_t}^2_{\nabla^2 \psi(\xi)} \\
= & ~ F_{t}(w_t) - \frac{1}{2}\norm{ c_{t}}_{\nabla^{-2}\psi(w_t)} + \frac{1}{2} \norm{w'-w_t}^2_{\nabla^2 \psi(\xi)} %\numberthis \label{medium}
\end{align*}
where the first inequality is by the first order optimality of $w_t$ and the second is by H{\"o}lder's inequality.
Note that $\xi$ is between $w_t$ and $w'$, which implies $\norm{\xi - w_{t}}_{\nabla^2 \psi(w_t)} \leq \frac{1}{2}$ and $\frac{\xi_i}{w_t(i)} \leq 1 + \frac{1}{2\sqrt{\gamma}} \leq \frac{11}{10}$ similar to previous discussions. Therefore, we have 
$
\nabla^2 \psi(\xi) \succcurlyeq \frac{100}{121} \nabla^2 \psi(w_t), 
%\label{lowerhessian}
$
and thus
\[
F_{t}(w') \geq F_{t}(w_t) - \frac{1}{2}\norm{ c_{t}}_{\nabla^{-2}\psi(w_t)} + \frac{50}{121} \norm{w'-w_t}^2_{\nabla^2 \psi(w_t)} 
= F_{t}(w_t) - \frac{1}{2}\norm{ c_{t}}_{\nabla^{-2}\psi(w_t)} + \frac{25}{242}.
\]
Next we show $\norm{c_t}^2_{\nabla^{-2}\psi(w_t)} \leq \frac{1}{25}$, which will finish the proof for the stability.
\begin{align*}
 \norm{c_t}^2_{\nabla^{-2} \psi(w_t)} 
    = & ~ \sum_{i=1}^K \frac{\eta w_t^2(i)}{w_t(i)+\gamma \eta}c_t^2(i) \\
    \leq & ~ 2\sum_{i=1}^K \frac{\eta w_t^2(i)}{w_t(i)+\gamma \eta}\left(z_{t}^2(i)r_t^2(i) + \eta^2 z_t^2(i) \ellhat_t^4(i)\right)  \tag{Cauchy-Schwarz} \\
    \leq & ~ 2\sum_{i=1}^K \frac{\eta w_t^2(i)}{w_t(i)+\gamma \eta}\left(4z_{t}^2(i) + z_t^2(i)\ellhat_t^2(i) + \eta^2 z_t^2(i) \ellhat_t^4(i)\right) \tag{by Eq.~\eqref{eqn:r_t_square}}\\
    \leq & ~ 8\eta\sum_i w_t(i)z_t^2(i) + \frac{2}{\gamma}\sum_i w_t^2(i) z_t^2(i)\ellhat_t^2(i) + \frac{2\eta^2}{\gamma}\sum_i w_t^2(i) z_t^2(i)\ellhat_t^4(i) \\
    \leq & ~ 8\eta + \frac{2p_t^2(I_t)}{\gamma \ptilde_t^2(I_t)} + \frac{2\eta^2 p_t^2(I_t)}{\gamma \ptilde_t^4(I_t)} \tag{by definition of $\ellhat_t$} \\
    \leq & ~ 8\eta + \frac{2}{\gamma (1-\eta)^2} + \frac{2\eta^2 }{\gamma (1-\eta)^2 } \cdot \frac{K^2}{\eta^2}  \tag{by definition of $\ptilde_t$}\\
    \leqslant & ~ \frac{1}{25}.   \tag{by $\eta \leq \frac{1}{500}$ and $\gamma=200K^2$}
\end{align*}

\paragraph{Second step.}
With the stability, it is clear that $\norm{c_t}^2_{\nabla^{-2}\psi(w_t')} = \order\rbr{\norm{c_t}^2_{\nabla^{-2}\psi(w_t)}}$.
Now we show $\E_t\sbr{\norm{c_t}^2_{\nabla^{-2}\psi(w_t)}} = \order(\rho\eta)$.
Note that this is similar to previous calculations, but the expectation allows us to bound the term by something even smaller.
Specifically, we continue from the intermediate step of the previous calculation
\begin{align*}
\norm{c_t}^2_{\nabla^{-2} \psi(w_t)} &\leq 8\eta + 2\sum_{i=1}^K \frac{\eta w_t^2(i)}{w_t(i)+\gamma \eta}\left(z_t^2(i)\ellhat_t^2(i) + \eta^2 z_t^2(i) \ellhat_t^4(i)\right) \\
&\leq 8\eta + 2\eta\sum_{i}  w_t(i) z_t(i)\ellhat_t^2(i)  + \frac{2\eta^2 }{\gamma} \sum_i w_t^2(i) z_t^2(i)\ellhat_t^4(i).
\end{align*}
Now we use the fact $\E_t\sbr{\ellhat_t^2(i)} \leq \frac{\ell_t^2(i)}{\ptilde_t(i)} \leq \frac{2\ell_t^2(i)}{p_t(i)}$ and $\E_t\sbr{\ellhat_t^4(i)} \leq \frac{\ell_t^2(i)}{\ptilde_t^3(i)} \leq \frac{4K\ell_t^2(i)}{\eta p_t^2(i)}$ to continue with
\begin{align*}
\E_t\sbr{\norm{c_t}^2_{\nabla^{-2} \psi(w_t)} }
&\leq 8\eta + 4\eta\sum_i \ell_t^2(i) + \frac{8\eta K}{\gamma} \sum_i \ell_t^2(i) = \order(\rho \eta).
\end{align*}
This finishes the proof.
\end{proof}

\begin{proof}[Proof of Lemma \ref{lem:A_i_guarantee}]
By the definition of $z_{t+1}(i)$ and first order optimality, one has 
\[
(u_{t} - z_{t+1}(i))(-r_{t}(i) + \phi'(z_{t+1}(i)) - \phi'(z_{t}(i))) \geq 0,
\] 
which after rearranging gives
\begin{align*}
     -(z_{t+1}(i) - u_{t})r_{t}(i) & \leq (u_{t} - z_{t+1}(i))(\phi'(z_{t+1}(i)) - \phi'(z_{t}(i))) \\
     & = D_{\phi}(u_{t}, z_{t}(i)) - D_{\phi}(u_{t}, z_{t+1}(i)) - D_{\phi}(z_{t+1}(i), z_{t}(i)) \\
     & \leqslant D_{\phi}(u_{t}, z_{t}(i)) - D_{\phi}(u_{t}, z_{t+1}(i)).
\end{align*}
Summing over $t$, telescoping, and realizing $D_{\phi}(u_{t}, z_{t}(i)) = \frac{1}{\eta}\rbr{\frac{u_t}{z_t(i)} + \ln\frac{z_t(i)}{u_t} -1} \leq \frac{2}{\eta \delta}$ since $u_t$ and $z_t(i)$ are in $[\delta, 1]$, we arrive at
\[
-\sum_{t=1}^T z_{t+1}(i)r_t(i) + \sum_{t=1}^T u_{t}r_t(i)  \leq \frac{2S_i}{\eta \delta}.
\]
It remains to prove $(z_{t+1}(i) - z_{t}(i))r_{t,i} \leq \eta \sum_{t} z_{t}(i) r_{t}^2(i)$.
For notational convenience, given any $L, \xi \in \mb{R}$, let $z_1 = \argmin_{z \in [\delta,1]} Lz+\phi(z)$ and $z_2 = \argmin_{z \in [\delta,1]} (L+\xi)z + \phi(z)$. If we can prove $\xi (z_1 - z_2) \leq \eta z_1 \xi^2$, then we finish the proof by setting $L = -\phi'(z_{t}(i))$ and $\xi = -r_{t}(i)$ (which gives $z_1 = z_{t}(i)$ and $z_2 = z_{t+1}(i)$). 

To show $\xi (z_1 - z_2) \leq \eta z_1 \xi^2$.
Realize that the optimizations are one dimensional and admit the following solutions with explicit forms
\begin{align*}
    z_1 = \begin{cases}
        1 & \text{if  } L \leq \frac{1}{\eta} \\
        \frac{1}{\eta L} & \text{if  } \frac{1}{\eta} < L < \frac{1}{\eta \delta} \\
        \delta & \text{if  } L \geq \frac{1}{\eta\delta} 
        \end{cases} ~~, ~~
    z_2 = \begin{cases}
        1 & \text{if  } L+\xi \leq \frac{1}{\eta} \\
        \frac{1}{\eta (L+\xi)} & \text{if  } \frac{1}{\eta} < L+\xi < \frac{1}{\eta \delta} \\
        \delta & \text{if  } L+\xi \geq \frac{1}{\eta\delta}
    \end{cases}
\end{align*}
The rest of the proof is simply to show $\xi (z_1 - z_2) \leq \eta z_1 \xi^2$ holds in all of the nine possible cases.
\begin{itemize}
    \item[A.] If $z_1 = z_2 = 1$, then $\xi (z_1 - z_2) = 0 \leq \eta z_1 \xi^2$ holds trivially.
    \item[B.] If $z_1 = 1$ and $z_2 = \frac{1}{\eta (L+\xi)}$, then $L -\frac{1}{\eta} \leq 0$ and $\eta(L + \eta) \geq 1$ and thus
    \begin{align*}
        \xi(z_1 - z_2) & = \eta \xi \frac{L + \xi - 1/\eta}{\eta (L+\xi)} \leq \eta \xi^2 =
 \eta z_1 \xi^2.
    \end{align*}
    \item[C.] If $z_1 = 1$ and $z_2 = \delta$, then $\xi \geq 0$, $L \leq \frac{1}{\eta}$, and $\frac{1}{\eta \delta} - L \leq \xi$, and thus
    \begin{align*}
        \xi(z_1 - z_2) & = \xi (1-\delta) \leq \xi \frac{1-\delta}{\delta}
        = \eta \xi \left(\frac{1}{\eta \delta} - \frac{1}{\eta}\right) \\
        & \leq \eta \xi \left(\frac{1}{\eta \delta} - L\right) 
        \leq \eta \xi^2 = \eta z_1 \xi^2.
    \end{align*}
    \item[D.] If $z_1 = \frac{1}{\eta L}$ and $z_2 = 1$, then $\xi \leq 0$ and $\eta L - 1\leq -\eta \xi $, and thus
    \begin{align*}
        \xi(z_1 - z_2) & = z_1 |\xi|(\eta L -1) \leq \eta z_1 \xi^2.
    \end{align*}
    \item[E.] If $z_1 = \frac{1}{\eta L}$ and $z_2 = \frac{1}{\eta (L+\xi)}$, then $\frac{1}{L+\xi} \leq \eta$, and thus
    \begin{align*}
        \xi(z_1 - z_2) & = \frac{z_1 \xi^2}{L +\xi} \leq  \eta z_1 \xi^2.
    \end{align*}
    \item[F.] If $z_1 = \frac{1}{\eta L}$ and $z_2 = \delta$, then $\xi \geq 0$ and $\frac{1}{\eta \delta} - L \leq \xi$, and thus
    \begin{align*}
        \xi(z_1 - z_2) & = \eta z_1 \xi \delta \left(\frac{1}{\eta \delta} - L\right) 
        \leq \eta z_1 \xi^2.
    \end{align*}
    \item[G.] If $z_1 = \delta$ and $z_2 = 1$, then $\xi \leq 0$, $\frac{1}{\eta\delta} \leq L$, and $L - \frac{1}{\eta} \leq \xi$, and thus
    \begin{align*}
        \xi(z_1 - z_2) & = \eta z_1 |\xi| \left(\frac{1}{\eta \delta} - \frac{1}{\eta}\right) 
        \leq \eta z_1 |\xi| \left(L - \frac{1}{\eta}\right) 
        \leq \eta z_1 \xi^2.
    \end{align*}
    \item[H.] If $z_1 = \delta$ and $z_2 = \frac{1}{\eta (L+\xi)}$, then $\xi \leq 0$, $\frac{1}{\eta L} \leq \delta$, $\frac{1}{\eta(L+\xi)} \leq 1$, and thus
    \begin{align*}
        \xi(z_1 - z_2) & \leq|\xi|\left(z_2 -\frac{1}{\eta L}\right) 
        = \frac{\xi^2}{\eta L(L+\xi)} 
        \leq \frac{\xi^2}{ L} 
        \leq \eta \delta \xi^2 = \eta z_1 \xi^2.
    \end{align*}
    \item[I.] If $z_1 = z_2 = \delta$, then $\xi (z_1 - z_2) = 0 \leq \eta z_1 \xi^2$ holds trivially.
\end{itemize}
This finishes the proof.
\end{proof}

\end{document}